\newcommand*{\addFileDependency}[1]{
\typeout{(#1)}
%
%
\@addtofilelist{#1}
%
\IfFileExists{#1}{}{\typeout{No file #1.}}
}\makeatother
\newcommand{\dataset}{{\cal D}}
\newcommand{\fracpartial}[2]{\frac{\partial #1}{\partial  #2}}
\begin{document}

\title{Client Selection for Federated Policy Optimization with Environment Heterogeneity}

\author{\name Zhijie Xie \email zhijie.xie@connect.ust.hk \\
        \name Shenghui Song \email eeshsong@ust.hk \\
        \addr Department of Electronic and Computer Engineering\\
  The Hong Kong University of Science and Technology\\
  Clear Water Bay, Kowloon, Hong Kong}

\editor{Waiting for assignment}

\maketitle

\begin{abstract}
The development of Policy Iteration (PI) has inspired many recent algorithms for Reinforcement Learning (RL), including several policy gradient methods that gained both theoretical soundness and empirical success on a variety of tasks. The theory of PI is rich in the context of centralized learning, but its study under the federated setting is still in the infant stage. This paper investigates the federated version of Approximate PI (API) and derives its error bound, taking into account the approximation error introduced by environment heterogeneity. We theoretically prove that a proper client selection scheme can reduce this error bound. Based on the theoretical result, we propose a client selection algorithm to alleviate the additional approximation error caused by environment heterogeneity. Experiment results show that the proposed algorithm outperforms other biased and unbiased client selection methods on the federated mountain car problem, the Mujoco Hopper problem, and the SUMO-based autonomous vehicle training problem by effectively selecting clients with a lower level of heterogeneity from the population distribution.
\end{abstract}

\begin{keywords}
  Federated Reinforcement Learning, Client Selection, Data Heterogeneity, Policy Iteration, Communication Efficiency
\end{keywords}

\section{Introduction\label{sec:Introduction}}

Reinforcement Learning (RL) has been applied to many real-world applications ranging from gaming and robotics to recommender systems \citep{Silver2016,10.1145/3289600.3290999}. However, single-agent RL often suffers from poor sample efficiency, resulting in slow convergence and a high cost of sample collection \citep{JMLR:v21:18-012,fan2021fault,pmlr-v80-papini18a}. Therefore, it is desirable to deploy RL algorithms to large-scale and distributed systems where multiple agents can contribute to the learning collaboratively. However, Multi-Agent RL (MARL) \citep{DBLP:journals/corr/abs-1911-10635} and parallel RL \citep{DBLP:journals/corr/NairSBAFMPSBPLM15,pmlr-v48-mniha16} require intensive communication among agents or data sharing, which may not be practical due to both the communication bottleneck and privacy concerns of many real-world applications. For example, privacy is a major concern in autonomous driving \citep{DBLP:journals/corr/abs-1910-06001,9457207}, and sharing data among vehicles is not allowed. To this end, Federated Learning (FL) \citep{XIANJIA2021135,10025836,LimHK2020}, which enables multiple clients to jointly train a global model without violating user privacy, is an appealing solution for addressing the sample inefficiency and privacy issue of RL in innovative applications such as autonomous driving, IoT network, and healthcare \citep{https://doi.org/10.48550/arxiv.2206.05581}. As a result, Federated Reinforcement Learning (FRL) has attracted much research attention \citep{DBLP:journals/corr/abs-2108-11887}.

Despite the significant progress of empirical works on FRL \citep{DBLP:journals/corr/abs-2108-11887}, the community's understanding of FRL is still in its infancy, especially from the theoretical perspective. For example, the sample efficiency of Policy Gradient (PG) methods is typically low due to the large variance in gradient estimation. This issue could be exacerbated in the context of FL, where clients with heterogeneous environments can generate a diverse range of trajectories. To address this problem, a variance-reduced policy gradient method, namely Federated Policy Gradient with Byzantine Resilience (FedPG-BR), was proposed together with an analysis of the sample efficiency and convergence guarantee \citep{fan2021fault}. While clients are assumed to be homogeneous in FedPG-BR, another line of work, termed FedKL \citep{10038492}, noticed that the environment heterogeneity imposes an extra layer of difficulty in learning and proved that a Kullback-Leibler (KL) penalized local objective can generate a monotonically improving sequence of policies to accelerate convergence. The authors of QAvg \& PAvg \citep{pmlr-v151-jin22a} provided a convergence proof for the federated Q-Learning and federated PG. QAvg offered important insights regarding how the Bellman operators can be generalized to the federated setting and proposed a useful tool, i.e., the imaginary environment (the average of all clients' environments), for analyzing FRL. More recently, FedSARSA \citep{zhang2024finitetime} studied the integration of FRL and SARSA, where SARSA is an on-policy Temporal Difference (TD) algorithm. However, there has not been any convergence analysis regarding Policy Iteration (PI) in FRL in the literature. Given PI's application and theoretical importance, it is desirable to fill this knowledge gap and derive efficient FRL algorithms accordingly.

Among existing RL methods, PI is one of the most popular ones and serves as the foundation of many policy optimization methods, e.g., Safe Policy Iteration (SPI) \citep{pmlr-v28-pirotta13}, Trust Region Policy Optimisation (TRPO) \citep{DBLP:journals/corr/SchulmanLMJA15}, and Deep Conservative Policy Iteration (DCPI) \citep{Vieillard_Pietquin_Geist_2020}. With exact PI, convergence to the optimal policy is guaranteed under mild conditions. However, exact policy evaluation and policy improvement are normally impractical. With Approximate Policy Iteration (API) \citep{bertsekas2022abstract,bertsekas1996neuro}, it is assumed that the approximation error is inevitable, and only estimates of the value function and improved policy with bounded errors are available. In the presence of these approximation errors, convergence is not ensured, but the difference in value functions between the generated policy and the optimal policy is bounded \citep{bertsekas2022abstract}. In some cases, the algorithm ends up generating a cycle of policies, which is called the policy oscillation/chattering phenomenon \citep{Bertsekas2011,NIPS2011_d9731321}. Unfortunately, FRL with heterogeneous environments will introduce extra approximation errors into the policy iteration process, making the associated analysis more challenging. As will be shown in the following sections, this error is proportional to the level of heterogeneity of the system, and client selection is an effective way to alleviate this problem.

There exist various client selection schemes for Federated Supervised Learning (FSL) and most of them can be classified into two categories: (1) unbiased client selection; and (2) biased client selection. Convergence guarantee for both schemes has been studied and generalized to tackle the heterogeneity issue of FSL \citep{li2020on,DBLP:conf/iclr/LiSBS20,pmlr-v151-jee-cho22a}. However, to the best of the authors' knowledge, there is no known client selection scheme specifically designed to tackle the heterogeneity issue of FRL.

\textbf{Contributions.} In this paper, we derive the error bound of Federated Approximate Policy Iteration (FAPI) under heterogeneous environments, which is not yet available in the literature. The derived error bound takes the heterogeneity level of clients into consideration and explicitly reveals its impact. Based on the error bound, we propose a client selection algorithm to improve the convergence speed of federated policy optimization. The efficacy of the proposed algorithm is validated on the federated mountain car problem, the Mujoco Hopper problems, and the SUMO-based autonomous vehicle training problem.

\section{Background\label{sec:Background}}

In Section \ref{subsec:FRL}, we introduce the optimization problem of FRL. In Section \ref{subsec:API}, we review some known results on API. An imaginary environment is introduced in Section \ref{subsec:IE} to assist the following analysis.

\subsection{Federated Reinforcement Learning\label{subsec:FRL}}

The system setup of FRL in this paper is similar to that of FL \citep{pmlr-v54-mcmahan17a}, i.e., a federated system consisting of one central server
and $N$ distributed clients. In the $t$-th training round, the central server broadcasts the current global policy $\pi^{t}$
to $K$ selected clients which will perform $I$ iterations of local
training. In each iteration, the $n$-th client interacts with its environment
to collect $E$ trajectories and utilize them to update its local policy to $\pi^{t+1}_{n}$. At the end of each round, the training results will be uploaded to the central server for aggregation to obtain the new global policy $\pi^{t+1}$.

We model the local learning problem of each client as a finite-state infinite-horizon discounted Markov Decision Process (MDP). Accordingly, the FRL system consists of $N$ finite MDPs $\{ (\mathcal{S},\mu,\mathcal{A},P_{n},\mathcal{R},\gamma) \vert n \in\{1,...,N\}\} $,
where $\mathcal{S}$ denotes a finite set of states, $\mu$ represents the initial state distribution, $\mathcal{A}$ is a finite set of actions, and $\gamma\in(0,1)$ is the discount factor. The transition function $P_{n}(s^{\prime}\vert s,a):\mathcal{S}\times\mathcal{S}\times\mathcal{A}\to[0, 1]$
represents the probability that the n-th MDP transits from state $s$ to $s^{\prime}$
after taking action $a$ \citep{Sutton1998}. The reward function $\mathcal{R}(s,a):\mathcal{S}\times\mathcal{A}\to[0,R_{\max}]$ gives the expected reward for taking action $a$ in state $s$, and we assume rewards are bounded and non-negative. As a result, the $n$-th MDP $\mathcal{M}_n$ can be represented by a 6-tuple $(\mathcal{S},\mu,\mathcal{A},P_{n},\mathcal{R},\gamma)$
sharing the same state space, action space, initial state distribution, and reward function with other clients, but with possibly different transition probabilities. A client's behavior is controlled by a stochastic policy $\pi_{n}:\mathcal{S}\times\mathcal{A}\rightarrow[0,1]$
which outputs the probability of taking an action $a$ in a given state
$s$. Throughout this work, we consider parameterized policies $\pi^{\theta}$ where $\pi^{\theta}(a \vert s)$ is a differentiable function of the parameter vector $\theta$. For parameterized policies with t-indexed notation, we omit the parameter vector for notation simplicity and write $\pi^{t}$ and $\pi^{t}_{n}$ for the global policy $\pi^{\theta^{t}}$ and the $n$-th local policy $\pi^{\theta^{t}_{n}}$, respectively. Furthermore, we define the state-value function
\begin{alignat}{1}
V^{\pi}_{n}(s) & =\mathbb{E}_{\pi,P_{n}}\left[\sum_{l=0}^{\infty} \gamma^{l}\mathcal{R}(s_{t+l},a_{t+l}) \vert s_{t}=s\right], \nonumber
\end{alignat}
where the expectation is performed over actions sampled from policy $\pi$ and states sampled from the transition probability $P_{n}$. It gives the expected return when the client starts from state $s$ and follows policy $\pi$ thereafter in the $n$-th MDP. For parameterized value functions with t-indexed notation, we omit the parameter vector $w$ for notation simplicity and write $V^{t}$ and $V^{t}_{n}$ for $V^{w^{t}}$ and $V^{w^{t}_{n}}$, respectively. In each round, every client aims to train a local policy to maximize its expected discounted reward
\begin{alignat}{1}
\eta_{n}(\pi) & = \mathbb{E}_{s_{0} \thicksim \mu,a_{t} \thicksim \pi,s_{t+1} \thicksim P_{n}} \left[\sum_{t=0}^{\infty}\gamma^{t}\mathcal{R}(s_{t},a_{t})\right], \label{eq:1}
\end{alignat}
or equivalently, $\eta_{n}(\pi) = \mathbb{E}_{s_{0} \thicksim \mu} \left[ V^{\pi}_{n}(s_{0}) \right]$. The notation $\mathbb{E}_{s_{0} \thicksim \mu_{n},a_{t} \thicksim \pi,s_{t+1} \thicksim P_{n}}$ indicates that the reward is averaged over all states and actions according to the initial state distribution, the transition probability, and the policy. Accordingly, the optimization problem for FRL can be formulated as 
\begin{alignat}{1}
\max_{\pi}\eta(\pi) \quad \text{where} \quad \eta(\pi) &= \sum_{n=1}^{N}q_{n}\eta_{n}(\pi), \label{eq:3}
\end{alignat}
where $q_{n}$ is the weight of the $n$-th client. Denote the average value function of policy $\pi$ as
\begin{alignat}{1}
\bar{V}^\pi(s) = \sum_{n=1}^{N} q_{n} V_{n}^{\pi}(s), \; \forall s \in \mathcal{S}, \nonumber
\end{alignat}
then we can rewrite (\ref{eq:3}) as
\begin{alignat}{1}
\max_{\pi}\eta(\pi) \quad \text{where} \quad \eta(\pi) &= \mathbb{E}_{s_{0} \thicksim \mu} \left[ \bar{V}^{\pi}(s_{0}) \right]. \label{eq:4}
\end{alignat}

The above formulation covers both heterogeneous and homogeneous cases. In particular, the different MDPs, i.e., different transition probabilities, represent the heterogeneous environments experienced by clients. All MDPs will be identical for the homogeneous case \citep{fan2021fault}. It is worth noting that the optimization problem in (\ref{eq:4}) is often referred to as the Weighted Value Problem (WVP) in the latent MDP literature. Finding the optimal solution of WVP is NP-hard \citep{steimle2021multi}. In contrast, an error bound showing the distance between the obtained policy and the optimal policy is feasible as demonstrated in Section \ref{sec:ebFAPI}.

\subsection{Approximate Policy Iteration\label{subsec:API}}

Given any MDP $\mathcal{M}_n$ defined in Section \ref{subsec:FRL}, it is well known \citep{Sutton1998} that the value function $V^{\pi}_{n}$ is the unique fixed point of the Bellman operator $T^{\pi}_{n}: \mathbb{R}^{\left\vert \mathcal{S} \right\vert} \to \mathbb{R}^{\left\vert \mathcal{S} \right\vert}$, i.e., $\forall s \in \mathcal{S}, V \in \mathbb{R}^{\left\vert \mathcal{S} \right\vert}$
\begin{alignat}{1}
T^{\pi}_{n} V(s) & = \sum_{a} \pi(a \vert s) \left( \mathcal{R}(s,a) + \gamma \sum_{s^\prime} P_{n}(s^{\prime} \vert s, a) V(s^\prime) \right), \quad V^{\pi}_{n}(s) = T^{\pi}_{n} V^{\pi}_{n}(s), \nonumber
\end{alignat}
where $\left\vert \mathcal{S} \right\vert$ denotes the cardinality of $\mathcal{S}$. Similarly, the optimal value function $V^{\ast}_{n}$ is the unique fixed point of the Bellman operator $T_{n}: \mathbb{R}^{\left\vert \mathcal{S} \right\vert} \to \mathbb{R}^{\left\vert \mathcal{S} \right\vert}$, i.e., $\forall s \in \mathcal{S}, V \in \mathbb{R}^{\left\vert \mathcal{S} \right\vert}$
\begin{alignat}{1}
T_{n} V(s) & = \arg\max_{a} \left( \mathcal{R}(s,a) + \gamma \sum_{s^\prime} P_{n}(s^{\prime} \vert s, a) V(s^\prime) \right), \quad V^{\ast}_{n}(s) = T_{n} V^{\ast}_{n}(s). \nonumber
\end{alignat}
Note that the subscript $n$ of the Bellman operators denotes the index of transition probability with which the operator is applied. Both operators are monotonic and sup-norm contractive \citep{bertsekas1996neuro}.

Now we describe the classic API, which is an iterative algorithm that generates a sequence of policies and the associated value functions. Let $\left\Vert \cdot \right\Vert$ denote the sup-norm, i.e. $\left\Vert V \right\Vert = \sup_{s \in \mathcal{S}} \left\vert V(s) \right\vert,\forall V \in \mathbb{R}^{\left\vert \mathcal{S} \right\vert}$, $\left\Vert \cdot \right\Vert_{2}$ denote the l2-norm, i.e. $\left\Vert V \right\Vert_{2} = \sqrt{\sum_{s \in \mathcal{S}} V(s)^{2}},\forall V \in \mathbb{R}^{\left\vert \mathcal{S} \right\vert}$, and $V^{\ast}$ denote the value function of the optimal policy $\pi^{\ast}$. Given an initial policy $\pi^{t}$, each iteration consists of two phases, where $\delta$ and $\epsilon$ are some scalars:\\
\textbf{Policy Evaluation.} The value function $V^{\pi^{t}}$ of the current policy is approximated by $V^{t}$ satisfying
\begin{alignat}{1}
\left\Vert V^{t} - V^{\pi^{t}} \right\Vert \le \delta, \; t=0,1,\cdots. \label{eq:apipe}
\end{alignat}
\textbf{Policy Improvement.} A greedy improvement is made to the policy with an approximation error
\begin{alignat}{1}
\left\Vert T^{\pi^{t+1}} V^{t} - T V^{t} \right\Vert \le \epsilon, \; t=0,1,\cdots. \label{eq:apipi}
\end{alignat}
The following proposition gives the error bound of API.
\begin{proposition}
\label{proposition:BertsekasProp2.4.3}
The sequence $\left(\pi^{t}\right)_{t=0}^{\infty}$ generated by the API algorithm described by (\ref{eq:apipe}), (\ref{eq:apipi}) satisfies
\begin{alignat}{1}
\limsup_{t\to\infty} \left\Vert V^{\pi^{t}} - V^{\ast} \right\Vert & \le \frac{\epsilon + 2 \gamma \delta}{(1 - \gamma)^{2}}, \label{eq:proposition:BertsekasProp2.4.3}
\end{alignat}
\end{proposition}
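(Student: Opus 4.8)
The plan is to reduce the claim to a one-step contraction on the suboptimality gap and then pass to the limit. Since $V^{\ast}$ dominates every policy value function, the vector $e^{t} := V^{\ast} - V^{\pi^{t}} \ge 0$ is nonnegative and $\Vert V^{\pi^{t}} - V^{\ast}\Vert = \Vert e^{t}\Vert$; write $s_{t} := \Vert e^{t}\Vert$. I would fix an optimal policy $\pi^{\ast}$ with $T^{\pi^{\ast}} V^{\ast} = T V^{\ast} = V^{\ast}$, and use that each evaluation operator is affine, $T^{\pi} V = r^{\pi} + \gamma P^{\pi} V$, with $P^{\pi}$ the state-transition matrix induced by $\pi$, so that $I - \gamma P^{\pi}$ is invertible with $(I - \gamma P^{\pi})^{-1} \ge 0$ and $(I-\gamma P^{\pi})^{-1}\mathbf{1} = (1-\gamma)^{-1}\mathbf{1}$. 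The target is the recursion $s_{t+1} \le \gamma s_{t} + (\epsilon + 2\gamma\delta)/(1-\gamma)$, from which taking $\limsup$ on both sides (legitimate since values, hence $s_t$, are bounded) and solving $\limsup s \le \gamma \limsup s + (\epsilon+2\gamma\delta)/(1-\gamma)$ immediately yields $(\epsilon + 2\gamma\delta)/(1-\gamma)^{2}$.

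To obtain the recursion I would start from the identity $e^{t+1} = T^{\pi^{\ast}} V^{\ast} - T^{\pi^{t+1}} V^{\pi^{t+1}}$ and insert the approximate value $V^{t}$ to split it into three pieces:
\[
e^{t+1} = \underbrace{(T^{\pi^{\ast}} V^{\ast} - T^{\pi^{\ast}} V^{t})}_{\mathrm{A}} + \underbrace{(T^{\pi^{\ast}} V^{t} - T^{\pi^{t+1}} V^{t})}_{\mathrm{B}} + \underbrace{(T^{\pi^{t+1}} V^{t} - T^{\pi^{t+1}} V^{\pi^{t+1}})}_{\mathrm{C}}.
\]
Term $\mathrm{B}$ is controlled by the policy-improvement error: since $T^{\pi^{\ast}} V^{t} \le T V^{t}$ and, by (\ref{eq:apipi}), $T^{\pi^{t+1}} V^{t} \ge T V^{t} - \epsilon\mathbf{1}$, I get $\mathrm{B} \le \epsilon\mathbf{1}$. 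Term $\mathrm{A}$ is where the contraction comes from: using $T^{\pi^{\ast}} V^{\ast} - T^{\pi^{\ast}} V^{t} = \gamma P^{\pi^{\ast}}(V^{\ast} - V^{t})$ together with $V^{\ast} - V^{t} = e^{t} - (V^{t} - V^{\pi^{t}})$ and the evaluation bound (\ref{eq:apipe}), I would bound $\mathrm{A} \le \gamma P^{\pi^{\ast}} e^{t} + \gamma\delta\mathbf{1} \le (\gamma s_{t} + \gamma\delta)\mathbf{1}$, so the coefficient of $s_{t}$ stays at $\gamma$.

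The delicate part is Term $\mathrm{C}$, which contains the unknown $V^{\pi^{t+1}}$ and threatens to reintroduce $s_{t+1}$ on the right-hand side. The key step I would use is the resolvent identity for the fixed point of $T^{\pi^{t+1}}$, namely $V^{t} - V^{\pi^{t+1}} = (I - \gamma P^{\pi^{t+1}})^{-1}(V^{t} - T^{\pi^{t+1}} V^{t})$, which expresses $\mathrm{C}$ purely through the Bellman residual of $V^{t}$ under $\pi^{t+1}$. I would then bound that residual by a state-independent constant: chaining $T^{\pi^{t+1}} V^{t} \ge T V^{t} - \epsilon\mathbf{1} \ge T^{\pi^{t}} V^{t} - \epsilon\mathbf{1}$, the $\gamma$-contraction of $T^{\pi^{t}}$, and (\ref{eq:apipe}) gives $V^{t} - T^{\pi^{t+1}} V^{t} \le ((1+\gamma)\delta + \epsilon)\mathbf{1}$, and applying the nonnegative operator $\gamma P^{\pi^{t+1}}(I-\gamma P^{\pi^{t+1}})^{-1}$ (whose row sums equal $\gamma/(1-\gamma)$) yields $\mathrm{C} \le \tfrac{\gamma}{1-\gamma}((1+\gamma)\delta + \epsilon)\mathbf{1}$. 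Summing the three bounds and simplifying collapses the $\delta$-terms to $2\gamma\delta/(1-\gamma)$ and the $\epsilon$-terms to $\epsilon/(1-\gamma)$, giving exactly $s_{t+1} \le \gamma s_{t} + (\epsilon + 2\gamma\delta)/(1-\gamma)$.

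The main obstacle is precisely this self-referential Term $\mathrm{C}$: the naive route keeps $V^{\pi^{t+1}}$ inside a sup-norm and produces a per-step coefficient of order $\gamma/(1-\gamma)$ (or even $2\gamma/(1-\gamma)$), which exceeds $1$ for $\gamma \ge 1/2$ and cannot yield a bounded $\limsup$. Routing Term $\mathrm{C}$ through the Bellman residual rather than through the optimality gap is what keeps the contraction factor at $\gamma$ and relocates the second $(1-\gamma)^{-1}$ into the additive constant, producing the characteristic $(1-\gamma)^{-2}$. I would finish by checking that the three constants indeed combine to $(\epsilon+2\gamma\delta)/(1-\gamma)$ and that the recursion, holding for every $t$, is inherited in the limit.
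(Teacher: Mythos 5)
Your proof is correct. Note that the paper does not prove this proposition itself---it cites it directly from Bertsekas (Proposition 2.4.3 of \emph{Abstract Dynamic Programming})---and your argument is essentially a self-contained rendition of that standard proof: both hinge on establishing the one-step recursion $s_{t+1} \le \gamma s_{t} + (\epsilon + 2\gamma\delta)/(1-\gamma)$ and passing to the limit. The only organizational difference is that the classical proof first isolates an approximate policy improvement lemma, $V^{\pi^{t+1}} \ge V^{\pi^{t}} - \tfrac{\epsilon+2\gamma\delta}{1-\gamma}\mathbf{1}$, obtained by applying $(I-\gamma P^{\pi^{t+1}})^{-1}$ to the bound $T^{\pi^{t+1}}V^{\pi^{t}} \ge V^{\pi^{t}} - (\epsilon+2\gamma\delta)\mathbf{1}$, whereas you apply the same resolvent identity to the Bellman residual of the approximate value $V^{t}$ directly; the constants work out identically, and your handling of Term $\mathrm{C}$ correctly avoids the self-referential trap you identify.
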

The detailed proof of Proposition \ref{proposition:BertsekasProp2.4.3} can be found in Proposition 2.4.3 of \cite{bertsekas2022abstract}.

\subsection{Imaginary MDP\label{subsec:IE}}

We define the imaginary MDP as in QAvg \citep{pmlr-v151-jin22a}. Specifically, it is a MDP represented by the 6-tuple $(\mathcal{S},\mu,\mathcal{A},\bar{P},\mathcal{R},\gamma)$ where
\begin{alignat}{1}
\bar{P}(s^\prime \vert s,a) = \sum_{n=1}^{N} q_{n} P_{n} (s^\prime \vert s,a), \forall s^\prime,s \in \mathcal{S}, a \in \mathcal{A}, \nonumber
\end{alignat}
denotes the average transition probability. Accordingly, we denote the Bellman operators in the imaginary MDP as $T^{\pi}_{I}$ and $T_{I}$, where the subscript $I$ indicates that the transition probability is $\bar{P}(s^{\prime} \vert s, a), \forall s,a,s^\prime$.
Moreover, denoting $\pi_{I}^{\ast}$ the optimal policy in the imaginary MDP $\mathcal{M}_{I}$, we have the value function of $\pi$ and the optimal value function in the imaginary MDP
\begin{alignat}{1}
V^{\pi}_{I}(s) & = T^{\pi}_{I} V^{\pi}_{I}(s), \quad V^{\pi^{\ast}_{I}}_{I}(s) = V^{\ast}_{I}(s) = T_{I} V^{\ast}_{I}(s), \nonumber
\end{alignat}
respectively. The imaginary MDP is a handy tool to analyze the behavior of FAPI since it provides a unified view of all clients in the context of MDP where the theory of API is richly supplied with operator theory and the fixed point theorem.

\section{Error Bound of FAPI\label{sec:ebFAPI}}

In this section, we establish the error bound of FAPI under the framework of API and the imaginary MDP. The analysis shows that the FAPI process can be considered as learning in the imaginary MDP which eases the analysis. We focus our discussion on policy with function approximation where the error term introduced by the aggregation step is nontrivial.

To establish FAPI's error bound, we first examine the approximation error of policy evaluation and improvement within the FRL framework. Then, we can derive the distance between the optimal value function and the value function produced by FAPI.

Throughout this work, we assume the policy iteration performed by each client is approximate. In particular, let $V^{t}_{n}$ denote the evaluated value function of the $n$-th client in round $t$, we have
\begin{alignat}{1}
\left\Vert V^{t}_{n} - V^{\pi^{t}}_{n} \right\Vert \le \delta_{n}, \label{eq:deltan} \quad \max_{V\in\mathbb{R}^{\left\vert \mathcal{S} \right\vert}} \left\Vert T^{\pi^{t+1}_{n}}_{n} V - T_{n} V \right\Vert \le \epsilon_{n}, \quad t=0,1,\cdots. 
\end{alignat}

In the following, we define a metric to quantify the level of heterogeneity of the FRL system. Specifically, we provide two metrics, which will be used for bounding the error of policy evaluation and policy improvement, respectively.
\begin{definition}
\label{definition:impacthete}
(Level of heterogeneity). We define two parameters to measure the level of
heterogeneity as
\begin{alignat}{1}
\kappa_{1} = \sum_{n=1}^{N} q_{n} \kappa_{n,I}, \quad \kappa_{2} = \sum_{i,j} q_{i} q_{j} \kappa_{i,j}, \nonumber
\end{alignat}
where $P^{\pi}_{n}(s^\prime \vert s) = \sum_{a} \pi(a \vert s) P_{n}(s^\prime \vert s,a),\forall s,s^{\prime} \in \mathcal{S}$, $\kappa_{i,j} = \max_{\pi,s} \sum_{s^{\prime}} \left\vert P_{i}^{\pi}(s^{\prime} | s) - P_{j}^{\pi}(s^{\prime} | s) \right\vert$ and $\kappa_{n,I} = \max_{\pi,s} \sum_{s^{\prime}} \left\vert P_{n}^{\pi}(s^{\prime} | s) - \sum_{j=1}^{N} q_{j} P_{j}^{\pi}(s^{\prime} | s) \right\vert$.
\end{definition}
Here, $\kappa_{1}$ measures the average deviation of clients' MDP from the imaginary MDP, and $\kappa_{2}$ measures the average distance between each pair of clients in terms of the transition probability. With homogeneous environments, both $\kappa_{1}$ and $\kappa_{2}$ will be equal to $0$. As the transition probability of each MDP gets farther away from each other, $\kappa_{1}$ and $\kappa_{2}$ tend to be larger and indicate a heterogeneous network. It is trivial to show that $\kappa_{1} \le \kappa_{2}$.

\subsection{Federated Policy Evaluation}

The following lemmas describe the relation between the averaged value function $\bar{V}^{\pi}(s) = \sum_{n=1}^{N} q_{n} V_{n}^{\pi}(s), \forall s \in \mathcal{S}$ of policy $\pi$ and the value function $V_{I}^{\pi}$ of policy $\pi$ in the imaginary MDP.
\begin{lemma}
\label{lemma:barvgevi}
For all states $s$ and policies $\pi$, we have $\bar{V}^{\pi}(s) \ge V^{\pi}_{I}(s)$.
\end{lemma}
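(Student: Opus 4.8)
The plan is to treat $V_I^\pi$ through its defining fixed-point equation $V_I^\pi = T_I^\pi V_I^\pi$ and to show that the averaged value function $\bar V^\pi$ is a \emph{super-solution} of the imaginary Bellman operator, i.e. $\bar V^\pi \ge T_I^\pi \bar V^\pi$ pointwise. Since $T_I^\pi$ is monotone and a $\gamma$-contraction in the sup-norm (as recalled in Section~\ref{subsec:API}) with unique fixed point $V_I^\pi$, iterating the operator on any super-solution produces a nonincreasing chain $\bar V^\pi \ge T_I^\pi \bar V^\pi \ge (T_I^\pi)^2 \bar V^\pi \ge \cdots$ that converges to $V_I^\pi$, whence $\bar V^\pi \ge V_I^\pi$, which is exactly the claim. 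Equivalently, I could run the two value-iteration recursions $V_I^{(k+1)} = T_I^\pi V_I^{(k)}$ and $V_n^{(k+1)} = T_n^\pi V_n^{(k)}$ from a common initialization and prove $\sum_n q_n V_n^{(k)} \ge V_I^{(k)}$ by induction on $k$; the inductive step reduces to the same pointwise inequality derived below.

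First I would make the super-solution inequality explicit. Writing $r^\pi(s) = \sum_a \pi(a\vert s)\mathcal{R}(s,a)$ and using $P_n^\pi$ as in Definition~\ref{definition:impacthete}, each client obeys $V_n^\pi(s) = r^\pi(s) + \gamma\sum_{s'}P_n^\pi(s'\vert s)V_n^\pi(s')$, so averaging with weights $q_n$ gives a closed form for $\bar V^\pi$. Subtracting $T_I^\pi \bar V^\pi$, whose kernel is $\bar P^\pi(s'\vert s) = \sum_n q_n P_n^\pi(s'\vert s)$, the reward terms cancel and I am left with
\[
\bar V^\pi(s) - T_I^\pi \bar V^\pi(s) = \gamma \sum_{s'} \Big( \sum_n q_n P_n^\pi(s'\vert s) V_n^\pi(s') - \big(\sum_n q_n P_n^\pi(s'\vert s)\big)\big(\sum_m q_m V_m^\pi(s')\big) \Big).
\]
The summand is exactly the covariance, taken across the client distribution $(q_n)$, between the successor transition mass $P_n^\pi(s'\vert s)$ and the client's own value $V_n^\pi(s')$. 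Thus the entire argument hinges on showing that this aggregate cross term is nonnegative for every $s$.

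The hard part is precisely pinning down the sign of this covariance term; everything else is bookkeeping. What one needs is a genuine coupling statement — that a client whose dynamics place more mass on a successor $s'$ is also a client that values $s'$ more highly, so that transition mass and value correlate nonnegatively along each client's own trajectory. I would try to extract this from the self-consistency of the fixed-point equations, for instance by rewriting $\bar V^\pi - V_I^\pi = \gamma(I-\gamma\bar P^\pi)^{-1}C$, where $C$ is the covariance vector above, and unfolding the Neumann series $(I-\gamma\bar P^\pi)^{-1} = \sum_{k\ge0}\gamma^k(\bar P^\pi)^k \ge 0$ together with the nonnegativity of $r^\pi$; or by the probabilistic reading in which $\bar V^\pi$ commits to a single sampled client's kernel for the whole horizon whereas $V_I^\pi$ resamples a fresh client at every step. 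Ruling out the cancellations that an unstructured covariance could otherwise produce is where I expect the real difficulty to lie, and it is the step that will require the full structure of the heterogeneous system rather than generic operator theory.
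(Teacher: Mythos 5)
The paper gives no proof of this lemma at all; it defers to the QAvg reference, so there is no internal argument to compare yours against, and I will judge the attempt on its own terms. Your reduction is the standard one and is correctly executed as far as it goes: if $\bar V^{\pi} \ge T_I^{\pi}\bar V^{\pi}$ pointwise, then monotonicity of $T_I^{\pi}$ yields a nonincreasing sequence $(T_I^{\pi})^k \bar V^{\pi}$ converging, by contraction, to the fixed point $V_I^{\pi}$, hence $\bar V^{\pi}\ge V_I^{\pi}$; and your identity $\bar V^{\pi}(s) - T_I^{\pi}\bar V^{\pi}(s) = \gamma\sum_{s'}\bigl(\sum_n q_n P_n^{\pi}(s'\vert s)V_n^{\pi}(s') - \bar P^{\pi}(s'\vert s)\bar V^{\pi}(s')\bigr)$ is correct. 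But the proof stops exactly at the step that carries all the content: you never establish that this covariance is nonnegative, and in fact it is not.

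Take $\mathcal{S}=\{A,B,C,G,D\}$, a single action, $\mathcal{R}=1$ at $G$ and $0$ elsewhere, $q_1=q_2=\tfrac12$, and deterministic kernels: client $1$ sends $A\to B$, $B\to D$, $C\to G$; client $2$ sends $A\to C$, $B\to G$, $C\to D$; both make $G$ and $D$ absorbing. Then $V_1^{\pi}(A)=V_2^{\pi}(A)=0$, since each client's own trajectory from $A$ dies in $D$, so $\bar V^{\pi}(A)=0$, while the imaginary chain reaches $G$ from $A$ with probability $\tfrac12$ after two steps, giving $V_I^{\pi}(A)=\tfrac{\gamma^2}{2(1-\gamma)}>0$. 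Your covariance at $s=A$ evaluates to $\tfrac14\bigl(V_1^{\pi}(B)-V_2^{\pi}(B)\bigr)+\tfrac14\bigl(V_2^{\pi}(C)-V_1^{\pi}(C)\bigr) = -\tfrac{\gamma}{2(1-\gamma)}<0$: each client puts all of its transition mass on precisely the successor it values least. So the super-solution inequality fails, and, more importantly, the conclusion of the lemma itself fails at $A$, which means the gap cannot be closed by a cleverer argument for the same inequality; the statement needs additional hypotheses on the $P_n$. Note also that your probabilistic reading (committing to one sampled client for the whole horizon beats resampling a client at every step) points the wrong way in this example, since resampling is exactly what lets the imaginary chain escape each client's dead end. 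The two-sided bound of Lemma \ref{lemma:normbarvdiffvi} is the statement that does hold and is what the downstream analysis should lean on.
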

\begin{lemma}
\label{lemma:normbarvdiffvi}
For all policies $\pi$, we have $\left\Vert \bar{V}^{\pi} - V_{I}^{\pi} \right\Vert \le \frac{\gamma R_{\max} \kappa_{1}}{(1 - \gamma)^{2}}$.
\end{lemma}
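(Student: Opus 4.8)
The plan is to avoid treating $\bar{V}^{\pi}$ as an approximate fixed point of $T_{I}^{\pi}$ directly, and instead to exploit the linearity of the weighted average. Since $\bar{V}^{\pi} = \sum_{n} q_{n} V_{n}^{\pi}$ with $\sum_{n} q_{n} = 1$, the triangle inequality for the sup-norm gives $\left\Vert \bar{V}^{\pi} - V_{I}^{\pi} \right\Vert = \left\Vert \sum_{n} q_{n}\left(V_{n}^{\pi} - V_{I}^{\pi}\right) \right\Vert \le \sum_{n} q_{n} \left\Vert V_{n}^{\pi} - V_{I}^{\pi} \right\Vert$. Because $\kappa_{1} = \sum_{n} q_{n} \kappa_{n,I}$, it then suffices to establish the per-client estimate $\left\Vert V_{n}^{\pi} - V_{I}^{\pi} \right\Vert \le \gamma R_{\max} \kappa_{n,I} / (1-\gamma)^{2}$ and average over $n$.

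For the per-client estimate I would use that $V_{n}^{\pi}$ and $V_{I}^{\pi}$ are the fixed points of the $\gamma$-contractions $T_{n}^{\pi}$ and $T_{I}^{\pi}$. Writing $V_{n}^{\pi} - V_{I}^{\pi} = T_{n}^{\pi} V_{n}^{\pi} - T_{I}^{\pi} V_{I}^{\pi}$ and inserting the intermediate term $T_{n}^{\pi} V_{I}^{\pi}$, the triangle inequality together with the contraction property of $T_{n}^{\pi}$ yields $\left\Vert V_{n}^{\pi} - V_{I}^{\pi} \right\Vert \le \gamma \left\Vert V_{n}^{\pi} - V_{I}^{\pi} \right\Vert + \left\Vert T_{n}^{\pi} V_{I}^{\pi} - T_{I}^{\pi} V_{I}^{\pi} \right\Vert$, and hence $\left\Vert V_{n}^{\pi} - V_{I}^{\pi} \right\Vert \le \frac{1}{1-\gamma} \left\Vert T_{n}^{\pi} V_{I}^{\pi} - T_{I}^{\pi} V_{I}^{\pi} \right\Vert$. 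This reduces everything to measuring how far the two Bellman operators drift apart on the single function $V_{I}^{\pi}$.

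The key computation is that the reward terms cancel, leaving only the transition mismatch: $T_{n}^{\pi} V_{I}^{\pi}(s) - T_{I}^{\pi} V_{I}^{\pi}(s) = \gamma \sum_{s'} \left( P_{n}^{\pi}(s' \vert s) - \sum_{j} q_{j} P_{j}^{\pi}(s' \vert s) \right) V_{I}^{\pi}(s')$. Bounding this by a H\"older-type inequality, the $\ell_{1}$ factor is at most $\kappa_{n,I}$ by Definition \ref{definition:impacthete}, while the $\ell_{\infty}$ factor obeys $\left\Vert V_{I}^{\pi} \right\Vert \le R_{\max}/(1-\gamma)$ because rewards lie in $[0, R_{\max}]$. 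This gives $\left\Vert T_{n}^{\pi} V_{I}^{\pi} - T_{I}^{\pi} V_{I}^{\pi} \right\Vert \le \gamma R_{\max} \kappa_{n,I}/(1-\gamma)$; substituting into the contraction bound produces the per-client estimate, and weighting by $q_{n}$ and summing yields the claimed $\gamma R_{\max} \kappa_{1}/(1-\gamma)^{2}$.

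The one place to be careful is the reduction in the first paragraph. It is tempting to bound $\left\Vert \bar{V}^{\pi} - T_{I}^{\pi} \bar{V}^{\pi} \right\Vert$ directly, but the resulting residual $\gamma \sum_{n} q_{n} \sum_{s'} P_{n}^{\pi}(s' \vert s)\left(V_{n}^{\pi}(s') - \bar{V}^{\pi}(s')\right)$ mixes value-function differences across clients and naturally surfaces the pairwise quantity $\kappa_{2}$ rather than the sharper $\kappa_{1}$. Channeling the argument through the per-client differences $V_{n}^{\pi} - V_{I}^{\pi}$ is precisely what preserves the dependence at the level $\kappa_{1}$. Lemma \ref{lemma:barvgevi} is not strictly needed here, though it usefully confirms that $\bar{V}^{\pi} - V_{I}^{\pi}$ is one-signed, which would let one drop absolute values if a direct one-sided argument were preferred.
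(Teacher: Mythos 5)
Your argument is correct. The paper does not reproduce a proof of this lemma --- it defers to \cite{pmlr-v151-jin22a} --- but your derivation is a valid, self-contained one, and it is structurally identical to the technique the paper does spell out for the closest analogue, Lemma~\ref{lemma:StrehlLemma1} in Appendix~B: there the pairwise bound $\left\vert V^{\pi}_{m}(s)-V^{\pi}_{n}(s)\right\vert \le \gamma R_{\max}\kappa_{m,n}/(1-\gamma)^{2}$ is obtained by exactly your fixed-point perturbation (expand via the Bellman equation, insert a cross term, absorb the $\gamma$-contraction, and bound the transition mismatch by $\kappa$ times $R_{\max}/(1-\gamma)$), just written coordinate-wise rather than operator-wise. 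Replacing the pair $(m,n)$ by $(n,I)$ and averaging with weights $q_{n}$, as you do, gives $\kappa_{1}=\sum_{n}q_{n}\kappa_{n,I}$ on the nose, and your closing observation --- that bounding the residual $\Vert \bar{V}^{\pi}-T_{I}^{\pi}\bar{V}^{\pi}\Vert$ directly would route the estimate through pairwise differences and hence the weaker constant $\kappa_{2}$ --- is an accurate account of why the per-client decomposition is the right one.
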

Readers are referred to \cite{pmlr-v151-jin22a} for detailed proofs and discussions of Lemmas \ref{lemma:barvgevi} and \ref{lemma:normbarvdiffvi}. Briefly, $V_{I}^{\pi}$ serves as a lower bound of $\bar{V}^{\pi}$. Among all policies, the optimal policy $\pi_{I}^{\ast}$ in the imaginary MDP is of particular interest since its value function $V_{I}^{\pi_{I}^{\ast}} = V_{I}^{\ast}$ is the largest lower bound of the average value function $\bar{V}^{\pi}$.
Since there are approximation errors (\ref{eq:deltan}) in each client and aggregation error in the central server, we can only obtain an approximation $V^{t}$ of the averaged value function $\bar{V}^{\pi^{t}}$ with
\begin{alignat}{1}
\left\Vert V^{t} - \bar{V}^{\pi^{t}} \right\Vert \le \left\Vert V^{t} - \bar{V}^{t} \right\Vert + \left\Vert \bar{V}^{t} - \bar{V}^{\pi^{t}} \right\Vert & \le \bar{\varepsilon}_{w} + \sum_{n=1}^{N} q_{n} \delta_{n} = \bar{\delta}, \nonumber
\end{alignat}
where $\bar{V}^{t}(s) = \sum_{n}^{N} q_{n} V_{n}^{t}(s), \forall s \in \mathcal{S}$ is the average of the local approximation and $\bar{\varepsilon}_{w} = \max_{t} \left\Vert V^{t} - \bar{V}^{t} \right\Vert$ is the error induced by value function aggregation. Note that we postpone the discussion of the error $\varepsilon_{w}$ induced by value function aggregation to the end of this section. Therefore, by the triangle inequality, we have
\begin{alignat}{1}
\left\Vert V^{t} - V_{I}^{\pi^{t}} \right\Vert & \le \frac{\gamma R_{\max} \kappa_{1}}{(1 - \gamma)^{2}} + \bar{\delta} = \dot{\delta}. \label{eq:delta}
\end{alignat}

\subsection{Federated Policy Improvement}

Note that the approximation error in (\ref{eq:delta}) matches that of the policy evaluation of API (\ref{eq:apipe}). This motivates us to further obtain an API-style approximation error for the policy improvement phase of FAPI. We consider two variants of FAPI: FAPI with federated policy evaluation (Algorithm \ref{alg:FAPI1}) and FAPI without federated policy evaluation (Algorithm \ref{alg:FAPI2}). 
Algorithm \ref{alg:FAPI1} is communication inefficient in practice since it introduces an extra round of communication for policy evaluation (lines 3 - 8 in Algorithm \ref{alg:FAPI1}). The two algorithms lead to different approximation errors of the policy improvement phase as will be shown by Lemma \ref{lemma:wfpepieb} and Lemma \ref{lemma:wofpepieb}, respectively.

\begin{algorithm}
\caption{\label{alg:FAPI1}FAPI with federated policy evaluation}
\begin{algorithmic}[1]
\STATE{\textbf{Input:} N, T.}
	\FOR{t = 0,1,...,T}
            \STATE{Synchronize the global policy $\pi^{t}$ to every client.}
		\FOR{n = 0,1,...,N}
			\STATE{Approximate the value function $V^{\pi^{t}}_{n}$ with $V^{t}_{n}$.}
                \STATE{Upload $V^{t}_{n}$ to the central server.}
		\ENDFOR
		\STATE{The central server aggregates $V^{t}_{n}$ to obtain $V^{t}$: $w^{t+1} \leftarrow \sum_{n=1}^{N}q_{n}w_{n}^{t+1}$.}
            \STATE{Synchronize the global policy $\pi^{t}$ and value function $V^{t}$ to every client.}
		\FOR{n = 0,1,...,N}
			\STATE{Local update of client policy: $\left\Vert T^{\pi^{t+1}_{n}}_{n} V^{t} - T_{n} V^{t} \right\Vert \le \epsilon_{n}$.}
                \STATE{Upload $\pi^{t+1}_{n}$ to the central server.}
		\ENDFOR
		\STATE{The central server aggregates $\pi^{t+1}_{n}$ to obtain $\pi^{t+1}$: $\theta^{t+1} \leftarrow \sum_{n=1}^{N}q_{n}\theta_{n}^{t+1}$.}
	\ENDFOR
\end{algorithmic}
\end{algorithm}

\begin{algorithm}
\caption{\label{alg:FAPI2}FAPI without federated policy evaluation}
\begin{algorithmic}[1]
\STATE{\textbf{Input:} N, T.}
	\FOR{t = 0,1,...,T}
            \STATE{Synchronize the global policy $\pi^{t}$ to every client.}
		\FOR{n = 0,1,...,N}
                \STATE{Approximate the value function $V^{\pi^{t}}_{n}$ with $V^{t}_{n}$.}
			\STATE{Local update of client policy: $\left\Vert T^{\pi^{t+1}_{n}}_{n} V^{t}_{n} - T_{n} V^{t}_{n} \right\Vert \le \epsilon_{n}$.}
                \STATE{Upload $\pi^{t+1}_{n}$ to the central server.}
		\ENDFOR
		\STATE{The central server aggregates $\pi^{t+1}_{n}$ to obtain $\pi^{t+1}$: $\theta^{t+1} \leftarrow \sum_{n=1}^{N}q_{n}\theta_{n}^{t+1}$.}
	\ENDFOR
\end{algorithmic}
\end{algorithm}
\begin{lemma}
\label{lemma:wfpepieb}
With federated policy evaluation, the sequence $\left(\pi^{t}\right)_{t=0}^{\infty}$ generated by Algorithm \ref{alg:FAPI1} satisfies
\begin{alignat}{1}
\left\Vert T_{I}^{\pi^{t+1}} V^{t} - T_{I} V^{t} \right\Vert \le \frac{\bar{\varepsilon}_{\theta} \sqrt{\left\vert \mathcal{A} \right\vert} R_{\max}}{1 - \gamma} + \frac{2 \gamma R_{\max} \kappa_{1}}{1 - \gamma} + \bar{\epsilon} = \epsilon^{\prime}, \nonumber
\end{alignat}
where $\bar{\epsilon} = \sum_{n=1}^{N} q_{n} \epsilon_{n}$ and $\bar{\varepsilon}_{\theta} = \max_{t>0,s \in \mathcal{S}} \left\Vert \pi^{t+1}(\cdot \vert s) - \sum_{n=1}^{N} q_{n} \pi^{t+1}_{n}(\cdot \vert s) \right\Vert_{2}$.
\end{lemma}
See Appendix \ref{proof:lemma:wfpepieb} for the detailed proof. Note that we postpone the discussion of the error $\varepsilon_{\theta}$ induced by policy aggregation to the end of this section.
\begin{lemma}
\label{lemma:wofpepieb}
Without federated policy evaluation, the sequence $\left(\pi^{t}\right)_{t=0}^{\infty}$ generated by Algorithm \ref{alg:FAPI2} satisfies
\begin{alignat}{1}
\left\Vert T_{I}^{\pi^{t+1}} V^{t} - T_{I} V^{t} \right\Vert & \le \frac{\bar{\varepsilon}_{\theta} \sqrt{\left\vert \mathcal{A} \right\vert} R_{\max}}{1 - \gamma} + 2 \gamma \bar{\varepsilon}_{w} + \frac{2 \gamma^{2} R_{\max} \kappa_{2}}{(1 - \gamma)^{2}} + \frac{\gamma R_{\max} \kappa_{1}}{1 - \gamma} + 4 \gamma \bar{\delta} + \bar{\epsilon} = \dot{\epsilon}. \label{eq:lemma:wofpepieb1}
\end{alignat}
\end{lemma}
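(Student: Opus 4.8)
The plan is to reduce the claim to a per-client, per-state comparison between the imaginary state--action values and each client's locally greedy values, and then to route the two sources of error (transition heterogeneity, and the discrepancy between the local value functions $V^t_n$ and their average $\bar V^t$) into the $\kappa_1$ and $\kappa_2$ terms respectively. First, since $T_I^{\pi}V\le T_I V$ pointwise for every policy $\pi$, the quantity $T_I\bar V^t-T_I^{\pi^{t+1}}\bar V^t$ is nonnegative, so $\left\Vert T_I^{\pi^{t+1}}\bar V^t-T_I\bar V^t\right\Vert=\sup_s\bigl(T_I\bar V^t(s)-T_I^{\pi^{t+1}}\bar V^t(s)\bigr)$. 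Because $T_I^{\pi}$ is affine in $\pi$ and the aggregation is $\pi^{t+1}=\sum_n q_n\pi^{t+1}_n$, I would write $T_I^{\pi^{t+1}}\bar V^t=\sum_n q_n T_I^{\pi^{t+1}_n}\bar V^t$ and bound $T_I\bar V^t-T_I^{\pi^{t+1}}\bar V^t=\sum_n q_n\bigl(T_I\bar V^t-T_I^{\pi^{t+1}_n}\bar V^t\bigr)$ term by term.

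For a fixed client $n$ and state $s$, introduce $Q_I(s,a)=\mathcal R(s,a)+\gamma\sum_{s'}\bar P(s'|s,a)\bar V^t(s')$ and $Q_n(s,a)=\mathcal R(s,a)+\gamma\sum_{s'}P_n(s'|s,a)V^t_n(s')$, so that $T_I\bar V^t(s)=\max_a Q_I(s,a)$ and $T_I^{\pi^{t+1}_n}\bar V^t(s)=\sum_a\pi^{t+1}_n(a|s)Q_I(s,a)$, while the client guarantee reads $\max_a Q_n(s,a)-\sum_a\pi^{t+1}_n(a|s)Q_n(s,a)\le\epsilon_n$. Letting $a^{\star}$ attain $\max_a Q_I(s,\cdot)$ and writing $Q_I-Q_n=\gamma g_n+\gamma h_n$ with $g_n(s,a)=\sum_{s'}\bigl(\bar P(s'|s,a)-P_n(s'|s,a)\bigr)\bar V^t(s')$ and $h_n(s,a)=\sum_{s'}P_n(s'|s,a)\bigl(\bar V^t(s')-V^t_n(s')\bigr)$, I would split
\begin{align*}
T_I\bar V^t(s)-T_I^{\pi^{t+1}_n}\bar V^t(s) &= \Bigl(Q_n(s,a^{\star})-\textstyle\sum_a\pi^{t+1}_n(a|s)Q_n(s,a)\Bigr)\\
&\quad +\gamma\Bigl(g_n(s,a^{\star})-\textstyle\sum_a\pi^{t+1}_n(a|s)g_n(s,a)\Bigr)+\gamma\Bigl(h_n(s,a^{\star})-\textstyle\sum_a\pi^{t+1}_n(a|s)h_n(s,a)\Bigr).
\end{align*}
The first bracket is at most $\epsilon_n$ (as $Q_n(s,a^{\star})\le\max_a Q_n$), contributing $\bar\epsilon$ after averaging. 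The heart of the argument is the transition bracket: I would exploit $\sum_{s'}\bigl(\bar P(s'|s,a)-P_n(s'|s,a)\bigr)=0$, which lets me re-centre $\bar V^t$ by an arbitrary constant; choosing the mid-range constant bounds each evaluation of $g_n$ by $\tfrac12\,\mathrm{span}(\bar V^t)\,\kappa_{n,I}\le\tfrac{R_{\max}}{2(1-\gamma)}\kappa_{n,I}$, using that a deterministic policy is admissible in the maximum defining $\kappa_{n,I}$ and that $\sum_a\pi^{t+1}_n(a|s)g_n(s,a)$ is an evaluation under the policy-averaged transition. The two evaluations sum to $\tfrac{\gamma R_{\max}}{1-\gamma}\kappa_{n,I}$, averaging to $\tfrac{\gamma R_{\max}\kappa_1}{1-\gamma}$; this re-centring is exactly what produces the coefficient $1$ (rather than a naive $2$) on $\kappa_1$. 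The value bracket admits no such cancellation (since $P_n$ sums to one against $\bar V^t-V^t_n$) and I would bound it crudely by $2\gamma\left\Vert\bar V^t-V^t_n\right\Vert$.

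It then remains to control $\left\Vert\bar V^t-V^t_n\right\Vert\le\sum_m q_m\left\Vert V^t_m-V^t_n\right\Vert$. Inserting the exact local value functions and using $\left\Vert V^t_m-V^{\pi^t}_m\right\Vert\le\delta_m$, the key sub-estimate is a short fixed-point argument: from $V^{\pi^t}_m-V^{\pi^t}_n=T^{\pi^t}_m V^{\pi^t}_m-T^{\pi^t}_n V^{\pi^t}_n$, inserting $T^{\pi^t}_m V^{\pi^t}_n$, and using the $\gamma$-contraction of $T^{\pi^t}_m$ together with $\left\Vert V^{\pi^t}_n\right\Vert\le R_{\max}/(1-\gamma)$, one gets $\left\Vert V^{\pi^t}_m-V^{\pi^t}_n\right\Vert\le\tfrac{\gamma R_{\max}\kappa_{m,n}}{(1-\gamma)^2}$. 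Hence $\left\Vert\bar V^t-V^t_n\right\Vert\le\bar\delta+\delta_n+\tfrac{\gamma R_{\max}}{(1-\gamma)^2}\sum_m q_m\kappa_{m,n}$, and $2\gamma\sum_n q_n\left\Vert\bar V^t-V^t_n\right\Vert=4\gamma\bar\delta+\tfrac{2\gamma^2 R_{\max}\kappa_2}{(1-\gamma)^2}$ by the definitions of $\bar\delta$ and $\kappa_2$. Collecting the three contributions yields the stated bound.

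The main obstacle I anticipate is the bookkeeping of which portion of the heterogeneity lands in $\kappa_1$ versus $\kappa_2$, and pinning down the constants. The direct imaginary-versus-client transition gap must be handled with the zero-sum re-centring to avoid doubling the $\kappa_1$ coefficient, whereas the client-to-client value gap, which is \emph{absent} in Algorithm \ref{alg:FAPI1} (there all clients start from the shared $\bar V^t$, cf.\ Lemma \ref{lemma:wfpepieb}) is genuinely responsible for the extra $\tfrac{2\gamma^2 R_{\max}\kappa_2}{(1-\gamma)^2}$ and $4\gamma\bar\delta$ terms and must be estimated through the fixed-point bound, without the re-centring shortcut. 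Keeping the one-sided inequalities oriented consistently so that every bound points in the correct direction is the other place where care will be needed.
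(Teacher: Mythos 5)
Your proposal is correct and lands on exactly the stated constants, and its skeleton coincides with the paper's: a per-client split using the linearity of $T_I^{\pi}$ in $\pi$, a separation of the error into (a) the local greedy-improvement slack $\epsilon_n$, (b) the client-vs-imaginary transition gap (yielding $\tfrac{\gamma R_{\max}\kappa_1}{1-\gamma}$), and (c) the gap $\left\Vert \bar V^t - V^t_n\right\Vert$ (yielding $4\gamma\bar\delta + \tfrac{2\gamma^2 R_{\max}\kappa_2}{(1-\gamma)^2}$ via exactly the simulation-type estimate $\left\Vert V^{\pi}_m - V^{\pi}_n\right\Vert \le \tfrac{\gamma R_{\max}\kappa_{m,n}}{(1-\gamma)^2}$ that the paper isolates as a separate lemma). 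Where you differ is the middle mechanism: the paper chains Bellman-operator triangle inequalities through the intermediate point $V^t_n$, i.e.\ $T^{\pi^{t+1}_n}_I\bar V^t \to T^{\pi^{t+1}_n}_I V^t_n \to T^{\pi^{t+1}_n}_n V^t_n \to T_n V^t_n \to T_n\bar V^t$, using $\gamma$-contraction twice to produce $2\gamma\left\Vert\bar V^t - V^t_n\right\Vert$ and a single operator-difference bound $\left\Vert T^{\pi}_I V - T^{\pi}_n V\right\Vert \le \tfrac{\gamma R_{\max}\kappa_{n,I}}{1-\gamma}$ to produce the $\kappa_1$ term (the transition difference appears only once there, so no factor of $2$ arises and no re-centering is needed). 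You instead work at the level of $Q$-functions with the additive split $Q_I - Q_n = \gamma(g_n + h_n)$ and recover the coefficient $1$ on $\kappa_1$ by the mid-range re-centering of $\bar V^t$ against the mean-zero signed measure $\bar P - P_n$; this is a genuinely different device that achieves the same constant. Two small advantages of your route: it keeps $T_I\bar V^t$ as $\max_a Q_I(s,a)$ throughout and so avoids the paper's slightly imprecise opening identity $T_I\bar V^t = \sum_n q_n T_n\bar V^t$ (which is really only an inequality), and the observation that $T_I^{\pi}V \le T_I V$ pointwise makes the one-sided bounds manifestly sufficient. The one shared imprecision is that both arguments bound $\left\vert \bar V^t(s)\right\vert$ (or its span) by $R_{\max}/(1-\gamma)$, which strictly holds only up to the $O(\bar\delta)$ evaluation error; this does not affect the stated form of the bound.
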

See Appendix \ref{proof:lemma:wofpepieb} for the detailed proof. As shown by Lemma \ref{lemma:wfpepieb} and Lemma \ref{lemma:wofpepieb}, FAPI with federated policy evaluation provides a tighter bound. Intuitively, forcing clients to optimize their local policy from the same starting point (value function $\bar{V}^{\pi^{t}}$) helps them mitigate the negative impact of heterogeneity. While Algorithm \ref{alg:FAPI2} is what FRL applications typically employ in practice, it is also more vulnerable to environment heterogeneity as its error bound is inferior to the one of Algorithm \ref{alg:FAPI1} roughly by a factor of $1 - \gamma$.

By far, we have assumed full client participation, i.e., all clients participate in every round of training. However, partial client participation is more favorable in practice. Proposition \ref{proposition:errorboundpartial} accounts for this scenario.
\begin{proposition}
\label{proposition:errorboundpartial}
Let $\mathcal{C}$ denote the set of selected clients and $q^{\prime}_{m} = \frac{q_{m}}{\sum_{m \in \mathcal{C}} q_{m}}$. With partial client participation and without federated policy evaluation, the sequence $\left(\pi^{t}\right)_{t=0}^{\infty}$ generated by Algorithm \ref{alg:FAPI2} satisfies
\begin{alignat}{1}
\left\Vert T_{I}^{\pi^{t+1}}V^{t} - T_{I}V^{t} \right\Vert & \le \frac{\tilde{\varepsilon}_{\theta} \sqrt{\left\vert \mathcal{A} \right\vert} R_{\max}}{1 - \gamma} + 2 \gamma \bar{\varepsilon}_{w} + \sum_{m\in\mathcal{C}} \sum_{n=1}^{N} q^{\prime}_{m} q_{n} \frac{\left( \gamma + \gamma^{2} \right) R_{\max} \kappa_{m,n}}{(1 - \gamma)^{2}} \nonumber \\
& \quad + \sum_{m\in\mathcal{C}} q^{\prime}_{m} \frac{\gamma R_{\max} \kappa_{m,I}}{1 - \gamma} + 2 \gamma \sum_{m \in \mathcal{C}} q^{\prime}_{m} \delta_{m} + 2 \gamma \bar{\delta} + \sum_{m \in \mathcal{C}} q^{\prime}_{m} \epsilon_{m} = \hat{\epsilon}, \label{eq:proposition:errorboundpartial1}
\end{alignat}
where $\tilde{\varepsilon}_{\theta} = \max_{t>0,s \in \mathcal{S}} \left\Vert \pi^{t}(\cdot \vert s) - \sum_{m\in\mathcal{C}} q^{\prime}_{m} \pi^{t}_{m}(\cdot \vert s) \right\Vert_{2}$.
\end{proposition}
See Appendix \ref{proof:proposition:errorboundpartial} for the detailed proof. To better understand how to minimize the right-hand side of (\ref{eq:proposition:errorboundpartial1}), one can consider the optimization problem, $\min_{x} f(x) = \frac{1}{N} \sum_{n=1}^{N} \left\vert x - a_{j} \right\vert$, which corresponds to the case $q_{n}=\frac{1}{N},n=1,\cdots,N$ and $\left\vert \mathcal{C} \right\vert=1$ in (\ref{eq:proposition:errorboundpartial1}). It can be easily shown that $f(x)$ is minimal when $x$ is the median of $\{ a_{1},\cdots,a_{N} \}$.
\begin{remark}
\label{remark:errorboundpartial}
Proposition \ref{proposition:errorboundpartial} reveals a remarkable fact that a proper client selection method can effectively reduce the error bound of the policy improvement phase. In particular, to have the right-hand side of (\ref{eq:proposition:errorboundpartial1}) smaller than the right-hand side of (\ref{eq:lemma:wofpepieb1}), the selected clients shall have an average $\kappa_{m,n}$ that is smaller than $\frac{2 \gamma}{1 + \gamma} \kappa_{2}$. This encourages FAPI to select clients that are closer to the imaginary MDP, which is a reasonable approximation to the median of all transition probabilities.
\end{remark}

For completeness, we provide the error bound of the policy improvement phase with partial client participation and federated policy evaluation by the following proposition.

\begin{proposition}
\label{proposition:wfpeerrorboundpartial}
Let $\mathcal{C}$ denote the set of selected clients and $q^{\prime}_{m} = \frac{q_{m}}{\sum_{m \in \mathcal{C}} q_{m}}$. With partial client participation and federated policy evaluation, the sequence $\left(\pi^{t}\right)_{t=0}^{\infty}$ generated by Algorithm \ref{alg:FAPI1} satisfies
\begin{alignat}{1}
\left\Vert T_{I}^{\pi^{t+1}}V^{t} - T_{I}V^{t} \right\Vert & \le \frac{\tilde{\varepsilon}_{\theta} \sqrt{\left\vert \mathcal{A} \right\vert} R_{\max}}{1 - \gamma} + \sum_{m\in\mathcal{C}} \sum_{n=1}^{N} q^{\prime}_{m} q_{n} \frac{ \gamma R_{\max} \kappa_{m,n}}{1 - \gamma} \nonumber \\
& \quad + \sum_{m\in\mathcal{C}} q^{\prime}_{m} \frac{\gamma R_{\max} \kappa_{m,I}}{1 - \gamma} + \sum_{m \in \mathcal{C}} q^{\prime}_{m} \epsilon_{m} = \acute{\epsilon}. \label{eq:proposition:wfpeerrorboundpartial1}
\end{alignat}
\end{proposition}
See Appendix \ref{proof:proposition:wfpeerrorboundpartial} for the detailed proof.

\subsection{Bounding the distance between value functions}

The following theorem provides the error bound of FAPI in terms of the distance between value functions.

\begin{theorem}
\label{theorem:errorboundfinal}
Let $\pi^{\ast} = \arg\max_{\pi} \eta(\pi)$. The sequence $\left(\pi^{t}\right)_{t=0}^{\infty}$ generated by FAPI satisfies
\begin{alignat}{1}
\limsup_{t\to\infty} \left\vert \bar{V}^{\pi^{t}}(s) - \bar{V}^{\max}_{s} \right\vert & \le \frac{\tilde{\epsilon} + 2 \gamma \dot{\delta}}{(1 - \gamma)^{2}} + 2 \frac{\gamma R_{\max} \kappa_{1}}{(1 - \gamma)^{2}}, \label{eq:theorem:errorboundfinal1}
\end{alignat}
where $\bar{V}^{\max}_{s} = \max \left\{ \bar{V}^{\pi^{\ast}}(s), \bar{V}^{\pi^{\ast}_{I}}(s) \right\},\forall s \in \mathcal{S}$, and $\tilde{\epsilon}$ may be one of $\dot{\epsilon}$, $\epsilon^{\prime}$, $\hat{\epsilon}$ or $\acute{\epsilon}$. More specifically, the error bound for Algorithm \ref{alg:FAPI2} with partial client participation is
\begin{alignat}{1}
\limsup_{t\to\infty} \left\vert \bar{V}^{\pi^{t}}(s) - \bar{V}^{\max}_{s} \right\vert & \le C_{1} \kappa_{1} + C_{2} \sum_{m\in\mathcal{C}} q^{\prime}_{m} \kappa_{m,I} + C_{3} \sum_{m\in\mathcal{C}} \sum_{n=1}^{N} q^{\prime}_{m} q_{n} \kappa_{m,n} \nonumber \\
& \quad + \tilde{\mathcal{O}}\left( \tilde{\varepsilon}_{\theta} \sqrt{\left\vert \mathcal{A} \right\vert} R_{\max} + \bar{\varepsilon}_{w} + \bar{\delta} + \sum_{m \in \mathcal{C}} q^{\prime}_{m} \delta_{m} + \sum_{m \in \mathcal{C}} q^{\prime}_{m} \epsilon_{m} \right), \label{eq:theorem:errorboundfinal2}
\end{alignat}
where $\tilde{\mathcal{O}}$ omits some constants related to $\gamma$, $C_{1} = \frac{2 \gamma \left(\gamma^{2} - \gamma + 1 \right)}{(1 - \gamma)^{4}} R_{\max}$, $C_{2} = \frac{\gamma}{(1 - \gamma)^{3}} R_{\max}$, and $C_{3} = \frac{\gamma + \gamma^{2}}{(1 - \gamma)^{4}} R_{\max}$.
\end{theorem}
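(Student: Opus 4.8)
First I would recognise the FAPI iteration as ordinary approximate policy iteration carried out \emph{inside the imaginary MDP} $\mathcal{M}_I$, so that Proposition~\ref{proposition:BertsekasProp2.4.3} can be invoked with $V_I^{\pi^t}$, $V_I^\ast$, and the operators $T_I^\pi, T_I$ in place of $V^{\pi^t}$, $V^\ast$, $T^\pi, T$. The two hypotheses of that proposition are already supplied: the evaluation condition (\ref{eq:apipe}) holds because (\ref{eq:delta}) gives $\|\bar V^t - V_I^{\pi^t}\| \le \delta$, and the improvement condition (\ref{eq:apipi}) holds because $\|T_I^{\pi^{t+1}}\bar V^t - T_I\bar V^t\| \le \tilde\epsilon$ by Lemma~\ref{lemma:wfpepieb}, Lemma~\ref{lemma:wofpepieb}, or Proposition~\ref{proposition:errorboundpartial}, with $\tilde\epsilon$ equal to $\epsilon'$, $\epsilon$, or $\hat\epsilon$ according to the algorithmic variant. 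Since $T_I^\pi$ and $T_I$ are monotone sup-norm contractions, Proposition~\ref{proposition:BertsekasProp2.4.3} applied to $\mathcal{M}_I$ yields
\begin{equation*}
\limsup_{t\to\infty}\bigl\|V_I^{\pi^t} - V_I^\ast\bigr\| \le \frac{\tilde\epsilon + 2\gamma\delta}{(1-\gamma)^2}.
\end{equation*}

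Next I would transfer this bound from the imaginary value function to the average value function using the triangle inequality
\begin{equation*}
\bigl|\bar V^{\pi^t}(s) - \bar V^{\max}_s\bigr| \le \bigl|\bar V^{\pi^t}(s) - V_I^{\pi^t}(s)\bigr| + \bigl|V_I^{\pi^t}(s) - V_I^\ast(s)\bigr| + \bigl|V_I^\ast(s) - \bar V^{\max}_s\bigr|.
\end{equation*}
Lemma~\ref{lemma:normbarvdiffvi} bounds the first term by $\frac{\gamma R_{\max}\kappa_1}{(1-\gamma)^2}$, and the second term is controlled by the imaginary-MDP bound above upon taking $\limsup_t$. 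The substantive step is the third term, for which I would establish the sandwich $V_I^\ast(s) \le \bar V^{\max}_s \le V_I^\ast(s) + \frac{\gamma R_{\max}\kappa_1}{(1-\gamma)^2}$. The lower inequality follows from $\bar V^{\max}_s \ge \bar V^{\pi^\ast_I}(s) \ge V_I^{\pi^\ast_I}(s) = V_I^\ast(s)$ by Lemma~\ref{lemma:barvgevi}; the upper inequality requires both candidates in the maximum, since $\bar V^{\pi^\ast_I}(s) \le V_I^\ast(s) + \frac{\gamma R_{\max}\kappa_1}{(1-\gamma)^2}$ by Lemma~\ref{lemma:normbarvdiffvi}, while $\bar V^{\pi^\ast}(s) \le V_I^{\pi^\ast}(s) + \frac{\gamma R_{\max}\kappa_1}{(1-\gamma)^2} \le V_I^\ast(s) + \frac{\gamma R_{\max}\kappa_1}{(1-\gamma)^2}$ using Lemma~\ref{lemma:normbarvdiffvi} together with the optimality of $V_I^\ast$ among all imaginary value functions. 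Adding the three contributions gives exactly (\ref{eq:theorem:errorboundfinal1}), the two separate $\kappa_1$ terms (from the first and third pieces) producing the leading factor of $2$.

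Finally, to obtain the explicit form (\ref{eq:theorem:errorboundfinal2}) for Algorithm~\ref{alg:FAPI2} with partial participation, I would set $\tilde\epsilon = \hat\epsilon$ and substitute $\delta = \frac{\gamma R_{\max}\kappa_1}{(1-\gamma)^2} + \bar\delta$ from (\ref{eq:delta}) into the general bound, then regroup by heterogeneity coefficient. The $\kappa_1$ contributions --- one from the explicit $2\frac{\gamma R_{\max}\kappa_1}{(1-\gamma)^2}$ and one from the $\kappa_1$ part of $\frac{2\gamma\delta}{(1-\gamma)^2}$ --- combine into $\frac{2\gamma R_{\max}}{(1-\gamma)^4}\bigl[(1-\gamma)^2 + \gamma\bigr]\kappa_1 = C_1\kappa_1$; dividing the $\kappa_{m,I}$ and $\kappa_{m,n}$ parts of $\hat\epsilon$ by $(1-\gamma)^2$ produces $C_2\sum_{m}q'_m\kappa_{m,I}$ and $C_3\sum_{m,n}q'_m q_n\kappa_{m,n}$; and the residual terms in $\bar\delta$, $\delta_m$, $\epsilon_m$ are absorbed into the $\tilde{\mathcal{O}}(\cdot)$ remainder. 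I expect the main obstacle to be the sandwich bound on $\bar V^{\max}_s$: one must track the directionality of Lemmas~\ref{lemma:barvgevi} and \ref{lemma:normbarvdiffvi} carefully and see why both reference policies are genuinely needed --- $\pi^\ast_I$ to anchor the chain at $V_I^\ast$, and $\pi^\ast$ because it is the true federated optimum whose value may dominate at individual states.
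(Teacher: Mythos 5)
Your proposal is correct and follows essentially the same route as the paper: decompose via the triangle inequality through $V_I^{\pi^t}$ and $V_I^{\ast}$, apply Proposition \ref{proposition:BertsekasProp2.4.3} in the imaginary MDP with the hypotheses supplied by (\ref{eq:delta}) and Lemmas \ref{lemma:wfpepieb}/\ref{lemma:wofpepieb} or Proposition \ref{proposition:errorboundpartial}, control the remaining terms with Lemmas \ref{lemma:barvgevi} and \ref{lemma:normbarvdiffvi}, and then substitute $\hat{\epsilon}$ and $\delta$ to regroup into $C_{1},C_{2},C_{3}$. The only (harmless) difference is that you bound $\left\vert V_I^{\ast}(s) - \bar{V}^{\max}_{s} \right\vert$ by sandwiching $\bar{V}^{\max}_{s}$ in $\left[ V_I^{\ast}(s), V_I^{\ast}(s) + \frac{\gamma R_{\max}\kappa_{1}}{(1-\gamma)^{2}} \right]$ in one shot, whereas the paper splits into the two cases $\bar{V}^{\pi^{\ast}}(s) \gtrless \bar{V}^{\pi^{\ast}_{I}}(s)$ and bounds each separately; the two arguments are equivalent and yield the identical constants.
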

See Appendix \ref{proof:theorem:errorboundfinal} for the detailed proof.
\begin{remark}
\label{remark:errorboundfinal1}
The bound given by Theorem \ref{theorem:errorboundfinal} is similar to the one of centralized API as in Proposition \ref{proposition:BertsekasProp2.4.3} and inversely proportional to the heterogeneity level $\kappa_{1}$ and $\kappa_{2}$, which explicitly unveils the impact of the data heterogeneity. The second term in (\ref{eq:theorem:errorboundfinal1}) stems from the difference between $\left\Vert \bar{V}^{\pi^{t}} - \bar{V}^{\pi^{\ast}} \right\Vert$ and $\left\Vert V^{\pi^{t}}_{I} - V_{I}^{\ast} \right\Vert$. Although the imaginary MDP enables us to analyze the theoretical properties of FAPI and shows the error bound in terms of $V_{I}^{\ast}$, $\bar{V}^{\pi^{\ast}}$ is the actual target (refer to (\ref{eq:4})) that FAPI wants to achieve. Fortunately, this bound is still 
useful, since (\ref{eq:theorem:errorboundfinal1}) is dominated by its first term. To this end, the optimal policy in the imaginary MDP is a good estimation of the optimal policy for (\ref{eq:4}) unless there is a bound sharper than $\dot{\delta}$. Again, the client selection scheme is the key to reducing error.
\end{remark}

\subsection{Impact of Aggregation Error \label{subsec:aggregation_error}}

We have defined three terms to quantify the impact of policy aggregation and value function aggregation, i.e., $\tilde{\varepsilon}_{\theta}$, $\bar{\varepsilon}_{\theta}$ and $\bar{\varepsilon}_{w}$. To further investigate how these errors affect the convergence of FAPI, there are two possible approaches: 1) Performing a general analysis with a few standard assumptions in the convex optimization literature; and 2) Carrying out the analysis with a specific neural network parameterization to gain insight into network configuration that can affect the approximation error. In light of the recent breakthrough in overparameterized neural networks \citep{pmlr-v97-arora19a,NEURIPS2019_98baeb82,wang2019neural,NEURIPS2019_227e072d}, we employ the second strategy. To that end, we analyze the impact of aggregation error with the following two-layer ReLU-activated neural networks to parameterize the state value function and policy, respectively, as
\begin{alignat}{1}
u_{w}(s) = \frac{1}{\sqrt{m}} \sum_{i}^{m} b_{i} \cdot \sigma(w_{i}^{T} (s) ), \label{eq:nn1} \\
f_{\theta}(s,a) = \frac{1}{\sqrt{m}} \sum_{i}^{m} b_{i} \cdot \sigma(\theta_{i}^{T} (s,a) ), \label{eq:nn2}
\end{alignat}
where $m$ is the width of the network, $\theta = (\theta_{i}, \dots \theta_{m})^{T} \in \mathcal{R}^{m (d_{s}+d_{a})}$, $w = (w_{i}, \dots w_{m})^{T} \in \mathcal{R}^{m d_{s}}$ denotes the input weights, and $b_{i} \in \{-1, 1\}, \forall i \in [m]$ is the output weight. Without loss of generality, we assume that $a \in \mathbb{R}^{d_{a}}, s \in \mathbb{R}^{d_{s}}$, $\left\Vert (s,a) \right\Vert_{2} \le 1$, and denote $d = d_{s} + d_{a}$. Given arbitrary $\hat{R}_{\vartheta} < \infty$, we consider the following parameter initialization, where $\vartheta$ is a placeholder for $\theta$ and $w$:
\begin{alignat}{1}
\mathbb{E}_{\text{init}} \left[ \vartheta^{0}_{i,j} \right] = 0, \mathbb{E}_{\text{init}} \left[ \left( \vartheta^{0}_{i,j} \right)^{2} \right] = \frac{1}{d \cdot m}, \forall i \in [m], j \in [d], \nonumber \\
b_{i} \thicksim \text{Unif}(\{-1,1\}), \mathbb{E}_{\text{init}} \left[ 
\left\Vert \vartheta^{0}_{i} \right\Vert_{2}^{-2} \right] < \infty, \left\Vert \vartheta^{0}_{i} \right\Vert_{2} \le \hat{R}_{\vartheta}, \forall i \in [m]. \label{eq:initialization}
\end{alignat}
In other words, $0 < \left\Vert \vartheta^{0}_{i} \right\Vert_{2} \le \hat{R}_{\vartheta}, \forall i \in [m]$. The output weights $b_{i}, \forall i \in [m]$ are fixed. The input weights $\vartheta_{i}, \forall i \in [m]$ are projected into a ball centered at the initial parameter, i.e., $\theta \in \mathcal{B}^{0}_{\mathcal{R}_{\theta}} = \left\{ \theta: \left\Vert \theta - \theta^{0} \right\Vert_{2} \le \mathcal{R}_{\theta} \right\}, w \in \mathcal{B}^{0}_{\mathcal{R}_{w}} = \left\{ w: \left\Vert w - w^{0} \right\Vert_{2} \le \mathcal{R}_{w} \right\}$. Then, the state value function and Softmax policy are parameterized by
\begin{alignat}{1}
V^{t}(s) = u_{w^{t}}(s), \pi^{t}(a \vert s) = \frac{\exp(f_{\theta^{t}}(s,a))}{\sum_{a^{\prime} \in \mathcal{A}} \exp(f_{\theta^{t}}(s,a^{\prime}))}, \forall a \in \mathcal{A}, s \in \mathcal{S}. \label{eq:policy_parameterization}
\end{alignat}
The following lemma quantifies the aggregation error under the above setting.

\begin{lemma}
\label{lemma:aggregation_error}
By utilizing the initialization scheme in (\ref{eq:initialization}), the policy parameterization in (\ref{eq:policy_parameterization}), and the neural network parameterization in (\ref{eq:nn1}) and (\ref{eq:nn2}), we have
\begin{alignat}{1}
\mathbb{E}_{\text{init}} \left[ \bar{\varepsilon}_{w} \right] &= \mathcal{O} \left( R_{w}^{6/5} m^{-1/10} \hat{R}_{w}^{2/5} \right), \label{eq:lemma:aggregation_error_first} \\
\mathbb{E}_{\text{init}} \left[ \bar{\varepsilon}_{\theta} \right] &= \mathcal{O}\left( R_{\theta}^{1/2} \right), \label{eq:lemma:aggregation_error_second} \\
\mathbb{E}_{\text{init}} \left[ \tilde{\varepsilon}_{\theta} \right] &= \mathcal{O}\left( R_{\theta}^{1/2} \right). \label{eq:lemma:aggregation_error_third}
\end{alignat}
\end{lemma}
See Appendix \ref{proof:lemma:aggregation_error} for the detailed proof.
\begin{remark}
\label{remark:aggregation_error}
Lemma \ref{lemma:aggregation_error} indicates that the aggregation error is determined by the neural network parameterization ($m$) and optimization method ($R_{\theta}$ and $R_{w}$). Moreover, it can be observed from the proof that the error of value function aggregation is minimized when using linear function approximation, e.g., linear regression and tabular implementation, and the error of policy aggregation is minimized when using log-linear policy parameterization, e.g., Softmax policy with linear regression. In other words, there is a trade-off between training (non-linearity) and aggregation error (linearity). A non-linear function approximation can fit the training data well, but suffers from high aggregation error. In contrast, the linear function approximation may not be able to solve complex problems, but it does not introduce aggregation error.
\end{remark}

\subsection{Connection to Centralized Learning \label{subsec:connections}}

When the environments are homogeneous, the policy is log-linear w.r.t. parameters, and the value function is linear w.r.t. parameters, the learning process of FAPI will be equivalent to learning from N copies of the same environment (that is identical to the imaginary MDP) with federated policy evaluation. Under such circumstances, Theorem \ref{theorem:errorboundfinal} degenerates to that for centralized learning (\ref{eq:proposition:BertsekasProp2.4.3}) \citep{bertsekas2022abstract} with the same error bound, i.e., $\frac{\bar{\epsilon} - 2 \gamma \bar{\delta}}{(1 - \gamma)^2}$, as shown in the following proposition.

\begin{proposition}
\label{proposition:homoerrorboundfinal}
With homogeneous environments and federated policy evaluation, the error bound for partial/full client participation is
\begin{alignat}{1}
\limsup_{t\to\infty} \left\Vert \bar{V}^{\pi^{t}} - \bar{V}^{\pi^{\ast}} \right\Vert & \le \frac{\hat{\varepsilon}_{\theta} \sqrt{\left\vert \mathcal{A} \right\vert} R_{\max}}{\left( 1 - \gamma \right)^{3}} + \frac{2 \gamma \bar{\varepsilon}_{w}}{\left( 1 - \gamma \right)^{2}} + \frac{\bar{\epsilon} + 2 \gamma \bar{\delta}}{(1 - \gamma)^2}, \nonumber
\end{alignat}
where $\hat{\varepsilon}_{\theta}$ is equal to $\bar{\varepsilon}_{\theta}$ and $\tilde{\varepsilon}_{\theta}$ for full participation and partial participation, respectively.
\end{proposition}
See Appendix \ref{proof:proposition:homoerrorboundfinal} for the detailed proof. As discussed in Section \ref{subsec:aggregation_error}, $\hat{\varepsilon}_{\theta} = 0$ when the policy is log-linear w.r.t. parameters, and $\bar{\varepsilon}_{w} = 0$ when the value function is linear w.r.t. parameters.

\section{Federated Policy Optimization with Heterogeneity-aware Client Selection\label{sec:ebalgorithm}}

In this section, we propose a federated policy optimization algorithm based on the discussions in Section \ref{sec:ebFAPI}. The pseudocode for the proposed Federated Policy Optimization with Heterogeneity-aware Client Selection (FedPOHCS) is illustrated in Algorithm \ref{alg:FedPOHCS}.

\subsection{Client Selection Metric}

As characterized by Remark \ref{remark:errorboundpartial}, a proper client selection method should be able to capture the heterogeneity level $\kappa_{1}/\kappa_{2}$ of the selected clients. In general, the smaller the difference between the transition probability of selected clients and that of the imaginary MDP, the better bound we may obtain. As there are many methods to approximate and represent the transition probability, we consider it as an implementation consideration and leave it to the applications. The use of transition probability makes the proposed client selection scheme a model-based framework \citep{DBLP:journals/corr/abs-2006-16712}.

Next, we make several approximations to the theoretically justified client selection metric, i.e., the level of heterogeneity of the $n$-th client $\kappa_{n,I}$ defined in Definition \ref{definition:impacthete}. It is hard to compute $\kappa_{n,I}$ by finding the maximum value over all states since the number of samples is finite in practice and this metric may suffer from high variance. Therefore, we use the current global policy to compute the metric and weight each transition with the stationary (or steady-state) distribution $d_{\pi,P_{n}}(s)$ of the entry state $s$ \citep{NEURIPS2020_69bfa2aa}. Moreover, in the proofs of the lemmas and propositions, we relax the inequalities by replacing all value functions with their upper bound $\frac{R_{\max}}{1-\gamma}$. To further improve the approximation, we scale each transition with the value (advantage or Q-value) of each state-action pair $(s^{\prime}, a)$. Then, for each tuple of $(s, s^{\prime},a)$ in the n-th client, we have
\begin{alignat}{1}
\hat{\kappa}_{n,I}(s,s^{\prime},a) = \left\vert d_{\pi,P_{n}}(s) P_{n}^{\pi}(s^{\prime} | s) A_{\pi,P_{n}}(s^{\prime},a) - \sum_{j=1}^{N} q_{j} d_{\pi,P_{j}}(s) P_{j}^{\pi}(s^{\prime} | s) A_{\pi,P_{j}}(s^{\prime},a) \right\vert. \nonumber
\end{alignat}
However, it is too expensive to compute all $\vert \mathcal{S}\vert \times \vert \mathcal{S} \vert \times \vert \mathcal{A} \vert$ elements for each client. Since the stationary distribution is a fixed-point distribution, i.e., $\sum_{s} d_{\pi,P_{n}}(s) P_{n}^{\pi}(s^{\prime} | s) = d_{\pi,P_{n}}(s^{\prime})$, we can sum over the first dimension to simplify the metric and reduce the amount of computation. In other words, we want to compute
\begin{alignat}{1}
\hat{\kappa}_{n,I}(s^{\prime},a) & = \left\vert \sum_{s} d_{\pi,P_{n}}(s) P_{n}^{\pi}(s^{\prime} | s) A_{\pi,P_{n}}(s^{\prime},a) - \sum_{j=1}^{N} q_{j} \sum_{s} d_{\pi,P_{j}}(s) P_{j}^{\pi}(s^{\prime} | s) A_{\pi,P_{j}}(s^{\prime},a) \right\vert \nonumber \\
& = \left\vert d_{\pi,P_{n}}(s^{\prime}) A_{\pi,P_{n}}(s^{\prime},a) - \sum_{j=1}^{N} q_{j} d_{\pi,P_{j}}(s^{\prime}) A_{\pi,P_{j}}(s^{\prime},a) \right\vert. \nonumber
\end{alignat}

For compact notation, we define $\mathbf{D}_{\pi, P_{n}}$ and $\mathbf{D}_{\pi, \mu, P_{n}, \gamma}$  as two $\left|\mathcal{S}\right|\times\left|\mathcal{S}\right|$ diagonal matrices whose $i$-th diagonal entries are the stationary distribution and discounted visitation frequencies of state $s_{i}$, respectively. We denote $\mathbf{\Pi}_{\pi}$ as a $\left|\mathcal{S}\right|\times\left|\mathcal{A}\right|$ matrix whose $(i,j)$th entry is $\pi(a_{j}\vert s_{i})$, and $\mathbf{A}_{\pi,P_{n}}$ as a $\left|\mathcal{S}\right|\times\left|\mathcal{A}\right|$ matrix whose $(i,j)$th entry is the advantage of action $a_{j}$ on state $s_{i}$, i.e., a matrix representation of the advantage function \citep{10.5555/645531.656005}. Then, we can rewrite the approximated level of heterogeneity $\sum_{s^{\prime},a} \hat{\kappa}_{n,I}(s^{\prime},a)^{2}$ as $\hat{\kappa}_{n,I} = \left\Vert \sum_{k=1}^{N} q_{k} \mathbf{D}_{\pi,P_{k}} \mathbf{A}_{\pi,P_{k}} - \mathbf{D}_{\pi,P_{n}} \mathbf{A}_{\pi,P_{n}} \right\Vert_{F}$, where we use the Frobenius norm to make the metric more sensitive to entries with large difference.

However, the metric $\hat{\kappa}_{n,I}$ has an obvious drawback, i.e., the algorithm will keep selecting clients that are closer to the imaginary MDP even if those clients are sufficiently trained. As a solution, we propose to consider the learning potential of local policies. In fact, the learning step size of many policy optimization algorithms depends on the magnitude of value functions. For example, the advantage function (or Q-value) can affect the magnitude of policy gradient, and the Q-value can affect the updates in Temporal Difference (TD) methods, including Q-learning and SARSA. This observation motivates us to use the magnitude of the advantage function, i.e., $\left\Vert \mathbf{D}_{\pi,P_{n}} \mathbf{A}_{\pi,P_{n}} \right\Vert_{F}$, to measure how much the $n$-th client can learn starting from the current global policy. Finally, we define the selection metric of FedPOHCS as
\begin{alignat}{1}
\Delta_{n} = \left\Vert \mathbf{D}_{\pi,P_{n}} \mathbf{A}_{\pi,P_{n}} \right\Vert_{F} - \left\Vert \sum_{k=1}^{N} q_{k} \mathbf{D}_{\pi,P_{k}} \mathbf{A}_{\pi,P_{k}} - \mathbf{D}_{\pi,P_{n}} \mathbf{A}_{\pi,P_{n}} \right\Vert_{F}, \; \forall n = 1,\cdots,N. \label{eq:DeltaN}
\end{alignat}
As shown in the first phase (lines 3 - 9) of Algorithm \ref{alg:FedPOHCS}, the server samples a candidate set with $d$ clients and computes $\Delta_{n}$ for all clients in this set. Then, the server selects $K$ clients with the largest $\Delta_{n}$ from the candidate set and starts the second phase (lines 10 - 15) of Algorithm \ref{alg:FedPOHCS}.

As a client selection metric, $\Delta_{n}$ is better than the original heterogeneity measurement $\kappa_{n,I}$, because it helps skip clients that can not sufficiently contribute to the global objective (\ref{eq:4}). This helps allocate resources to clients whose information has not been fully learned instead of investing all resources into those who are just closer to the imaginary MDP throughout the learning process. Similar measurements of heterogeneity and learning potential can be found in FedKL \citep{10038492}, where states are weighted by the discounted visitation frequency instead of the stationary distribution.

\subsection{Implementation}

To compute $\Delta_{n}$, we adopt the tabular maximum likelihood model \citep{DBLP:journals/corr/abs-2006-16712,JMLR:v10:strehl09a,ornik2021learning,STREHL20081309} in the implementation of FedPOHCS. Given a set of trajectories, let $C_{n}(s,a)$ denote the number of times action $a$ was taken under state $s$, $C_{n}(s,a,s^\prime)$ represent the number of times the MDP transited from state $s$ to $s\prime$ after taking action $a$, and $r_{n}$ denote the corresponding sequence of reward received. Generally speaking, the more trajectories we have, the more accurate the modeling will be. For Mountain Cars and Hoppers, the number of trajectories is 200. For HongKongOSMs, the number of trajectories is 1000. We can estimate the transition probability $P_{n}$ of the $n$-th MDP and the reward function $\mathcal{R}$ by
\begin{alignat}{1}
\hat{P}_{n} = \frac{C_{n}(s,a,s^\prime)}{C_{n}(s,a)}, \quad \hat{\mathcal{R}}_{n} = \frac{1}{C_{n}(s,a)} \sum_{i=1}^{C_{n}(s,a)} r_{n}[i]. \nonumber
\end{alignat}
We estimate the state visitation frequency matrix $\hat{\mathbf{D}}_{\pi, \mu, P_{n}}$ in place of $\mathbf{D}_{\pi, P_{n}}$ as follows \citep{ziebart2008maximum}:
\begin{alignat}{1}
D_{n,s^\prime,0} & = \mu(s^{\prime}), \quad D_{n,s^\prime,t+1} = \sum_{s,a}D_{n,s,t} \pi(a \vert s)\hat{P}_{n}(s^{\prime} \vert s, a), \quad D_{n,s^{\prime}} = \sum_{t}D_{n,s^{\prime},t}, \nonumber
\end{alignat}
where the time horizon $t$ is a hyperparameter. Then, we diagonalize the vector $D_{n,s^\prime}$ to obtain the state visitation frequency matrix $\hat{\mathbf{D}}_{\pi, \mu, P_{n}}$. Last, we estimate the advantage function by Generalized Advantage Estimation (GAE) \citep{DBLP:journals/corr/SchulmanMLJA15}.

In contrast to client selection schemes that update the selection metrics together with the models at the end of each round, we employ an extra round for metric computation (lines 3 - 9) in Algorithm \ref{alg:FedPOHCS}. A similar selection scheme was utilized by a biased client selection method called Power-of-Choice \citep{pmlr-v151-jee-cho22a}.

The choice of the local learner is optional as our discussion is general and does not rely on any particular implementation of policy evaluation and policy improvement. In particular, we assume all clients adopt the Proximal Policy Optimization (PPO) algorithm proposed by \cite{DBLP:journals/corr/SchulmanWDRK17}, which is a PG method motivated by TRPO.

\begin{algorithm}
\caption{\label{alg:FedPOHCS}FedPOHCS}
\begin{algorithmic}[1]
\STATE{\textbf{Input:} the initial estimation of the transition probabilities $\hat{P}_{k}$ and reward functions $\hat{\mathcal{R}}_{k}$, d, T, K.}
	\FOR{t = 0,1,...,T}
             \STATE{Sample the candidate client set $\mathcal{C}$ of $d$ ($K \le d \le N$) clients without replacement.}
             \STATE{Synchronize the global policy $\pi^{t}$ to every selected client.}
            \FOR{$k \in \mathcal{C}$}
                \STATE{Compute the advantage function $\mathbf{A}_{\pi,P_{k}}$ and the state visitation frequency $\hat{\mathbf{D}}_{\pi^{t}, \mu, P_{k}}$.}
                \STATE{Upload $\mathbf{A}_{\pi,P_{k}}, \hat{\mathbf{D}}_{\pi^{t}, \mu, P_{k}}$ to the central server.}
            \ENDFOR
		\STATE{Compute $\Delta_{k}, \forall{k=1,...,d}$, Select $K$ clients based on $\Delta_{k}$ and replace $\mathcal{C}$ with these clients.}
		\FOR{$k \in \mathcal{C}$}
			\STATE{Local update of client policy $\pi_{k}^{t+1}$, transition probability $\hat{P}_{k}$, and reward function $\hat{\mathcal{R}}_{k}$.}
                \STATE{Upload $\pi_{k}^{t+1}$ to the central server.}
		\ENDFOR
		\STATE{The central server performs aggregation: $\pi^{t+1}(a \vert s) \leftarrow \sum_{k=1}^{K}q_{k}\pi_{k}^{t+1}(a \vert s), \forall s \in \mathcal{S}, a \in \mathcal{A}$.}
	\ENDFOR
\end{algorithmic}
\end{algorithm}

\subsection{Limitations of the Proposed Implementation}

Algorithm \ref{alg:FedPOHCS} utilizes a two-phase communication scheme which is not communication-efficient unless the convergence improvement obtained by client selection suppresses this cost. However, we note that such a communication cost can be removed by using outdated information to compute the selection metrics at the cost of accuracy as in \cite{pmlr-v151-jee-cho22a}. In particular, we can remove the first phase, and order selected clients to upload their local information (e.g., $\mathbf{A}_{\pi, P_{n}}$ and $\mathbf{D}_{\pi, \mu, P_{n}}$ matrices) when uploading their models at the end of each communication round. This communication-efficient variant saves a lot of communication overhead and has been shown to be effective by [1], i.e., with slightly worse performance. The performance of the one-phase scheme will be shown later in the experiment results.

Another limitation of Algorithm \ref{alg:FedPOHCS} is that it requires the clients to upload their state visitation frequencies $\hat{\mathbf{D}}_{\pi, \mu, P_{n}}$ and advantage functions $\mathbf{A}_{\pi, P_{n}}$ which may contain private information. This problem can be addressed by privacy protection methods, e.g., Homomorphic Encryption (HE) \citep{jiang2018secure}.

\section{Experiments\label{sec:Experiments}}

In this section, we introduce two federated environments for empirical evaluation and evaluate the proposed client selection method from different perspectives. All experimental results are reproducible and can be accessed from: https://github.com/ShiehShieh/FedPOHCS.

\subsection{Environments}

We evaluate the effectiveness of the proposed client selection algorithm on a federated version of the mountain car continuous control problem \citep{Moore90efficientmemory-based} and the Mujoco Hopper problem \citep{coulom2002reinforcement}. Furthermore, we utilize the Flow simulator \cite{pmlr-v87-vinitsky18a} and OpenStreetMap (OSM) \citep{OpenStreetMap, 9119753} dataset to create a series of traffic networks for autonomous vehicle training. We construct the federated environments as follows:

\textbf{Mountain Cars} consists of 60 equally weighted MountainCarContinuous-v0 environments developed by OpenAI Gym \citep{DBLP:journals/corr/BrockmanCPSSTZ16}. In each environment, a car aims to reach a hill. The episode terminates when the car reaches this hill or runs out of time. The environment consists of a 2-dimensional continuous state space and a 1-dimensional continuous action space. To introduce heterogeneity into the system, we assume that the engine of each car is different and the $n$-th car shifts the input action by $\theta_{n}$ on all states. To introduce a medium-level heterogeneity, the constant shift $\theta_{n}$ is uniformly sampled from $[-1.5, 1.5]$ and assigned to each environment at initialization. In fact, to make the experiments traceable, we set the constant shift to $\theta_{n} = -1.5 + \frac{n}{20}, n=1,\dots,60$. The intervals for low-level and high-level heterogeneity are $[-1.0, 1.0]$ and $[-2.0, 2.0]$, respectively.

\textbf{Hoppers} consists of 60 equally weighted Hopper-v3 environments developed by OpenAI Gym. The environment has an 11-dimensional continuous state space and a 3-dimensional continuous action space. We introduce the heterogeneity into this system by following \cite{pmlr-v151-jin22a}, i.e., the leg size is uniformly sampled from $[0.01, 0.07]$, $[0.01, 0.10]$, and $[0.01, 0.15]$ for low-level, medium-level, and high-level heterogeneity, respectively.

\textbf{HongKongOSMs} consists of 10 equally weighted traffic networks, each based on one of the OSM datasets as shown in Figure \ref{fig:osms}. Each traffic network contains one RL-controlled and 10 IDM-controlled (Intelligent Driver Model) vehicles. The 18-dimensional observation includes headway, speed, and positional information of visible neighborhoods of the RL-controlled vehicle, and only observations for vehicles running on the same and adjacent lanes are visible to the RL-controlled vehicle. The 2-dimensional action includes acceleration and lane-changing decisions. We adopted the Eclipse Simulation of Urban MObility (SUMO) simulator to conduct this experiment.

\begin{figure}[!t]
\centering
\subfloat[]{\includegraphics[width=0.19\columnwidth]{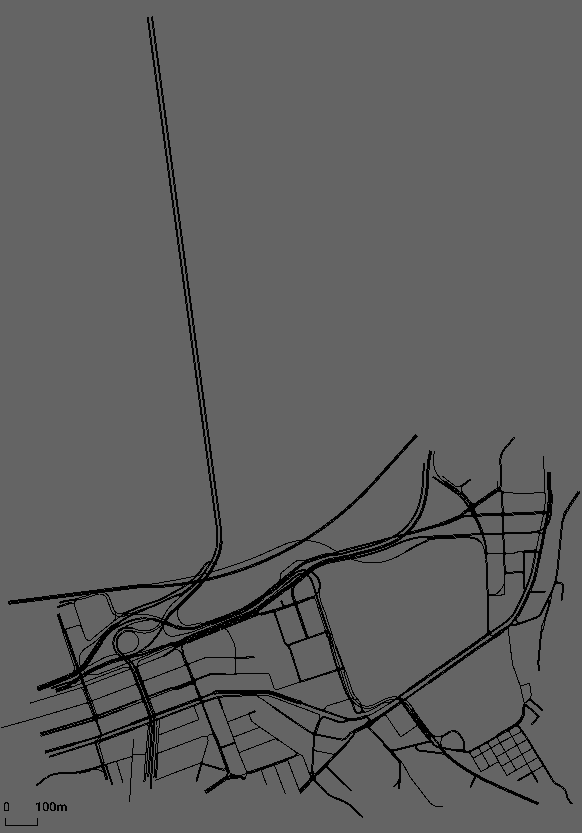}
\label{fig:causeway}}
\hfil
\subfloat[]{\includegraphics[width=0.19\columnwidth]{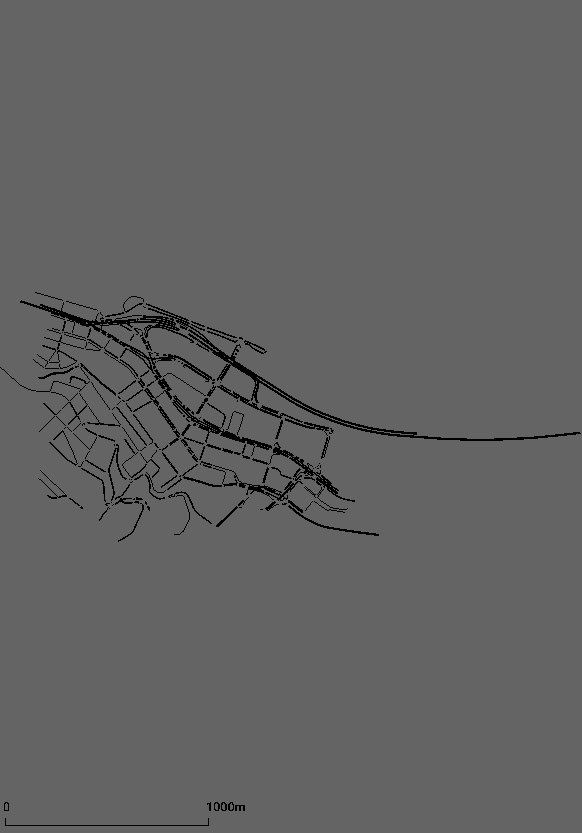}\label{fig:central}}
\hfil
\subfloat[]{\includegraphics[width=0.19\columnwidth]{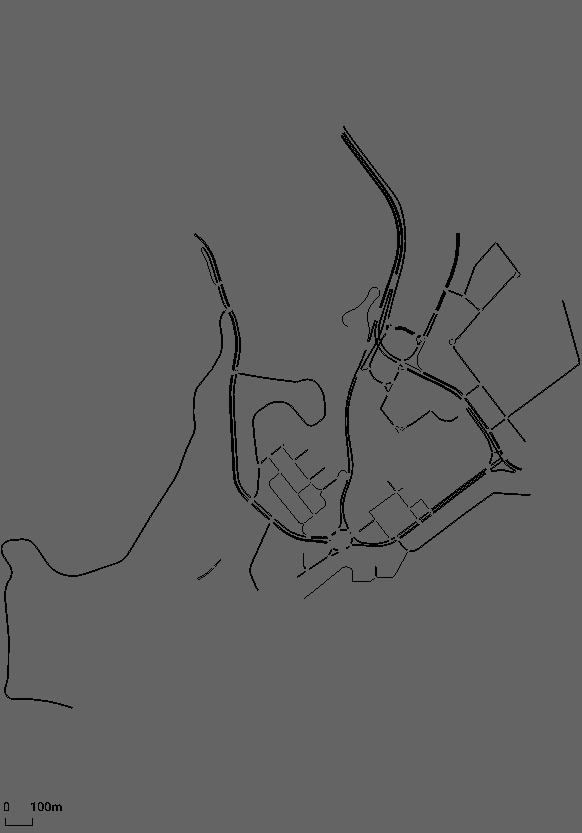}\label{fig:cw}}
\hfil
\subfloat[]{\includegraphics[width=0.19\columnwidth]{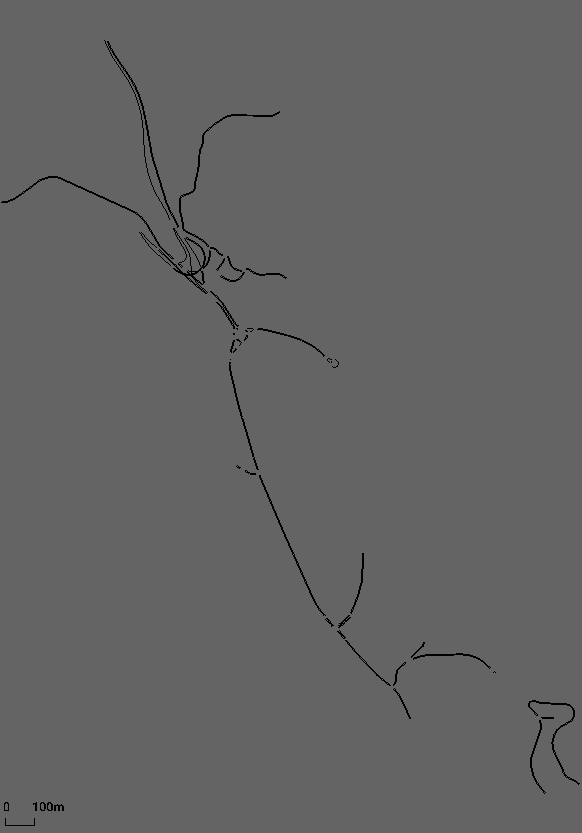}\label{fig:cwb}}
\hfil
\subfloat[]{\includegraphics[width=0.19\columnwidth]{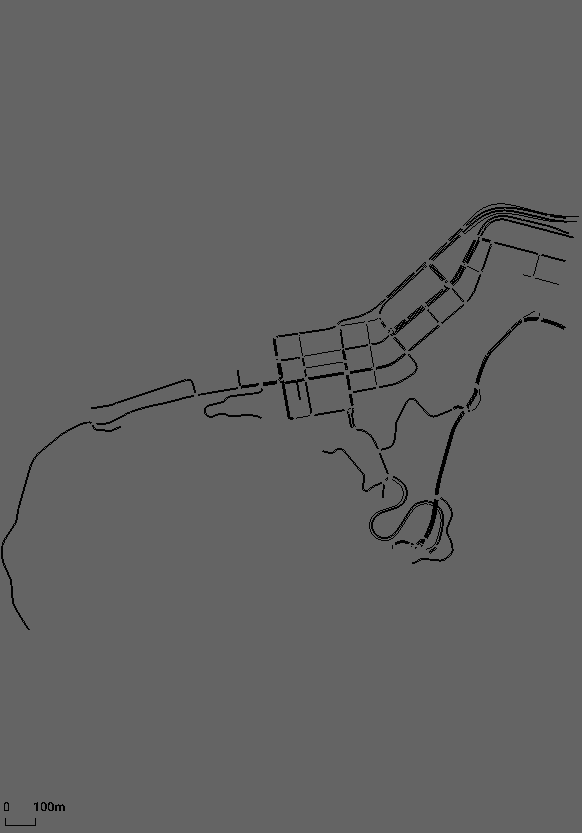}\label{fig:kennedy}}
\\
\subfloat[]{\includegraphics[width=0.19\columnwidth]{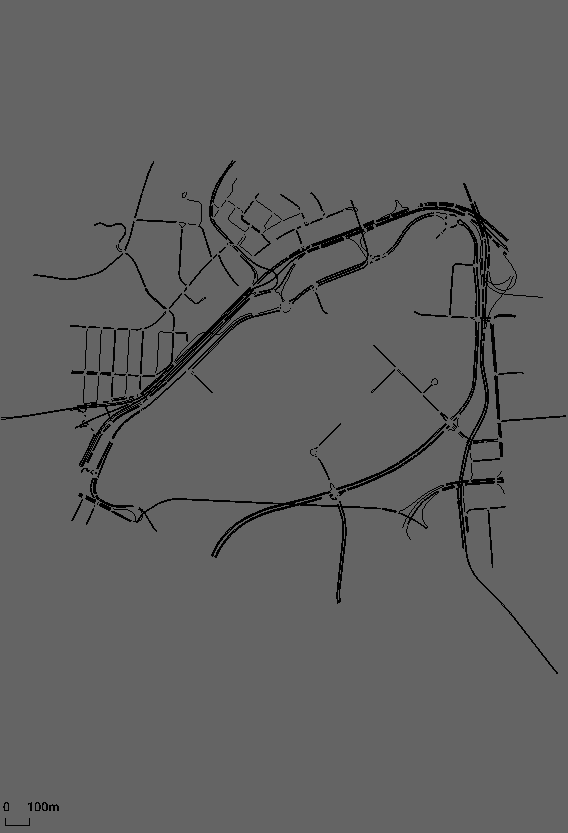}
\label{fig:kt}}
\hfil
\subfloat[]{\includegraphics[width=0.19\columnwidth]{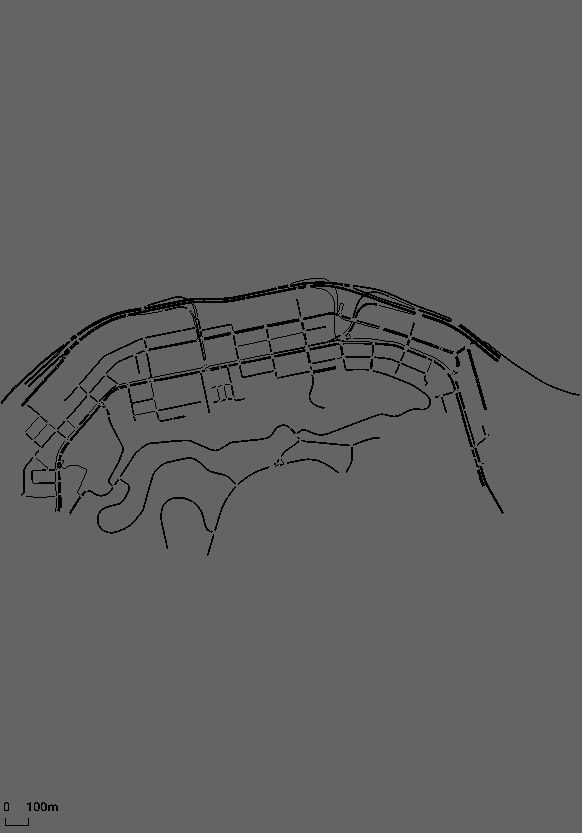}\label{fig:np}}
\hfil
\subfloat[]{\includegraphics[width=0.19\columnwidth]{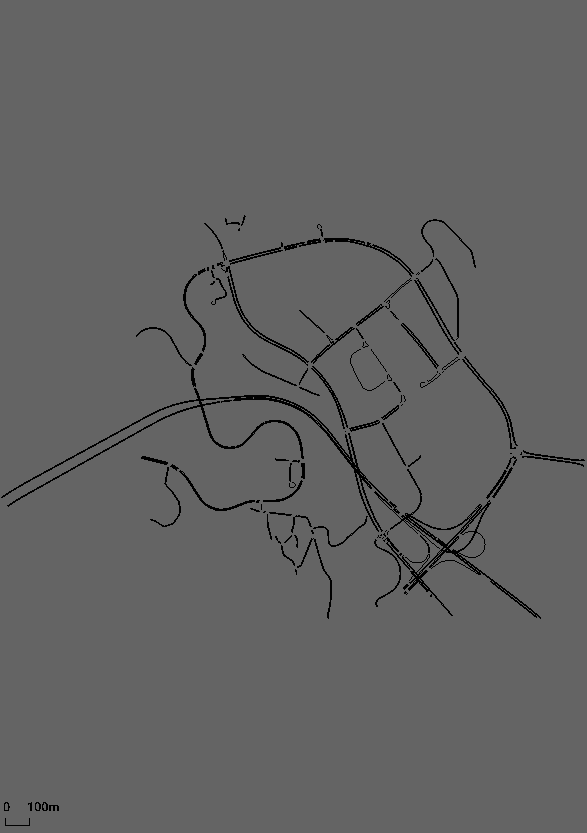}\label{fig:yt}}
\hfil
\subfloat[]{\includegraphics[width=0.19\columnwidth]{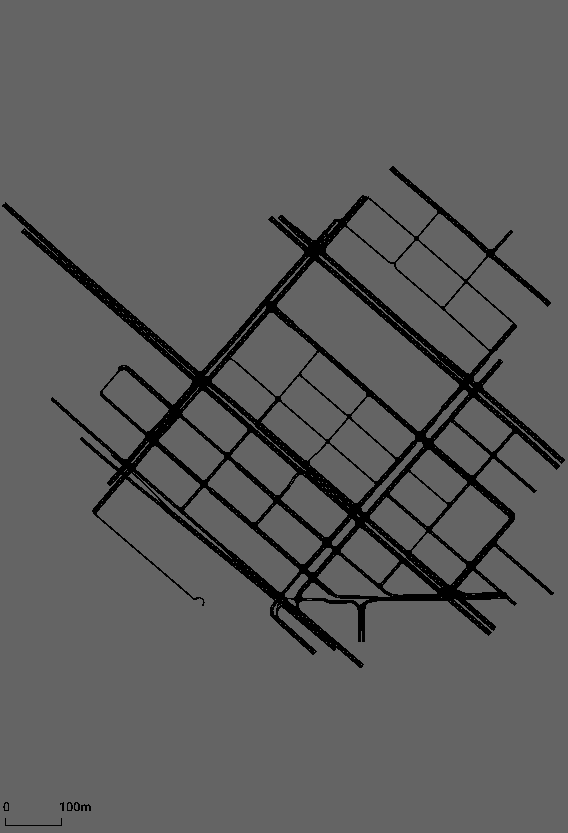}\label{fig:ssp}}
\hfil
\subfloat[]{\includegraphics[width=0.19\columnwidth]{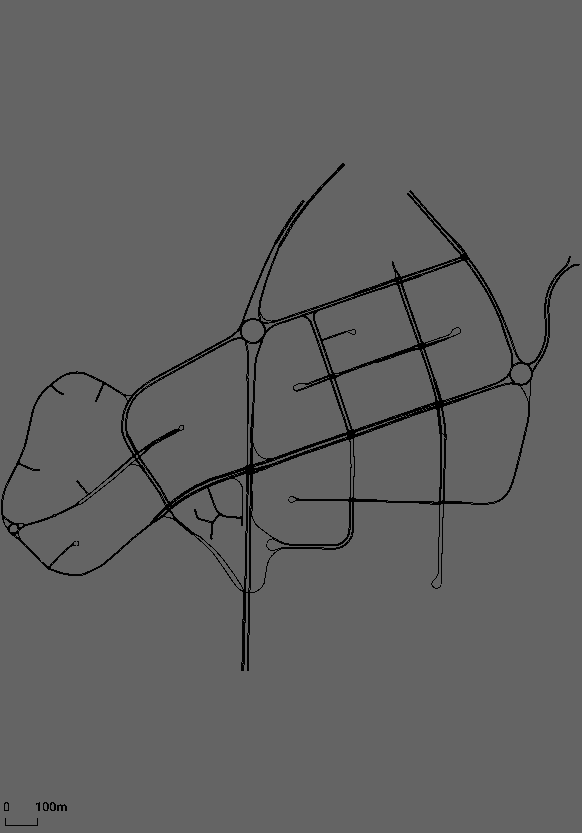}\label{fig:tko}}
\caption{OSM datasets of ten areas in Hong Kong. (a) Causeway; (b) Central; (c) Chai Wan; (d) Clear Water Bay; (e) Kennedy Town; (f) Kai Tak; (g) North Point; (h) Po Lam; (i) Sham Shui Po; (j) Tseung Kwan O.}
\label{fig:osms}
\end{figure}

\subsection{Experiment Settings}

We use neural networks to represent policies as in \cite{DBLP:journals/corr/SchulmanLMJA15,pmlr-v87-vinitsky18a}. Specifically, we use Multilayer Perceptrons (MLPs) with $\tanh$ non-linearity and hidden layers (64, 64). We use the SGD optimizer with a momentum of 0.9 and learning-rate decay of 0.98, 0.9, and 0.98 per round for Mountain Cars, Hoppers, and HongKongOSMs, respectively. Hyperparameters are carefully tuned so that they are near-optimal for each algorithm. See Appendix \ref{sec:additionexperimentdetails} for more experimental details.

We compare FedPOHCS with biased and unbiased client selection methods, including FedAvg (random selection), Power-of-Choice, and GradientNorm \citep{9292468,marnissi2021client,chen2022optimal}. FedAvg randomly selects K clients. Power-of-Choice utilizes the aforementioned two-phase scheme and selects candidate clients with the highest loss (or lowest advantages/values in case of RL problem). GradientNorm selects candidate clients with the largest gradient norm. Besides the client selection scheme, all algorithms follow the same procedure described at the beginning of Section \ref{subsec:FRL}. In particular, clients perform local training with the algorithm proposed by \cite{DBLP:journals/corr/SchulmanWDRK17}, with adaptive KL penalty term. At the end of every round, the server broadcasts the global policy to all clients, orders them to interact with their MDPs for several episodes (10 for Mountain Cars, 100 for Hoppers, and 20 for HongKongOSMs) and report the mean returns to evaluate the performance. Each experiment is averaged across three independent runs with different random seeds and parameter initializations, both of which are shared among all algorithms for a fair comparison. Confidence intervals are also reported.

\begin{figure}[ht!]
\centering
\includegraphics[width=0.6\columnwidth]{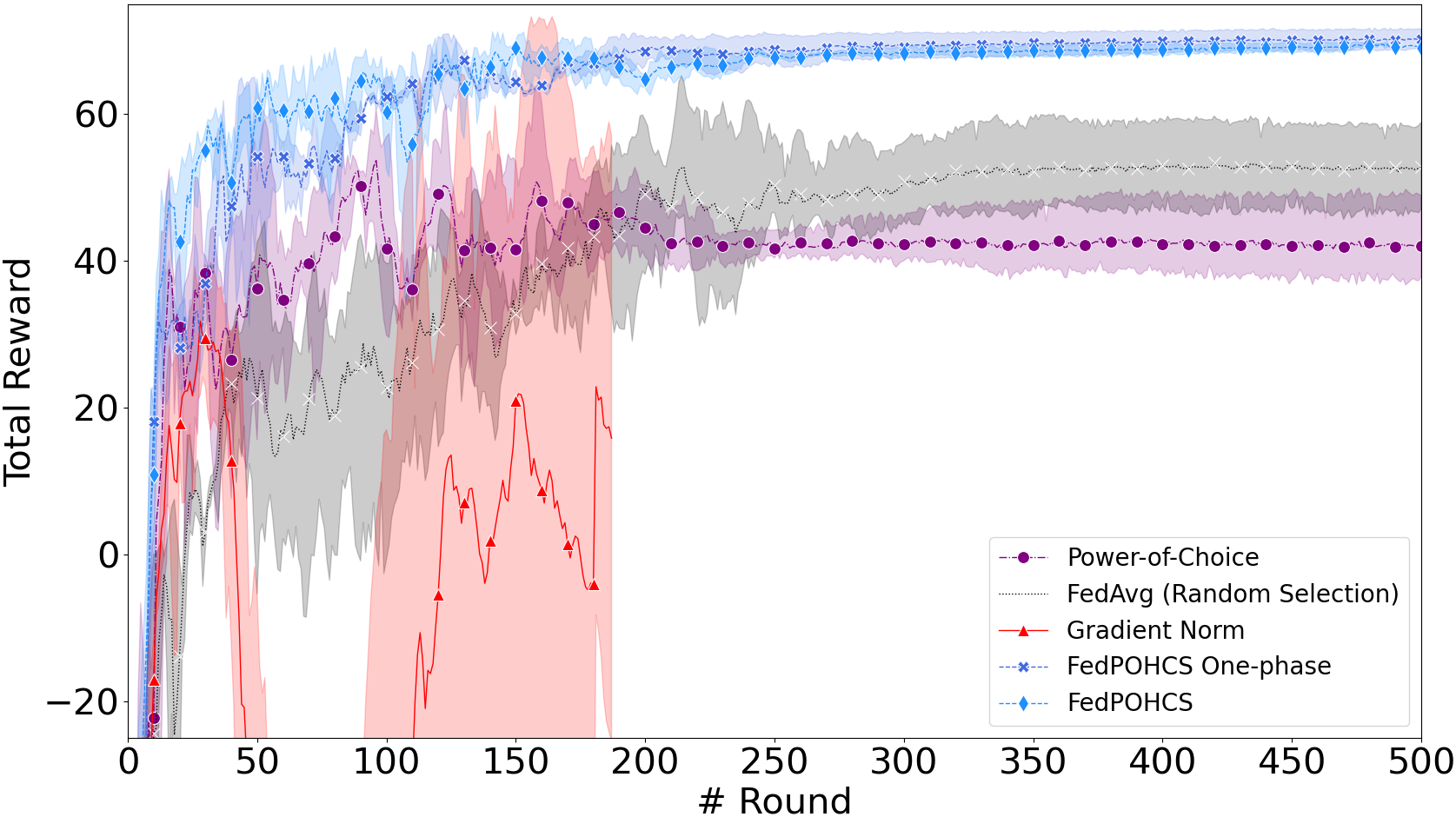}
\caption{Comparison of FedAvg, Power-of-Choice, FedPOHCS, and GradientNorm on Mountain Cars with a medium level of heterogeneity. For FedAvg, the learning rate is 0.005 and the KL target is 0.003. For Power-of-Choice, GradientNorm, and FedPOHCS, the learning rate is 1e-3, and the KL target is 0.003.}
\label{fig:mcc}
\end{figure}

\begin{figure}[ht!]
\centering
\includegraphics[width=0.6\linewidth]{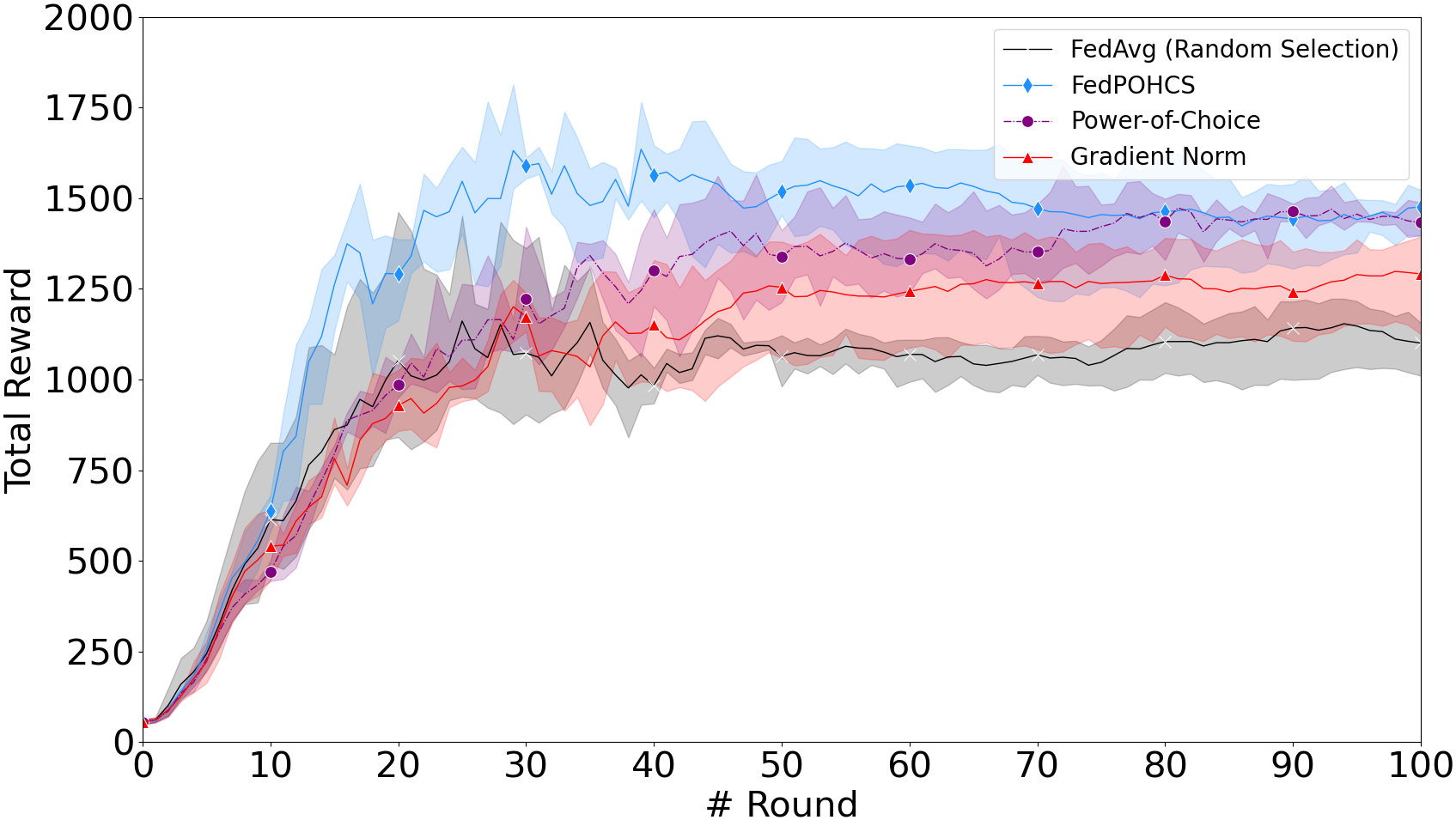}
\caption{Comparison of FedAvg, Power-of-Choice, FedPOHCS, and GradientNorm on Hoppers with a medium level of heterogeneity. For all algorithms, the learning rate is 0.03, the learning rate decay is 0.9, and the KL target is 0.003.}
\label{fig:hoppers}
\end{figure}

\begin{figure}[ht!]
\centering
\includegraphics[width=0.6\linewidth]{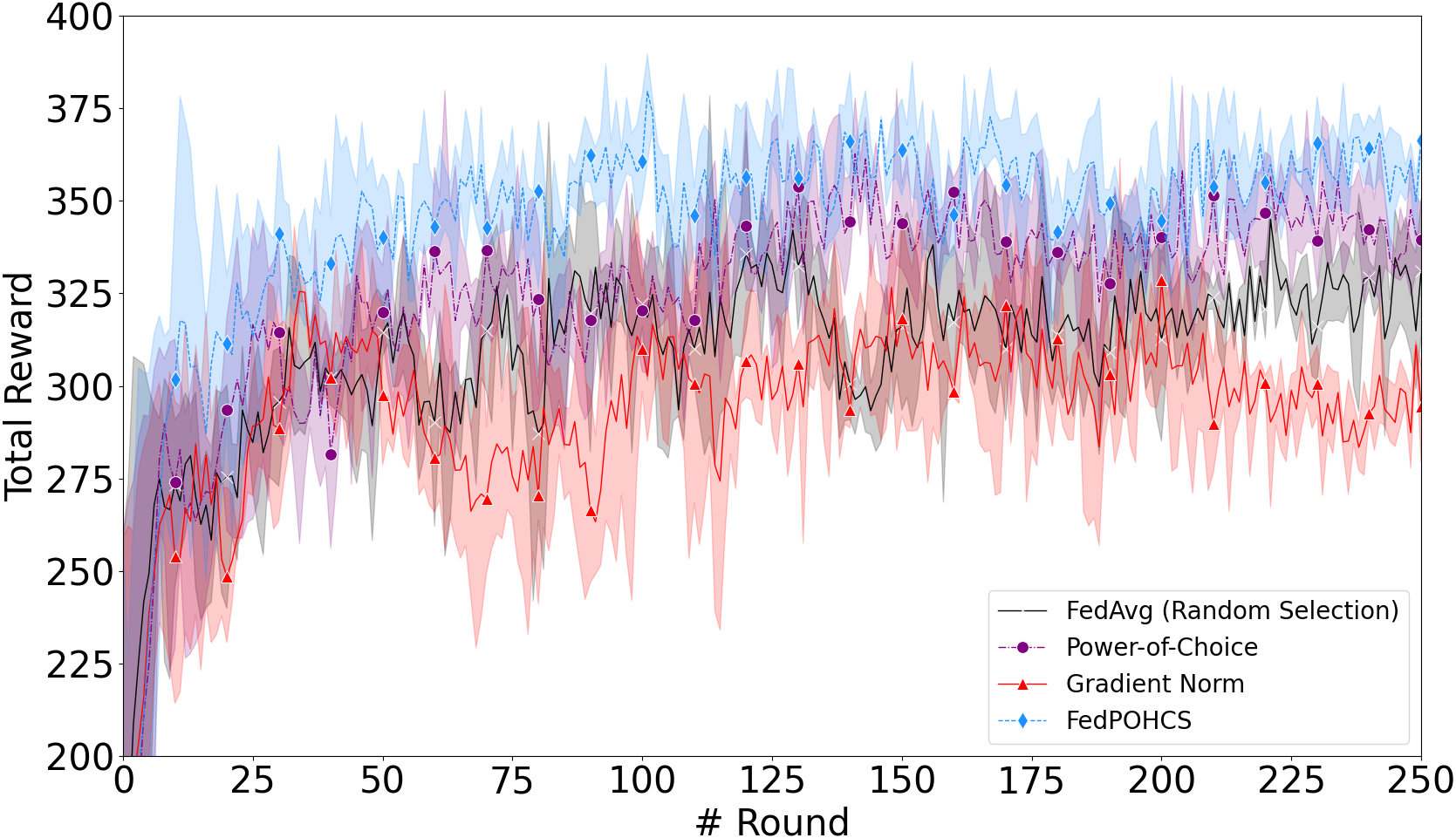}
\caption{Comparison of FedAvg, Power-of-Choice, FedPOHCS, and GradientNorm on HongKongOSMs. For all algorithms, the learning rate is 0.0001, the learning rate decay is 0.98, and the KL target is 0.0001.}
\label{fig:osm}
\end{figure}

\begin{figure}[ht!]
\centering
\includegraphics[width=0.6\columnwidth]{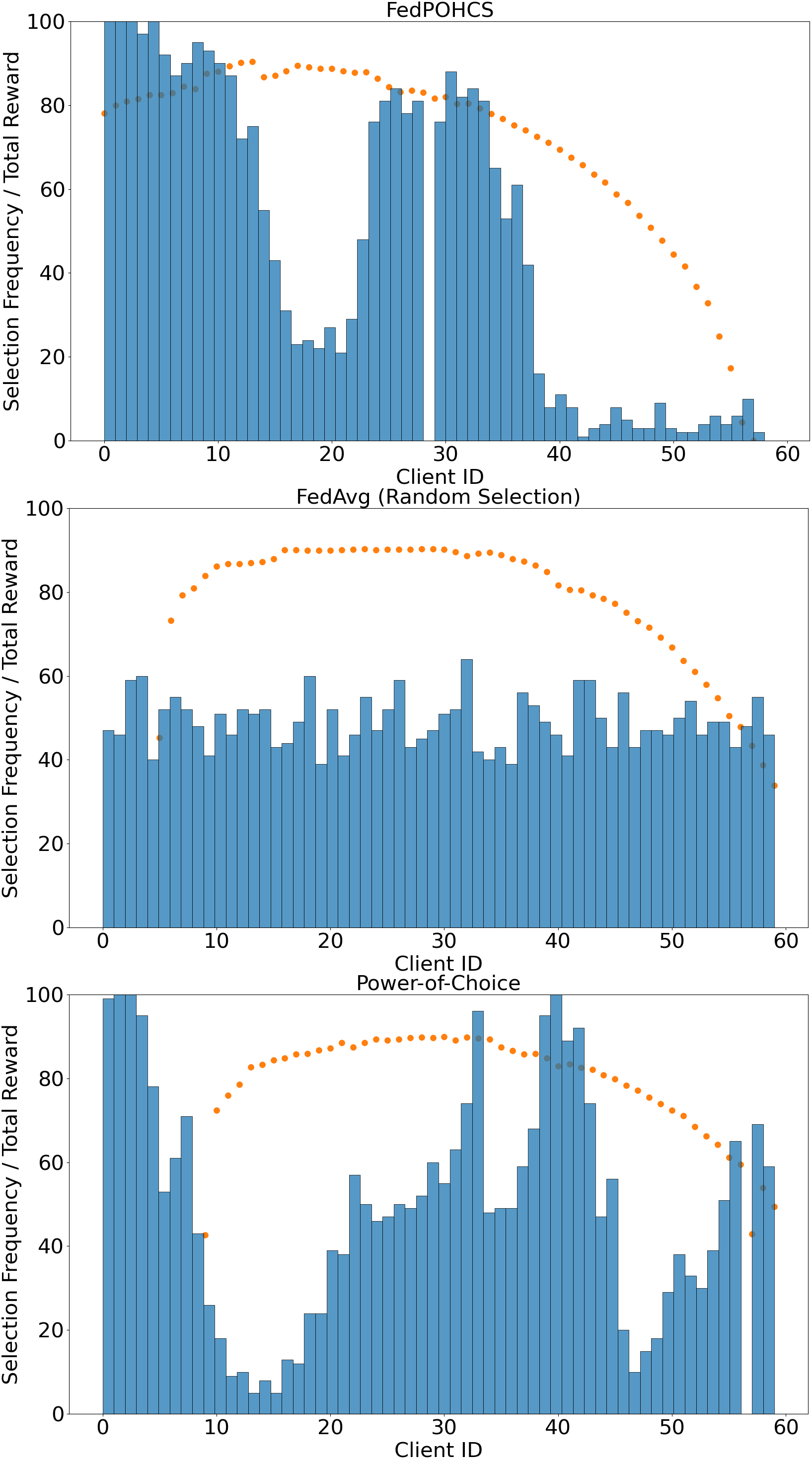}
\caption{The histogram represents the frequency of each client being selected. The scatter points denote the return obtained by the final policy from each client. (top) FedPOHCS, (middle) FedAvg, and (bottom) Powder-of-Choice.}
\label{fig:client_partitipant_ratio}
\end{figure}

\begin{figure}[ht!]
\centering
\subfloat[]{\includegraphics[width=0.6\columnwidth]{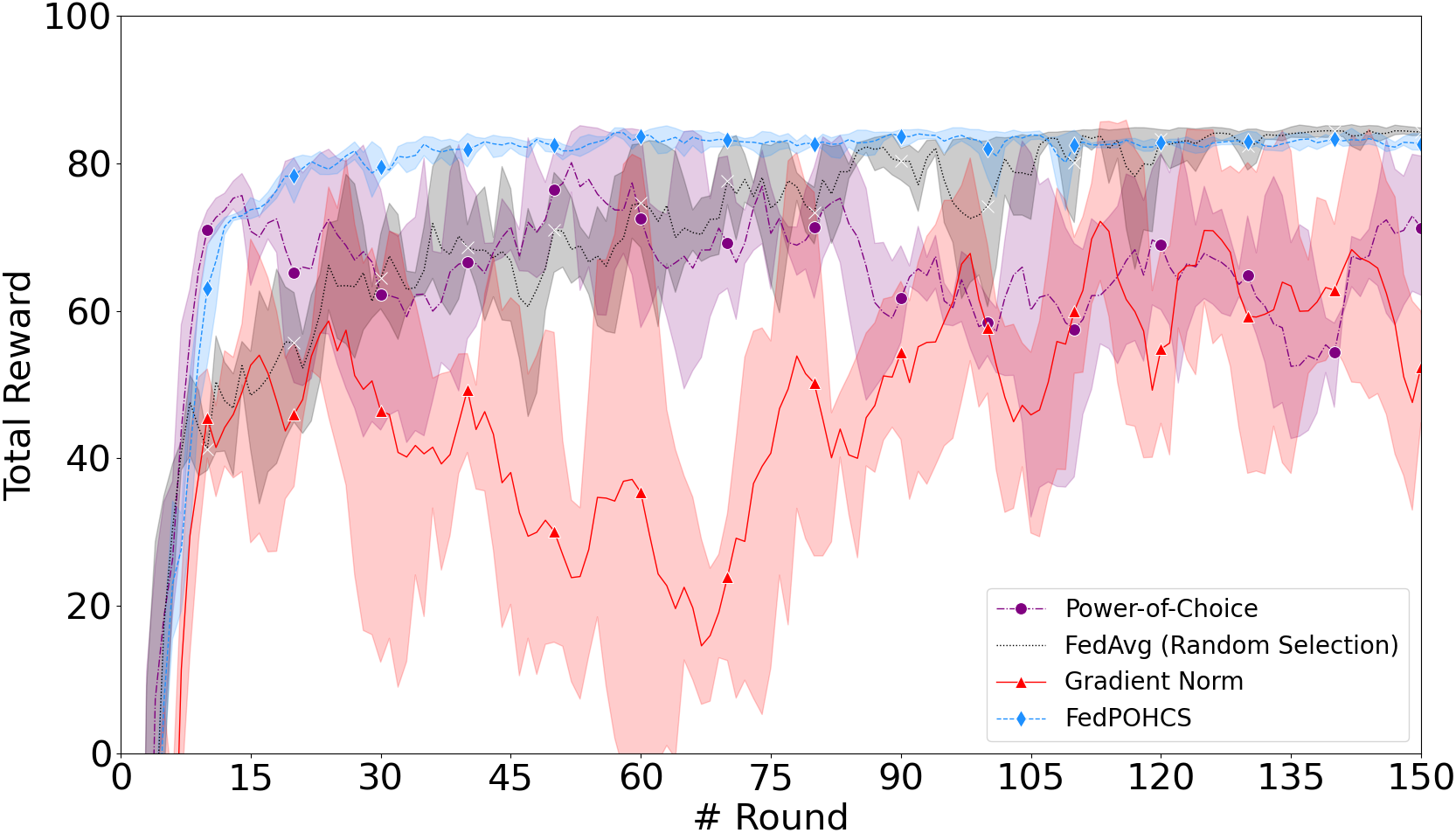}
\label{fig:mcc_low_level_hete}}
\hfil
\subfloat[]{\includegraphics[width=0.6\columnwidth]{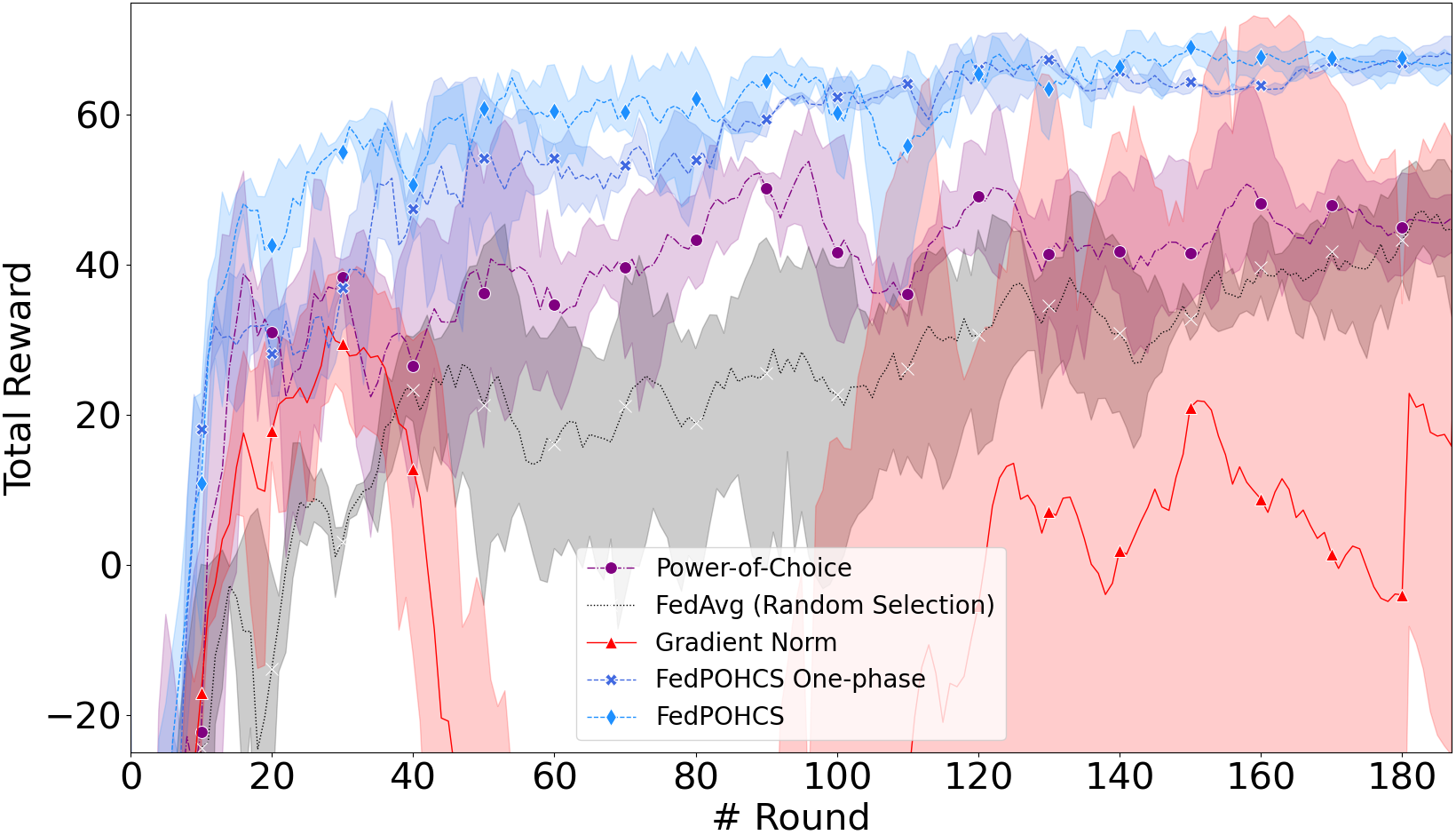}
\label{fig:mcc_medium_level_hete}}
\hfil
\subfloat[]{\includegraphics[width=0.6\columnwidth]{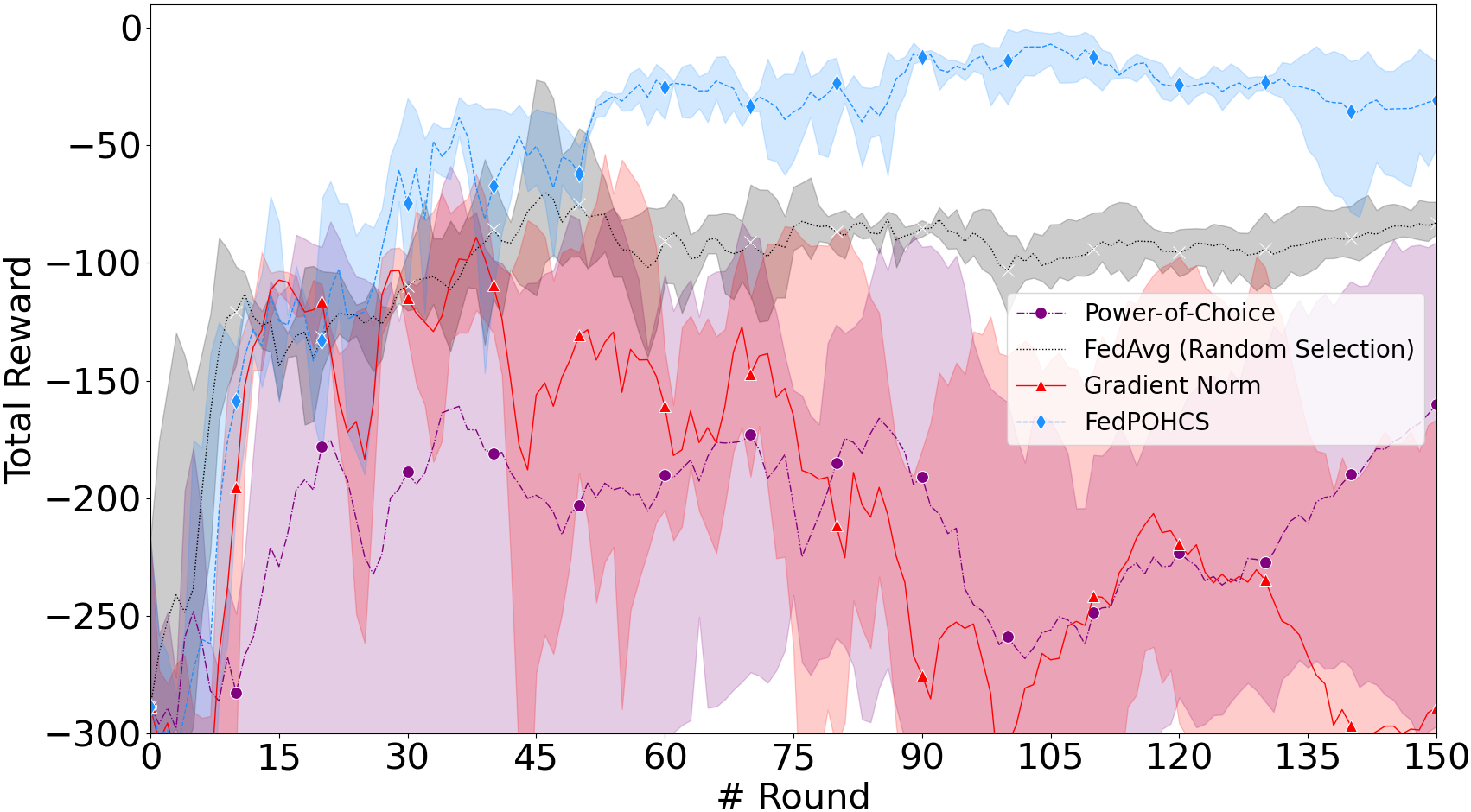}
\label{fig:mcc_high_level_hete}}
\caption{Comparison of FedAvg, Power-of-Choice, FedPOHCS and GradientNorm on Mountain Cars with low/medium/high level of heterogeneity.}
\label{fig:mcc_level_hete}
\end{figure}

\begin{figure}[ht!]
\centering
\subfloat[]{\includegraphics[width=0.6\columnwidth]{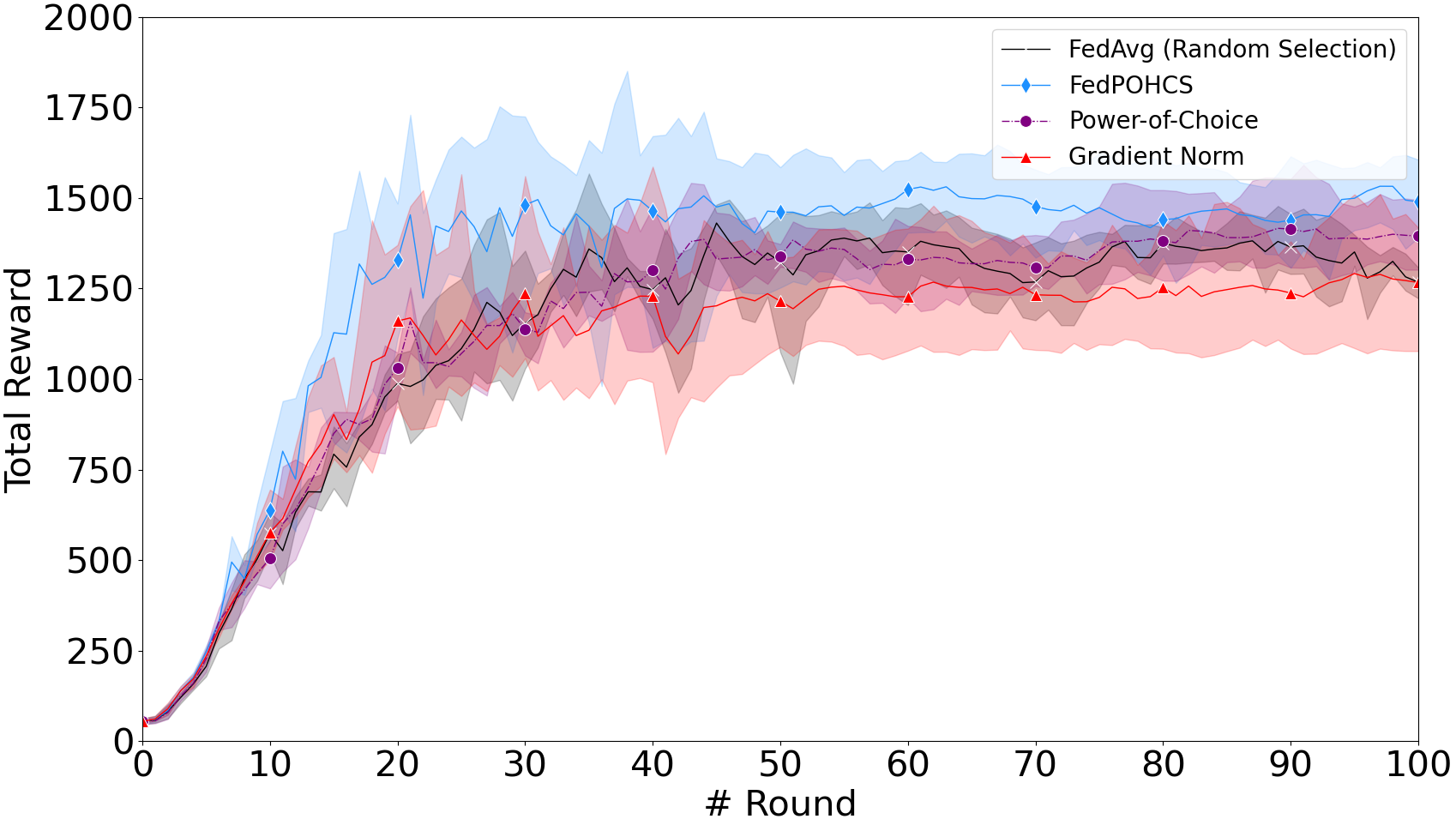}
\label{fig:hopper_low_level_hete}}
\hfil
\subfloat[]{\includegraphics[width=0.6\columnwidth]{graphics/hopper_medium_level_hete}
\label{fig:hopper_medium_level_hete}}
\hfil
\subfloat[]{\includegraphics[width=0.6\columnwidth]{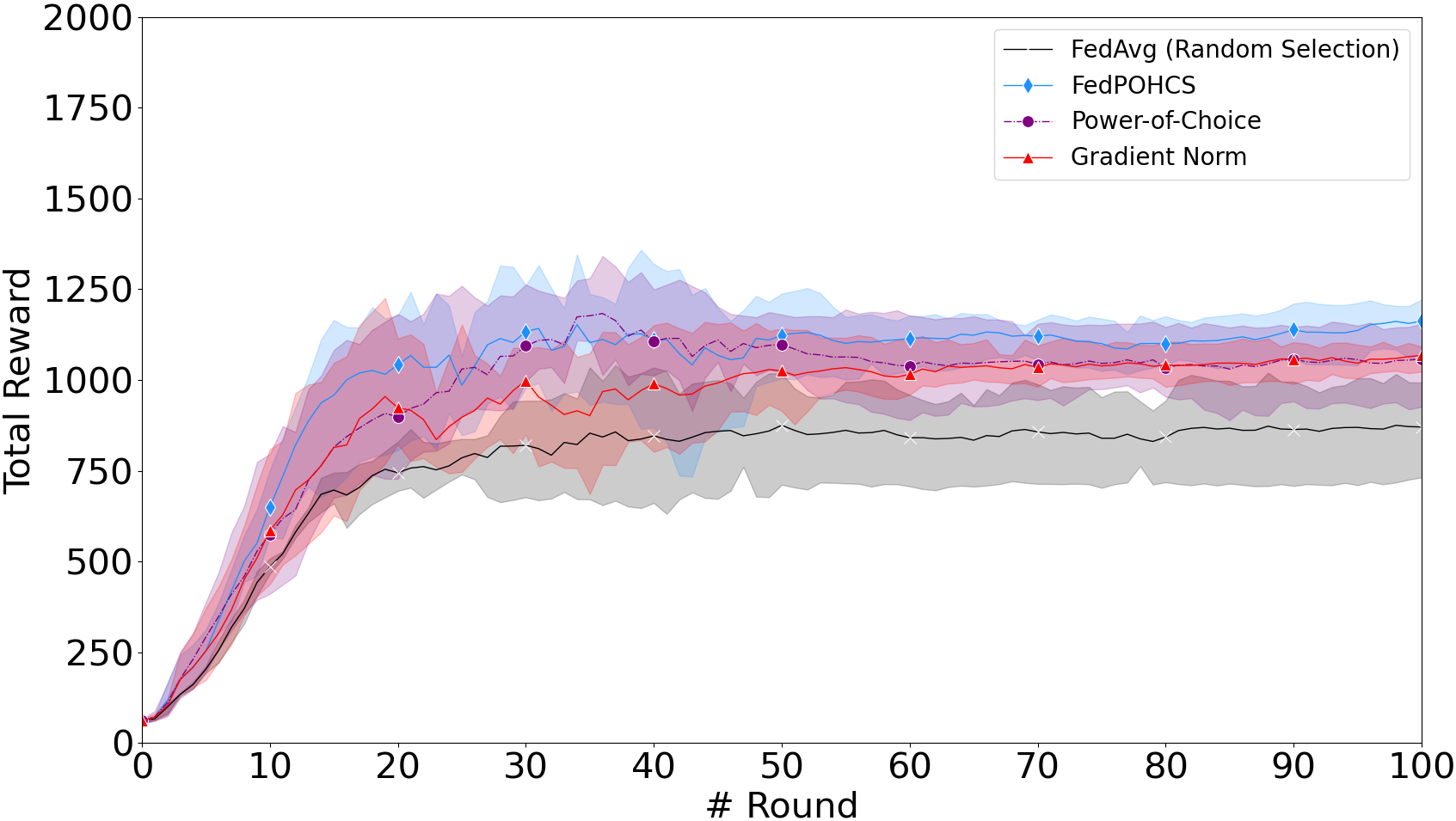}
\label{fig:hopper_high_level_hete}}
\caption{Comparison of FedAvg, Power-of-Choice, FedPOHCS and GradientNorm on Hoppers with low/medium/high level of heterogeneity.}
\label{fig:hopper_level_hete}
\end{figure}

\subsection{Performance and Stability}

Although the original mountain car problem is simple and most modern RL algorithms can easily obtain a score over 90.0, the federated setting imposes great difficulties in solving it. Figure \ref{fig:mcc} shows the performance comparison between FedPOHCS and several baselines on Mountain Cars with medium-level heterogeneity. It can be observed that FedPOHCS has a faster convergence speed and a more stable learning process. To compute the selection metric, we discretize the state and action by rounding them off to the nearest 0.1, resulting in about 8000 states and 100 actions that are frequently visited. In each round, the first phase of FedPOHCS takes 1-10 seconds, and the local training takes 20-30 seconds. Although the running time of the first phase may vary depending on the implementation, FedPOHCS outperforms all baselines in terms of the number of rounds and wall-clock time in our setting. In Figure \ref{fig:mcc}, we have also included the communication-efficient variant, i.e., the one-phase scheme, for comparison purposes. It can be observed that the one-phase scheme may slow down and destabilize learning in the early stage of training, but the final performance is comparable to the two-phase scheme.

We can draw a similar conclusion on Hoppers with medium-level heterogeneity. As shown in Figure \ref{fig:hoppers}, FedPOHCS can obtain an accumulated reward of 1450 within 20 rounds of training, while it takes about 80 rounds for Power-of-Choice to reach 1450, demonstrating the advantage of the proposed FedPOHCS algorithm in speeding up convergence.

HongKongOSMs is much more difficult to solve as shown by the high-variance curves in Figure \ref{fig:osm}. Since different OSM datasets have different numbers of lanes, target velocity, and maximum acceleration/deceleration, their state spaces and transition probabilities may be highly distinct from each other. Compared with other approaches, FedPOHCS can obtain higher rewards and converge to the highest point.

\subsection{Effectiveness of Metrics}

In Figure \ref{fig:client_partitipant_ratio}, we show how different algorithms select clients (the histogram) and the reward obtained by the final policy from each client (the scatter points) on Mountain Cars with medium-level heterogeneity. Note that with the constant shift $\theta_{n}=-1.5+\frac{n}{20}$, clients with small IDs are very different from those with large IDs. It can be observed that, compared with random selection and Power-of-Choice, FedPOHCS refuses to allocate resources to clients with IDs in $[40, 60]$, while Power-of-Choice spends a significant amount of resources on them. The final policy generated by FedPOHCS performs well on almost all clients and gets fairly high scores on clients with IDs in $[1, 10]$ without hurting other clients. This indicates that clients with IDs in $[1, 10]$ and clients with IDs in $[40, 60]$ are competing with each other. This is because they have constant shifts $\theta_{n}$ of different directions. Moreover, policies that perform well on clients with IDs in $[1, 10]$ may get fairly good scores on clients with IDs in $[40, 60]$, while the opposite is not true. We conjecture that the relation between the constant shifts $\theta_{n}$ and the transition probabilities is not linear and the imaginary MDP is closer to clients with small IDs, and hence learning on clients with IDs in $[40, 60]$ can be harmful to the overall performance.

\subsection{Different levels of heterogeneity}

In Figure \ref{fig:mcc_level_hete}, we show the performance comparison with different levels of heterogeneity, i.e., low (a), medium (b), and high (c), on Mountain Cars. It can be observed that, as the level of heterogeneity increases, the performance of all algorithms decreases, but the gap between FedPOHCS and the others increases. In other words, FedPOHCS demonstrates a bigger advantage when the level of heterogeneity is high, though its performance is also affected by severe heterogeneity. We have also conducted similar experiments on Hoppers as shown in Figure \ref{fig:hopper_level_hete}. While we can draw the same conclusion on FedAvg, i.e., its performance decreases as the level of heterogeneity increases, all three biased client selection methods are less affected by the level of heterogeneity.

\section{Conclusion\label{sec:Conclusion}}

In this work, we derived an error bound for federated policy optimization that explicitly unveils the impact of environment heterogeneity. The associated analysis covered various scenarios in FRL and offered insights into the effects of different federated settings. In particular, it was shown that clients whose environment dynamics are close to the population distribution are preferable for training. Based on these results, a client selection algorithm was proposed for FRL with heterogeneous clients. Experiment results demonstrated that the proposed client selection scheme outperforms other baselines on two federated RL problems. The results of this work represent a small step in understanding FRL and may motivate further research efforts in client selection for FRL.

\acks{This work was fully supported by a grant from the NSFC/RGC Joint Research Scheme sponsored by the Research Grants Council of the Hong Kong Special Administrative Region, China and National Natural Science Foundation of China (Project No. N\_HKUST656/22).}


\newpage

\setcounter{section}{0}
\renewcommand\thesection{\Alph{section}}
\renewcommand*{\theHsection}{chX.\the\value{section}}

\appendix
\section*{APPENDICES}

This appendix contains the proof of Lemma \ref{lemma:wfpepieb} in Appendix \ref{proof:lemma:wfpepieb}, the proof of Lemma \ref{lemma:wofpepieb} in Appendix \ref{proof:lemma:wofpepieb}, the proof of Proposition \ref{proposition:errorboundpartial} in Appendix \ref{proof:proposition:errorboundpartial}, the proof of Theorem \ref{theorem:errorboundfinal} in Appendix \ref{proof:theorem:errorboundfinal}, the proof of Proposition \ref{proposition:wfpeerrorboundpartial} in Appendix \ref{proof:proposition:wfpeerrorboundpartial}, the proof of Proposition \ref{proposition:homoerrorboundfinal} in Appendix \ref{proof:proposition:homoerrorboundfinal}, and additional experiment setting in Appendix \ref{sec:additionexperimentdetails}.

\section{Proof of Lemma \ref{lemma:wfpepieb}\label{proof:lemma:wfpepieb}}

The following Lemmas will be frequently used throughout the appendix.
\begin{lemma}
\label{lemma:normtmtn}
For any value function $V \in \mathbb{R}^{\left\vert \mathcal{S} \right\vert}$, policy $\pi$ and client $n,m$, we have 
\begin{alignat}{1}
\left\Vert T_{m} V - T_{n} V \right\Vert \le \frac{\gamma R_{\max} \kappa_{m,n}}{1 - \gamma}, \label{eq:lemma:normtmtn1} \\
\left\Vert T_{I} V - T_{n} V \right\Vert \le \frac{\gamma R_{\max} \kappa_{n,I}}{1 - \gamma}, \label{eq:lemma:normtmtn2} \\
\left\Vert T^{\pi}_{m} V - T^{\pi}_{n} V \right\Vert \le \frac{\gamma R_{\max} \kappa_{m,n}}{1 - \gamma}, \label{eq:lemma:normtmtn3} \\
\left\Vert T^{\pi}_{I} V - T^{\pi}_{n} V \right\Vert \le \frac{\gamma R_{\max} \kappa_{n,I}}{1 - \gamma}. \label{eq:lemma:normtmtn4}
\end{alignat}
\end{lemma}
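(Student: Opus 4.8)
The plan is to reduce all four inequalities to a single core estimate: whenever two Bellman operators differ only through their transition kernels (the reward $\mathcal{R}$ being shared across all clients), the gap between them applied to the same $V$ is controlled by $\gamma \left\Vert V \right\Vert$ times the total-variation distance between the relevant policy-induced transition kernels. Since every value function here takes values in $[0, R_{\max}/(1-\gamma)]$, I will use the uniform bound $\left\Vert V \right\Vert \le \frac{R_{\max}}{1-\gamma}$ throughout, which is exactly what converts a factor of $\left\Vert V \right\Vert$ into the $\frac{R_{\max}}{1-\gamma}$ appearing on the right-hand sides.

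First I would treat the fixed-policy operators (\ref{eq:lemma:normtmtn3}) and (\ref{eq:lemma:normtmtn4}). For a fixed state $s$, subtracting $T^{\pi}_{n}V(s)$ from $T^{\pi}_{m}V(s)$ cancels the reward term, leaving
\begin{alignat}{1}
T^{\pi}_{m}V(s) - T^{\pi}_{n}V(s) = \gamma \sum_{s^\prime} \left( P^{\pi}_{m}(s^\prime \vert s) - P^{\pi}_{n}(s^\prime \vert s) \right) V(s^\prime),
\end{alignat}
after recognizing $\sum_{a}\pi(a \vert s)P_{m}(s^\prime \vert s,a) = P^{\pi}_{m}(s^\prime \vert s)$. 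Taking absolute values, pulling $\left\Vert V \right\Vert$ out of the sum, and bounding $\sum_{s^\prime}\left\vert P^{\pi}_{m}(s^\prime \vert s) - P^{\pi}_{n}(s^\prime \vert s) \right\vert \le \kappa_{m,n}$ by Definition \ref{definition:impacthete} (whose maximum ranges over all $\pi$ and $s$) yields the claim after applying $\left\Vert V \right\Vert \le \frac{R_{\max}}{1-\gamma}$ and taking the supremum over $s$. Inequality (\ref{eq:lemma:normtmtn4}) is identical once I note that the imaginary kernel induces $P^{\pi}_{I}(s^\prime \vert s) = \sum_{j} q_{j} P^{\pi}_{j}(s^\prime \vert s)$, so the same steps produce $\kappa_{n,I}$ in place of $\kappa_{m,n}$.

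Next I would handle the optimal Bellman operators (\ref{eq:lemma:normtmtn1}) and (\ref{eq:lemma:normtmtn2}). Here the obstacle is the maximization over actions, which I resolve with the standard nonexpansiveness inequality $\left\vert \max_{a} f(a) - \max_{a} g(a) \right\vert \le \max_{a}\left\vert f(a) - g(a) \right\vert$. Applying it with $f(a)$ and $g(a)$ the bracketed action-values under $P_{m}$ and $P_{n}$ respectively, the reward again cancels inside the absolute value, leaving $\gamma \left\vert \sum_{s^\prime}\left( P_{m}(s^\prime \vert s,a) - P_{n}(s^\prime \vert s,a) \right) V(s^\prime) \right\vert$ for the maximizing action. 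The key point is that $\sum_{s^\prime}\left\vert P_{m}(s^\prime \vert s,a) - P_{n}(s^\prime \vert s,a) \right\vert \le \kappa_{m,n}$ for every action $a$, because the maximum defining $\kappa_{m,n}$ ranges over all policies, including the deterministic policy concentrated on $a$, for which $P^{\pi}_{m}(s^\prime \vert s) = P_{m}(s^\prime \vert s,a)$. With this, the per-action total variation is uniformly dominated by $\kappa_{m,n}$, and the same $\left\Vert V \right\Vert \le \frac{R_{\max}}{1-\gamma}$ bound closes inequality (\ref{eq:lemma:normtmtn1}); inequality (\ref{eq:lemma:normtmtn2}) follows verbatim with the imaginary kernel $\bar{P}$ and $\kappa_{n,I}$.

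The only genuinely delicate step is the action-wise reduction just described, namely recognizing that the per-action transition discrepancy is subsumed by the policy-maximizing definition of $\kappa$. Everything else is routine triangle-inequality and Hölder-type bookkeeping together with the uniform norm bound on value functions, so I would keep those calculations terse.
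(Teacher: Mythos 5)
Your proposal is correct and follows essentially the same route as the paper's proof: cancel the shared reward term, bound the remaining discrepancy by $\gamma\,\|V\|$ times the policy-induced total-variation gap, and invoke $\|V\|\le R_{\max}/(1-\gamma)$; your use of the nonexpansiveness of $\max_a$ for $T_m, T_n$ is just a cleaner packaging of the paper's ``assume $T_mV(s)\ge T_nV(s)$ and substitute the other client's greedy action'' step. You also make explicit the point the paper leaves implicit --- that the per-action total variation is dominated by $\kappa_{m,n}$ because deterministic policies are admissible in the maximum defining $\kappa$ --- which is a welcome clarification rather than a deviation.
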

\begin{proof}
For any $s \in \mathcal{S}$ and $m,n = 1,\dots,N$, let $a_{\_} = \arg\max_{a} \mathcal{R}(s,a) + \gamma \sum_{s^{\prime}} \bar{P}(s^{\prime} \vert s,a) V(s^{\prime})$, $a_{n} = \arg\max_{a} \mathcal{R}(s,a) + \gamma \sum_{s^{\prime}} P_{n}(s^{\prime} \vert s,a) V(s^{\prime})$, and $a_{m} = \arg\max_{a} \mathcal{R}(s,a) + \gamma \sum_{s^{\prime}} P_{m}(s^{\prime} \vert s,a) V(s^{\prime})$ denote the greedy actions taken by $T_{I}$, $T_{n}$ and $T_{m}$ on $V(s)$, respectively. Then, for any state $s$, we have
\begin{alignat}{1}
\left\vert T_{m} V(s) - T_{n} V(s) \right\vert & = \left\vert \mathcal{R}(s,a_{m}) - \mathcal{R}(s,a_{n}) + \gamma \sum_{s^{\prime}} \left( P_{m}(s^{\prime} \vert s,a_{m}) - P_{n}(s^{\prime} \vert s,a_{n}) \right) V(s^{\prime}) \right\vert. \nonumber
\end{alignat}
Without loss of generality, we assume that $T_{m} V(s) \ge T_{n} V(s)$. As a result, we can obtain the following inequality by replacing $a_{n}$ with $a_{m}$
\begin{alignat}{1}
\left\vert T_{m} V(s) - T_{n} V(s) \right\vert & \le \left\vert \gamma \sum_{s^{\prime}} \left( P_{m}(s^{\prime} \vert s,a_{m}) - P_{n}(s^{\prime} \vert s,a_{m}) \right) V(s^{\prime}) \right\vert \nonumber \\
& \le \gamma \sum_{s^{\prime}} \left\vert \left( P_{m}(s^{\prime} \vert s,a_{m}) - P_{n}(s^{\prime} \vert s,a_{m}) \right) \right\vert \left\vert V(s^{\prime}) \right\vert \nonumber \\
& \le \frac{\gamma R_{\max} \kappa_{m,n}}{1 - \gamma}, \nonumber
\end{alignat}
where the last inequality follows from the fact that all value functions are bounded by $\frac{R_{\max}}{1 - \gamma}$. This completes the proof of (\ref{eq:lemma:normtmtn1}) and the proof of (\ref{eq:lemma:normtmtn2}) is similar. Next, we prove (\ref{eq:lemma:normtmtn3}). By following a similar procedure, we can obtain 
\begin{alignat}{1}
\left\vert T^{\pi}_{m} V(s) - T^{\pi}_{n} V(s) \right\vert & = \left\vert \gamma \sum_{a,s^\prime} \pi(a \vert s) \left( P_{m}(s^\prime \vert s,a) - P_{n}(s^\prime \vert s,a) \right) V(s^\prime) \right\vert \nonumber \\
& \le \gamma  \sum_{s^\prime}  \left\vert \left( P_{m}^{\pi}(s^\prime \vert s) - P^{\pi}_{n}(s^\prime \vert s) \right) V(s^\prime) \right\vert \nonumber \\
& \le \gamma  \sum_{s^\prime}  \left\vert P_{m}^{\pi}(s^\prime \vert s) - P^{\pi}_{n}(s^\prime \vert s) \right\vert \left\vert V(s^\prime) \right\vert \nonumber \\
& \le \frac{\gamma R_{\max} \kappa_{m,n}}{1 - \gamma}. \nonumber
\end{alignat}
This completes the proof of (\ref{eq:lemma:normtmtn3}), and the proof of (\ref{eq:lemma:normtmtn4}) is similar.
\end{proof}

\begin{lemma}
\label{lemma:normtpitbarpi}
Let $\bar{\pi}^{t}(a \vert s) = \sum_{n=1}^{N} q_{n} \pi^{t}_{n} (a \vert s)$ denote the expected output of all local policies and $\tilde{\pi}^{t}(a \vert s) = \sum_{m\in\mathcal{C}} q^{\prime}_{m} \pi^{t}_{m}(a \vert s), \forall s \in \mathcal{S}, a \in \mathcal{A}$ represent the expected output of a set $\mathcal{C}$ of local policies. Define $\bar{\varepsilon}_{\theta} = \max_{t} \left\Vert \pi^{t}(\cdot \vert s) - \bar{\pi}^{t}_{n}(\cdot \vert s) \right\Vert_{2}$ and $\tilde{\varepsilon}_{\theta} = \max_{t} \left\Vert \pi^{t}(\cdot \vert s) - \tilde{\pi}^{t}_{m}(\cdot \vert s) \right\Vert_{2}$ for full participation and partial participation, respectively. For any value function $V \in \mathbb{R}^{\left\vert \mathcal{S} \right\vert}$ and policy $\pi^{t}$ at round $t$, we have
\begin{alignat}{1}
\left\Vert T_{I}^{\pi^{t}} V - T^{\bar{\pi}^{t}}_{I} V \right\Vert &\le \frac{\bar{\varepsilon}_{\theta} \sqrt{\left\vert \mathcal{A} \right\vert} R_{\max}}{1 - \gamma}, \label{eq:lemma:normtpitbarpi1} \\
\left\Vert T_{n}^{\pi^{t}} V - T^{\bar{\pi}^{t}}_{n} V \right\Vert &\le \frac{\bar{\varepsilon}_{\theta} \sqrt{\left\vert \mathcal{A} \right\vert} R_{\max}}{1 - \gamma}, \label{eq:lemma:normtpitbarpi2} \\
\left\Vert T_{I}^{\pi^{t}} V - T^{\tilde{\pi}^{t}}_{I} V \right\Vert &\le \frac{\tilde{\varepsilon}_{\theta} \sqrt{\left\vert \mathcal{A} \right\vert} R_{\max}}{1 - \gamma}, \label{eq:lemma:normtpitbarpi3} \\
\left\Vert T_{n}^{\pi^{t}} V - T^{\tilde{\pi}^{t}}_{n} V \right\Vert &\le \frac{\tilde{\varepsilon}_{\theta} \sqrt{\left\vert \mathcal{A} \right\vert} R_{\max}}{1 - \gamma}. \label{eq:lemma:normtpitbarpi4}
\end{alignat}
\end{lemma}
\begin{proof}
For any state $s$, we have
\begin{alignat}{1}
\left\vert T_{I}^{\pi^{t}} V^{t}(s) - T^{\bar{\pi}^{t}}_{I} V^{t}(s) \right\vert & = \left\vert \sum_{a} \sum_{n=1}^{N} q_{n} \left( \pi^{t}(a \vert s) - \pi^{t}_{n}(a \vert s) \right) \left( \mathcal{R}(s,a) + \gamma \sum_{s^\prime} \bar{P}(s^\prime \vert s,a) V^{t}(s^\prime)  \right) \right\vert \nonumber \\
& \le \left\Vert \pi^{t}(\cdot \vert s) - \sum_{n=1}^{N} q_{n} \pi^{t}_{n}(\cdot \vert s) \right\Vert_{2} \left\Vert \mathcal{R}(s,\cdot) + \gamma \sum_{s^\prime} \bar{P}(s^\prime \vert s, \cdot) V^{t}(s^\prime) \right\Vert_{2} \nonumber \\
& \le \frac{\sqrt{\left\vert \mathcal{A} \right\vert} R_{\max}}{1 - \gamma} \left\Vert \pi^{t}(\cdot \vert s) - \sum_{n=1}^{N} q_{n} \pi^{t}_{n}(\cdot \vert s) \right\Vert_{2} \nonumber \\
& \le \frac{\bar{\varepsilon}_{\theta} \sqrt{\left\vert \mathcal{A} \right\vert} R_{\max}}{1 - \gamma}, \nonumber
\end{alignat}
which completes the proof of (\ref{eq:lemma:normtpitbarpi1}) and the proofs of (\ref{eq:lemma:normtpitbarpi2}), (\ref{eq:lemma:normtpitbarpi3}), and (\ref{eq:lemma:normtpitbarpi4}) are similar.
\end{proof}

Next, we prove Lemma \ref{lemma:wfpepieb}.
\begin{proof}
Let $\bar{\pi}^{t}(a \vert s) = \sum_{n=1}^{N} q_{n} \pi^{t}_{n} (a \vert s)$ denote the expected output of all local policies. For any state $s$, we have
\begin{alignat}{1}
\left\vert T_{I}^{\pi^{t+1}} V^{t}(s) - T_{I} V^{t}(s) \right\vert &\le \left\vert T_{I}^{\pi^{t+1}} V^{t}(s) - T^{\bar{\pi}^{t+1}}_{I} V^{t}(s) \right\vert + \left\vert T^{\bar{\pi}^{t+1}}_{I} V^{t}(s) - T_{I} V^{t}(s) \right\vert. \label{eq:proof:lemma:wfpepieb0}
\end{alignat}
By Lemma \ref{lemma:normtpitbarpi}, the first term on the right-hand side (RHS) of (\ref{eq:proof:lemma:wfpepieb0}) is upper bounded by
\begin{alignat}{1}
\left\vert T_{I}^{\pi^{t+1}} V^{t}(s) - T^{\bar{\pi}^{t+1}}_{I} V^{t}(s) \right\vert &\le \frac{\bar{\varepsilon}_{\theta} \sqrt{\left\vert \mathcal{A} \right\vert} R_{\max}}{1 - \gamma}. \label{eq:proof:lemma:wfpepieb6}
\end{alignat}

Now we bound the second term on the RHS of (\ref{eq:proof:lemma:wfpepieb0}). For any state $s$, we have
\begin{alignat}{1}
& \left\vert T^{\bar{\pi}^{t+1}}_{I} V^{t}(s) - T_{I} V^{t}(s) \right\vert \nonumber \\
& = \left\vert \sum_{a} \bar{\pi}^{t+1}(a \vert s) \left( \mathcal{R}(s,a) + \gamma \sum_{s^\prime} \bar{P}(s^\prime \vert s,a) V^{t}(s^\prime)  \right) - T_{I} V^{t} \right\vert \nonumber \\
& \stackrel{(a)}{=} \left\vert \sum_{n=1}^{N} q_{n} \left[ \sum_{a} \pi^{t+1}_{n}(a \vert s) \left( \mathcal{R}(s,a) + \gamma \sum_{s^\prime} \bar{P}(s^\prime \vert s,a) V^{t}(s^\prime)  \right) - T_{I} V^{t} \right] \right\vert \nonumber \\
& \stackrel{(b)}{=} \left\vert \sum_{n=1}^{N} q_{n} \left[ \gamma \sum_{a,s^\prime} \pi^{t+1}_{n}(a \vert s) \left( \bar{P}(s^\prime \vert s,a) - P_{n}(s^\prime \vert s,a) \right) V^{t}(s^\prime) + T^{\pi^{t+1}_{n}}_{n} V^{t}(s) - T_{I} V^{t} \right] \right\vert \nonumber \\
& \stackrel{(c)}{\le} \left\vert \sum_{n=1}^{N} q_{n} \gamma \sum_{a,s^\prime} \pi^{t+1}_{n}(a \vert s) \left( \bar{P}(s^\prime \vert s,a) - P_{n}(s^\prime \vert s,a) \right) V^{t}(s^\prime) \right\vert \nonumber \\
& \quad + \left\vert \sum_{n=1}^{N} q_{n} \left( T^{\pi^{t+1}_{n}}_{n} V^{t}(s) - T_{n} V^{t}(s) \right) \right\vert + \left\vert \sum_{n=1}^{N} q_{n} \left( T_{n} V^{t}(s) - T_{I} V^{t}(s) \right) \right\vert. \label{eq:proof:lemma:wfpepieb1}
\end{alignat}
Step (a) follows from $\bar{\pi}^{t+1}(a \vert s) = \sum_{n=1}^{N} q_{n} \pi^{t+1}_{n}(a \vert s),\forall s \in \mathcal{S}, a \in \mathcal{A}$. In step (b), we added and then subtracted the term $\sum_{a,s^\prime} \pi^{t+1}_{n}(a \vert s) P_{n}(s^\prime \vert s,a) V^{t}(s^\prime)$. The added term is combined with $\sum_{a,s^\prime} \pi^{t+1}_{n}(a \vert s) \mathcal{R}(s,a)$ to form $T^{\pi^{t+1}_{n}}_{n} V^{t}(s)$. Step (c) follows from the triangle inequality.

By Lemma \ref{lemma:normtmtn}, the first term on the RHS of (\ref{eq:proof:lemma:wfpepieb1}) is upper bounded by
\begin{alignat}{1}
& \left\vert \sum_{n=1}^{N} q_{n} \gamma \sum_{a,s^\prime} \pi^{t+1}_{n}(a \vert s) \left( \bar{P}(s^\prime \vert s,a) - P_{n}(s^\prime \vert s,a) \right) V^{t}(s^\prime) \right\vert \nonumber \\
& = \left\vert \sum_{n=1}^{N} q_{n} \left( T^{\pi^{t+1}_{n}}_{I} V^{t}(s) - T^{\pi^{t+1}_{n}}_{n} V^{t}(s) \right) \right\vert \nonumber \\
& \le \frac{\gamma R_{\max} \kappa_{1}}{1 - \gamma}. \label{eq:proof:lemma:wfpepieb2}
\end{alignat}
By (\ref{eq:deltan}), the second term on the RHS of (\ref{eq:proof:lemma:wfpepieb1}) is upper bounded by $\bar{\epsilon}$. By Lemma \ref{lemma:normtmtn}, the third term on the RHS of (\ref{eq:proof:lemma:wfpepieb1}) is upper bounded by
\begin{alignat}{1}
\left\vert \sum_{n=1}^{N} q_{n} \left( T_{n}V^{t}(s) - T_{I} V^{t}(s) \right) \right\vert & \le \frac{\gamma R_{\max} \kappa_{1}}{1 - \gamma}. \label{eq:proof:lemma:wfpepieb4}
\end{alignat}
By substituting the above-mentioned three upper bounds into (\ref{eq:proof:lemma:wfpepieb1}), we can obtain
\begin{alignat}{1}
\left\vert T^{\pi^{t+1}}_{I} V^{t}(s) - T_{I} V^{t}(s) \right\vert
& \le \frac{2 \gamma R_{\max} \kappa_{1}}{1 - \gamma} + \bar{\epsilon}. \label{eq:proof:lemma:wfpepieb5}
\end{alignat}
By substituting (\ref{eq:proof:lemma:wfpepieb5}) and (\ref{eq:proof:lemma:wfpepieb6}) into (\ref{eq:proof:lemma:wfpepieb0}), we can obtain
\begin{alignat}{1}
\left\vert T_{I}^{\pi^{t+1}} V^{t}(s) - T_{I} V^{t}(s) \right\vert \le \frac{\bar{\varepsilon}_{\theta} \sqrt{\left\vert \mathcal{A} \right\vert} R_{\max}}{1 - \gamma} + \frac{2 \gamma R_{\max} \kappa_{1}}{1 - \gamma} + \bar{\epsilon}.
\end{alignat}
\end{proof}

\section{Proof of Lemma \ref{lemma:wofpepieb}\label{proof:lemma:wofpepieb}}

To prove Lemma \ref{lemma:wofpepieb}, we first introduce Lemma \ref{lemma:StrehlLemma1}. 
\begin{lemma}
\label{lemma:StrehlLemma1}
For any state $s$, policy $\pi$ and clients $m,n$, we have 
\begin{alignat}{1}
\left\vert V^{\pi}_{m}(s) - V^{\pi}_{n}(s) \right\vert \le \frac{\gamma R_{\max} \kappa_{m,n}}{(1 - \gamma)^{2}}. \nonumber
\end{alignat}
\end{lemma}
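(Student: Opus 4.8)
The plan is to treat this as a perturbation bound between two fixed points and reduce it to the single-step operator bound already established in Lemma \ref{lemma:normtmtn}. Since $V^{\pi}_{m}$ and $V^{\pi}_{n}$ are the unique fixed points of $T^{\pi}_{m}$ and $T^{\pi}_{n}$ respectively, I would start from the identity $V^{\pi}_{m} - V^{\pi}_{n} = T^{\pi}_{m} V^{\pi}_{m} - T^{\pi}_{n} V^{\pi}_{n}$ and insert the intermediate term $T^{\pi}_{m} V^{\pi}_{n}$, rewriting the difference as $\left( T^{\pi}_{m} V^{\pi}_{m} - T^{\pi}_{m} V^{\pi}_{n} \right) + \left( T^{\pi}_{m} V^{\pi}_{n} - T^{\pi}_{n} V^{\pi}_{n} \right)$.

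Then, taking sup-norms and applying the triangle inequality, I would bound the two pieces separately. The first piece is controlled by the $\gamma$-contraction property of $T^{\pi}_{m}$ in the sup-norm, giving $\left\Vert T^{\pi}_{m} V^{\pi}_{m} - T^{\pi}_{m} V^{\pi}_{n} \right\Vert \le \gamma \left\Vert V^{\pi}_{m} - V^{\pi}_{n} \right\Vert$. The second piece is precisely the situation covered by inequality (\ref{eq:lemma:normtmtn3}) of Lemma \ref{lemma:normtmtn} applied to the fixed value function $V = V^{\pi}_{n}$, so it is bounded by $\frac{\gamma R_{\max} \kappa_{m,n}}{1 - \gamma}$.

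Combining these yields $\left\Vert V^{\pi}_{m} - V^{\pi}_{n} \right\Vert \le \gamma \left\Vert V^{\pi}_{m} - V^{\pi}_{n} \right\Vert + \frac{\gamma R_{\max} \kappa_{m,n}}{1 - \gamma}$, and rearranging by collecting the contraction term on the left-hand side gives $\left\Vert V^{\pi}_{m} - V^{\pi}_{n} \right\Vert \le \frac{\gamma R_{\max} \kappa_{m,n}}{(1 - \gamma)^{2}}$. Since the sup-norm dominates every coordinate, the claimed pointwise bound for each state $s$ follows immediately.

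As for difficulty, this is essentially a routine fixed-point stability argument rather than a deep step. The only genuine choice is which intermediate operator to insert: inserting $T^{\pi}_{m} V^{\pi}_{n}$ (rather than $T^{\pi}_{n} V^{\pi}_{m}$) is what cleanly splits the difference into one term absorbed by the contraction and one term already bounded by Lemma \ref{lemma:normtmtn}. I expect no serious obstacle beyond checking that inequality (\ref{eq:lemma:normtmtn3}) is applicable to $V^{\pi}_{n}$, which it is, since that bound holds uniformly over all value functions.
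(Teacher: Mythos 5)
Your proposal is correct and is essentially the paper's own argument in operator form: the paper's step of adding and subtracting $P^{\pi}_{m}(s^{\prime}\vert s)V^{\pi}_{n}(s^{\prime})$ is exactly your insertion of the intermediate term $T^{\pi}_{m}V^{\pi}_{n}$, after which the paper likewise bounds one piece by the $\gamma$-contraction and the other by the $\kappa_{m,n}$ operator-difference bound (re-derived inline rather than cited from Lemma \ref{lemma:normtmtn}), then rearranges. No gaps.
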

\begin{proof}
For any states $s$, the distance between $V^{\pi}_{m}(s)$ and $V^{\pi}_{n}(s)$ can be bounded as 
\begin{alignat}{1}
& \left\vert V^{\pi}_{m}(s) - V^{\pi}_{n}(s) \right\vert \nonumber \\
& \stackrel{(a)}{=} \left\vert \sum_{a} \pi(a \vert s) \left( \mathcal{R}(s,a) + \gamma \sum_{s^{\prime} \in \mathcal{S}} P_{m}(s^{\prime} \vert s,a) V^{\pi}_{m}(s^{\prime}) - \mathcal{R}(s,a) - \gamma \sum_{s^{\prime} \in \mathcal{S}} P_{n}(s^{\prime} \vert s,a) V^{\pi}_{n}(s^{\prime}) \right) \right\vert \nonumber \\
& \stackrel{(b)}{=} \left\vert \gamma \sum_{s^{\prime} \in \mathcal{S}} \left( P^{\pi}_{m}(s^{\prime} \vert s) V^{\pi}_{m}(s^{\prime}) - P^{\pi}_{m}(s^{\prime} \vert s) V^{\pi}_{n}(s^{\prime}) + P^{\pi}_{m}(s^{\prime} \vert s) V^{\pi}_{n}(s^{\prime}) - P^{\pi}_{n}(s^{\prime} \vert s) V^{\pi}_{n}(s^{\prime}) \right) \right\vert \nonumber \\
& \stackrel{(c)}{\le} \gamma \sum_{s^{\prime} \in \mathcal{S}} P^{\pi}_{m}(s^{\prime} \vert s) \left\vert V^{\pi}_{m}(s^{\prime}) - V^{\pi}_{n}(s^{\prime}) \right\vert + \gamma \sum_{s^{\prime} \in \mathcal{S}} \left\vert P^{\pi}_{m}(s^{\prime} \vert s) - P^{\pi}_{n}(s^{\prime} \vert s) \right\vert \left\vert V^{\pi}_{n}(s^{\prime}) \right\vert \nonumber \\
& \stackrel{(d)}{\le} \gamma \max_{s^{\prime}} \left\vert V^{\pi}_{m}(s^{\prime}) - V^{\pi}_{n}(s^{\prime}) \right\vert + \frac{\gamma R_{\max} \sum_{s^{\prime} \in \mathcal{S}} \left\vert P^{\pi}_{m}(s^{\prime} \vert s) - P^{\pi}_{n}(s^{\prime} \vert s) \right\vert}{1 - \gamma}.
\label{51}
\end{alignat}
Step (a) follows from Bellman's equation. In step (b), we added and then subtracted the term $P^{\pi}_{m}(s^{\prime} \vert s) V^{\pi}_{n}(s^{\prime})$. Step (c) follows from the triangle inequality. Step (d) follows from the fact that all value functions are bounded by $\frac{R_{\max}}{1 - \gamma}$. By taking the maximum of both sides of (\ref{51}) over state $s$ and after some mathematical manipulations, we can finally obtain
\begin{alignat}{1}
\left\vert V^{\pi}_{m}(s) - V^{\pi}_{n}(s) \right\vert \le \frac{\gamma R_{\max} \kappa_{m,n}}{(1 - \gamma)^{2}}. \nonumber
\end{alignat}
\end{proof}
Note that we proved Lemma \ref{lemma:StrehlLemma1} for the state-value function, and a similar result for the action-value function was given by Lemma 1 in \citep{STREHL20081309}.

Next, we prove Lemma \ref{lemma:wofpepieb}. 

\begin{proof}
By the triangle inequality, we have
\begin{alignat}{1}
\left\Vert T^{\pi^{t+1}}_{I} V^{t} - T_{I} V^{t} \right\Vert &\le \left\Vert T^{\pi^{t+1}}_{I} V^{t} - T^{\bar{\pi}^{t+1}}_{I} V^{t} \right\Vert + \left\Vert T^{\bar{\pi}^{t+1}}_{I} V^{t} - T^{\bar{\pi}^{t+1}}_{I} \bar{V}^{t} \right\Vert \nonumber \\
& \quad + \left\Vert T^{\bar{\pi}^{t+1}}_{I} \bar{V}^{t} - T_{I} \bar{V}^{t} \right\Vert + \left\Vert T_{I} \bar{V}^{t} - T_{I} V^{t} \right\Vert, \nonumber
\end{alignat}
from which we can obtain
\begin{alignat}{1}
\left\Vert T^{\pi^{t+1}}_{I} V^{t} - T_{I} V^{t} \right\Vert &\le \frac{\bar{\varepsilon}_{\theta} \sqrt{\left\vert \mathcal{A} \right\vert} R_{\max}}{1 - \gamma} + 2 \gamma \bar{\varepsilon}_{w} + \left\Vert T^{\bar{\pi}^{t+1}}_{I} \bar{V}^{t} - T_{I} \bar{V}^{t} \right\Vert, \label{eq:wofpepieb0}
\end{alignat}
by Lemma \ref{lemma:normtpitbarpi}, the contraction property of the Bellman operators, and the definition of $\bar{\varepsilon}_{w}$.

To finish the proof, it suffices to bound the third term on the RHS of (\ref{eq:wofpepieb0}). By the definition of the Bellman operators, we have
\begin{alignat}{1}
\left\Vert T^{\bar{\pi}^{t+1}}_{I} \bar{V}^{t} - T_{I} \bar{V}^{t} \right\Vert & = \left\Vert \sum_{n=1}^{N} q_{n} \left( T^{\pi^{t+1}_{n}}_{I} \bar{V}^{t} - T_{n}\bar{V}^{t} \right) \right\Vert \nonumber \\
& \le \sum_{n=1}^{N} q_{n} \left\Vert T^{\pi^{t+1}_{n}}_{I} \bar{V}^{t} - T_{n}\bar{V}^{t} \right\Vert. \label{eq:wofpepieb1}
\end{alignat}
Next, we further bound the RHS of (\ref{eq:wofpepieb1}). By the triangle inequality, we have
\begin{eqnarray}
 \left\Vert T^{\pi^{t+1}_{n}}_{I} \bar{V}^{t} - T_{n}\bar{V}^{t} \right\Vert 
 \le \left\vert T^{\pi^{t+1}_{n}}_{I} \bar{V}^{t} - T^{\pi^{t+1}_{n}}_{I} V^{t}_{n} \right\Vert + \left\Vert T^{\pi^{t+1}_{n}}_{I} V^{t}_{n} - T_{n}{V}^{t}_{n} \right\Vert + \left\Vert T_{n}{V}^{t}_{n} - T_{n}\bar{V}^{t} \right\Vert, \nonumber
\end{eqnarray}
from which we can obtain
\begin{eqnarray}
 \left\Vert T^{\pi^{t+1}_{n}}_{I} \bar{V}^{t} - T_{n}\bar{V}^{t} \right\Vert \le 2 \gamma \left\Vert \bar{V}^{t} - V^{t}_{n} \right\Vert + \left\Vert T^{\pi^{t+1}_{n}}_{I} V^{t}_{n} - T_{n}{V}^{t}_{n} \right\Vert \label{eq:wofpepieb2}
\end{eqnarray}
by the contraction property of the Bellman operators.

Next, we bound the first term on the RHS of (\ref{eq:wofpepieb2}). For any state $s$, we have 
\begin{alignat}{1}
\left\vert \bar{V}^{t}(s) - V_{n}^{t}(s) \right\vert & = \left\vert \sum_{j=1}^{N} q_{j} \left( V^{t}_{n}(s) - V^{t}_{j}(s) \right) \right\vert \nonumber \\
& \le \sum_{j=1}^{N} q_{j} \left\vert V^{\pi^{t}}_{n}(s) - V^{\pi^{t}}_{j}(s) \right\vert + \left\vert V^{\pi^{t}}_{n}(s) - V^{t}_{n}(s) \right\vert \nonumber  \\
& \quad + \sum_{j=1}^{N} q_{j} \left\vert V^{\pi^{t}}_{j}(s) - V^{t}_{j}(s) \right\vert \nonumber
\end{alignat}
due to the triangle inequality. By Lemma \ref{lemma:StrehlLemma1} and (\ref{eq:deltan}), 
we can further obtain
\begin{eqnarray}
\left\vert \bar{V}^{t}(s) - V_{n}^{t}(s) \right\vert \le \sum_{j=1}^{N} q_{j} \frac{\gamma R_{\max} \kappa_{n,j}}{(1 - \gamma)^{2}} + \bar{\delta} + \delta_{n}. \nonumber
\end{eqnarray}
Thus, we have the following bound
\begin{alignat}{1}
\left\Vert \bar{V}^{t} - V_{n}^{t} \right\Vert \le \sum_{j=1}^{N} q_{j} \frac{\gamma R_{\max} \kappa_{n,j}}{(1 - \gamma)^{2}} + \bar{\delta} + \delta_{n}. \label{eq:wofpepieb3}
\end{alignat}

Now we bound the second term on the RHS of (\ref{eq:wofpepieb2}). For any $s \in \mathcal{S}$, we have
\begin{alignat}{1}
\left\vert T^{\pi^{t+1}_{n}}_{I} V^{t}_{n}(s) - T_{n}{V}^{t}_{n}(s) \right\vert & \le \left\vert T^{\pi^{t+1}_{n}}_{I} V^{t}_{n}(s) - T^{\pi^{t+1}_{n}}_{n}{V}^{t}_{n}(s) \right\vert + \left\vert T^{\pi^{t+1}_{n}}_{n}{V}^{t}_{n}(s) - T_{n}{V}^{t}_{n}(s) \right\vert \nonumber \\
& \le \frac{\gamma R_{\max} \kappa_{n,I}}{1 - \gamma} + \epsilon_{n}, \nonumber
\end{alignat}
where the last inequality follows from Lemma \ref{lemma:normtmtn} and (\ref{eq:deltan}). Thus, we can obtain
\begin{alignat}{1}
\left\Vert T^{\pi^{t+1}_{n}}_{I} V^{t}_{n} - T_{n}{V}^{t}_{n} \right\Vert \le \frac{\gamma R_{\max} \kappa_{n,I}}{1 - \gamma} + \epsilon_{n}. \label{eq:wofpepieb4}
\end{alignat}
By substituting (\ref{eq:wofpepieb3}) and (\ref{eq:wofpepieb4}) into (\ref{eq:wofpepieb2}), and then substituting (\ref{eq:wofpepieb2}) into (\ref{eq:wofpepieb1}),  we have 
\begin{alignat}{1}
\left\Vert T^{\pi^{t+1}}_{I} \bar{V}^{t} - T_{I} \bar{V}^{t} \right\Vert & \le \frac{2 \gamma^{2} R_{\max} \kappa_{2}}{(1 - \gamma)^{2}} + \frac{\gamma R_{\max} \kappa_{1}}{1 - \gamma} + 4 \gamma \bar{\delta} + \bar{\epsilon}. \label{eq:wofpepieb5}
\end{alignat}
By substituting (\ref{eq:wofpepieb5}) into (\ref{eq:wofpepieb0}), we can obtain
\begin{alignat}{1}
\left\Vert T^{\pi^{t+1}}_{I} V^{t} - T_{I} V^{t} \right\Vert &\le \frac{\bar{\varepsilon}_{\theta} \sqrt{\left\vert \mathcal{A} \right\vert} R_{\max}}{1 - \gamma} + 2 \gamma \bar{\varepsilon}_{w} + \frac{2 \gamma^{2} R_{\max} \kappa_{2}}{(1 - \gamma)^{2}} + \frac{\gamma R_{\max} \kappa_{1}}{1 - \gamma} + 4 \gamma \bar{\delta} + \bar{\epsilon}. \nonumber
\end{alignat}
\end{proof}

\section{Proof of Proposition \ref{proposition:errorboundpartial}\label{proof:proposition:errorboundpartial}}

\begin{proof}
Let $\tilde{\pi}^{t+1}(a \vert s) = \sum_{m\in\mathcal{C}} q^{\prime}_{m} \pi^{t+1}_{m}(a \vert s), \forall s \in \mathcal{S}, a \in \mathcal{A}$ denote the expected output of a set $\mathcal{C}$ of local policies. By the triangle inequality, we have
\begin{alignat}{1}
\left\Vert T^{\pi^{t+1}}_{I} V^{t} - T_{I} V^{t} \right\Vert &\le \left\Vert T^{\pi^{t+1}}_{I} V^{t} - T^{\bar{\pi}^{t+1}}_{I} V^{t} \right\Vert + \left\Vert T^{\tilde{\pi}^{t+1}}_{I} V^{t} - T^{\tilde{\pi}^{t+1}}_{I} \bar{V}^{t} \right\Vert \nonumber \\
& \quad + \left\Vert T^{\tilde{\pi}^{t+1}}_{I} \bar{V}^{t} - T_{I} \bar{V}^{t} \right\Vert + \left\Vert T_{I} \bar{V}^{t} - T_{I} V^{t} \right\Vert, \nonumber
\end{alignat}
from which we can obtain
\begin{alignat}{1}
\left\Vert T^{\pi^{t+1}}_{I} V^{t} - T_{I} V^{t} \right\Vert &\le \frac{\tilde{\varepsilon}_{\theta} \sqrt{\left\vert \mathcal{A} \right\vert} R_{\max}}{1 - \gamma} + 2 \gamma \bar{\varepsilon}_{w} + \left\Vert T^{\tilde{\pi}^{t+1}}_{I} \bar{V}^{t} - T_{I} \bar{V}^{t} \right\Vert, \label{eq:proof:proposition:errorboundpartial0}
\end{alignat}
by Lemma \ref{lemma:normtpitbarpi}, the contraction property of the Bellman operators, and the definition of $\bar{\varepsilon}_{w}$.

To finish the proof, it suffices to bound the third term on the RHS of (\ref{eq:proof:proposition:errorboundpartial0}). By the definition of the Bellman operators, we have
\begin{alignat}{1}
\left\Vert T^{\tilde{\pi}^{t+1}}_{I} \bar{V}^{t} - T_{I} \bar{V}^{t} \right\Vert & = \left\Vert \sum_{m\in\mathcal{C}} q^{\prime}_{m} \sum_{n=1}^{N} q_{n} \left( T^{\pi^{t+1}_{m}}_{I} \bar{V}^{t} -  T_{n} \bar{V}^{t} \right) \right\Vert \nonumber \\
& \le \sum_{m\in\mathcal{C}} q^{\prime}_{m} \sum_{n=1}^{N} q_{n} \left\Vert T^{\pi^{t+1}_{m}}_{I} \bar{V}^{t} -  T_{n} \bar{V}^{t} \right\Vert. \label{eq:proof:proposition:errorboundpartial1}
\end{alignat}
The RHS of (\ref{eq:proof:proposition:errorboundpartial1}) can be further bounded by
\begin{alignat}{1}
\left\Vert T^{\pi^{t+1}_{m}}_{I} \bar{V}^{t} - T_{n} \bar{V}^{t} \right\Vert & \le \left\Vert T^{\pi^{t+1}_{m}}_{I} \bar{V}^{t} -T^{\pi^{t+1}_{m}}_{I} V^{t}_{m} \right\Vert + \left\Vert T^{\pi^{t+1}_{m}}_{I} V^{t}_{m} - T_{m} V^{t}_{m} \right\Vert \nonumber \\
& \quad + \left\Vert T_{m} V^{t}_{m} - T_{n} V^{t}_{m} \right\Vert + \left\Vert T_{n} V^{t}_{m} - T_{n} \bar{V}^{t} \right\Vert \nonumber \\
& \stackrel{(a)}{\le} 2 \gamma \left( \left\Vert \bar{V}^{t} - V^{\pi^{t}}_{m} \right\Vert + \left\Vert V^{\pi^{t}}_{m} - V^{t}_{m} \right\Vert \right) + \left\Vert T^{\pi^{t+1}_{m}}_{I} V^{t}_{m} - T^{\pi^{t+1}_{m}}_{m} V^{t}_{m} \right\Vert \nonumber  \\
& \quad + \left\Vert T^{\pi^{t+1}_{m}}_{m} V^{t}_{m} - T_{m} V^{t}_{m} \right\Vert + \left\Vert T_{m} V^{t}_{m} - T_{n} V^{t}_{m} \right\Vert \nonumber \\
& \stackrel{(b)}{\le} \sum_{j=1}^{N} q_{j} \frac{2 \gamma^{2} R_{\max} \kappa_{m,j}}{(1 - \gamma)^{2}} + 2 \gamma \bar{\delta} + 2 \gamma \delta_{m} + \epsilon_{m} \nonumber \\
& \quad + \frac{\gamma R_{\max} \kappa_{m,n}}{1 - \gamma} + \frac{\gamma R_{\max} \kappa_{m,I}}{1 - \gamma},
\label{eq:proof:proposition:errorboundpartial2}
\end{alignat}
where step (a) follows from the contraction property of the Bellman operators and the triangle inequality. Step (b) follows from Lemma \ref{lemma:normtmtn} and (\ref{eq:deltan}).
Thus, by substituting (\ref{eq:proof:proposition:errorboundpartial2}) into the RHS of (\ref{eq:proof:proposition:errorboundpartial1}), 
we can obtain
\begin{alignat}{1}
\left\Vert T^{\pi^{t+1}}_{I} \bar{V}^{\pi^{t}} - T_{I} \bar{V}^{\pi^{t}} \right\Vert & \le \sum_{m\in\mathcal{C}} \sum_{n=1}^{N} q^{\prime}_{m} q_{n} \left\Vert T^{\pi^{t+1}_{m}}_{I} \bar{V}^{\pi^{t}} -  T_{n} \bar{V}^{\pi^{t}} \right\Vert \nonumber \\
& \le \sum_{m\in\mathcal{C}} \sum_{n=1}^{N} q^{\prime}_{m} q_{n} \frac{\left( \gamma + \gamma^{2} \right) R_{\max} \kappa_{m,n}}{(1 - \gamma)^{2}} + \sum_{m\in\mathcal{C}} q^{\prime}_{m} \frac{\gamma R_{\max} \kappa_{m,I}}{1 - \gamma} \nonumber \\
& \quad + 2 \gamma \sum_{m \in \mathcal{C}} q^{\prime}_{m} \delta_{m} + 2 \gamma \bar{\delta} + \sum_{m \in \mathcal{C}} q^{\prime}_{m} \epsilon_{m}. \label{eq:proof:proposition:errorboundpartial4}
\end{alignat}
By substituting (\ref{eq:proof:proposition:errorboundpartial4}) into (\ref{eq:proof:proposition:errorboundpartial0}), we can conclude
\begin{alignat}{1}
\left\Vert T^{\pi^{t+1}}_{I} V^{t} - T_{I} V^{t} \right\Vert &\le \frac{\tilde{\varepsilon}_{\theta} \sqrt{\left\vert \mathcal{A} \right\vert} R_{\max}}{1 - \gamma} + 2 \gamma \bar{\varepsilon}_{w} + \sum_{m\in\mathcal{C}} \sum_{n=1}^{N} q^{\prime}_{m} q_{n} \frac{\left( \gamma + \gamma^{2} \right) R_{\max} \kappa_{m,n}}{(1 - \gamma)^{2}} \nonumber \\
& \quad + \sum_{m\in\mathcal{C}} q^{\prime}_{m} \frac{\gamma R_{\max} \kappa_{m,I}}{1 - \gamma} + 2 \gamma \sum_{m \in \mathcal{C}} q^{\prime}_{m} \delta_{m} + 2 \gamma \bar{\delta} + \sum_{m \in \mathcal{C}} q^{\prime}_{m} \epsilon_{m}. \nonumber
\end{alignat}
\end{proof}

\section{Proof of Theorem \ref{theorem:errorboundfinal}\label{proof:theorem:errorboundfinal}}


\begin{proof}
By the triangle inequality, for any state $s$, we have
\begin{alignat}{1}
\left\vert \bar{V}^{\pi^{t}}(s) - \bar{V}^{\pi^{\ast}}(s) \right\vert & \le \left\vert \bar{V}^{\pi^{t}}(s) - V^{\pi^{t}}_{I}(s) \right\vert + \left\vert V^{\pi^{t}}_{I}(s) - V^{\pi^{\ast}_{I}}_{I}(s) \right\vert + \left\vert V^{\pi^{\ast}_{I}}_{I}(s) - \bar{V}^{\pi^{\ast}}(s) \right\vert, \label{eq:proof:theorem:errorboundfinal1}
\end{alignat}
where the first and second terms on the RHS of (\ref{eq:proof:theorem:errorboundfinal1}) can be upper bounded by Lemma \ref{lemma:normbarvdiffvi} and Proposition \ref{proposition:BertsekasProp2.4.3}, respectively.

To bound the third term on the RHS of (\ref{eq:proof:theorem:errorboundfinal1}), we notice that, for certain states, the distance between the value function $V^{\pi^{\ast}_{I}}_{I} = V_{I}^{\ast}$ for the optimal policy $\pi^{\ast}_{I}$ in the imaginary MDP $\mathcal{M}_{I}$ and the average value function $\bar{V}^{\pi^{\ast}}$ of the optimal policy $\pi^{\ast}$ for (\ref{eq:4}) is upper bounded. Specifically, for state $s$ such that $\bar{V}^{\pi^{\ast}}(s) \ge \bar{V}^{\pi^{\ast}_{I}}(s)$, we have
\begin{alignat}{1}
\left\vert V^{\pi^{\ast}_{I}}_{I}(s) - \bar{V}^{\pi^{\ast}}(s) \right\vert & \le \left\vert V^{\pi^{\ast}}_{I}(s) - \bar{V}^{\pi^{\ast}}(s) \right\vert \le \frac{\gamma R_{\max} \kappa_{1}}{(1 - \gamma)^{2}}. \label{eq:proof:theorem:errorboundfinal2}
\end{alignat}
By substituting (\ref{eq:proof:theorem:errorboundfinal2}), Lemma \ref{lemma:normbarvdiffvi} and Proposition \ref{proposition:BertsekasProp2.4.3} into (\ref{eq:proof:theorem:errorboundfinal1}), for a given state $s$ with $\bar{V}^{\pi^{\ast}}(s) \ge \bar{V}^{\pi^{\ast}_{I}}(s)$, we have
\begin{alignat}{1}
\limsup_{t\to\infty} \left\vert \bar{V}^{\pi^{t}}(s) - \bar{V}^{\pi^{\ast}}(s) \right\vert & \le \frac{\gamma R_{\max} \kappa_{1}}{(1 - \gamma)^{2}} + \frac{\tilde{\epsilon} + 2 \gamma \dot{\delta}}{(1 - \gamma)^{2}} + \frac{\gamma R_{\max} \kappa_{1}}{(1 - \gamma)^{2}}, \label{eq:roof:theorem:errorboundfinal1}
\end{alignat}
where $\tilde{\epsilon}$ may be one of $\dot{\epsilon}$ (Lemma \ref{lemma:wofpepieb}), $\epsilon^{\prime}$ (Lemma \ref{lemma:wfpepieb}), $\hat{\epsilon}$ (Proposition \ref{proposition:errorboundpartial}) or $\acute{\epsilon}$ (Proposition \ref{proposition:wfpeerrorboundpartial}).

When the aforementioned condition does not hold, we can alternatively bound the distance between $\bar{V}^{\pi^{t}}(s)$ and $\bar{V}^{\pi^{\ast}_{I}}(s)$ as
\begin{alignat}{1}
\left\vert \bar{V}^{\pi^{t}}(s) - \bar{V}^{\pi^{\ast}_{I}}(s) \right\vert & \le \left\vert \bar{V}^{\pi^{t}}(s) - V^{\pi^{t}}_{I}(s) \right\vert + \left\vert V^{\pi^{t}}_{I}(s) - V^{\pi^{\ast}_{I}}_{I}(s) \right\vert + \left\vert V^{\pi^{\ast}_{I}}_{I}(s) - \bar{V}^{\pi^{\ast}_{I}}(s) \right\vert, \label{eq:proof:theorem:errorboundfinal3}
\end{alignat}
and actually $\pi^{\ast}_{I}$ performs better on these states.

By substituting Lemma \ref{lemma:normbarvdiffvi} and Proposition \ref{proposition:BertsekasProp2.4.3} into (\ref{eq:proof:theorem:errorboundfinal3}), for a given state $s$ with $\bar{V}^{\pi^{\ast}}(s) \le \bar{V}^{\pi^{\ast}_{I}}(s)$, we have
\begin{alignat}{1}
\limsup_{t\to\infty} \left\vert \bar{V}^{\pi^{t}}(s) - \bar{V}^{\pi^{\ast}_{I}}(s) \right\vert & \le \frac{\gamma R_{\max} \kappa_{1}}{(1 - \gamma)^{2}} + \frac{\tilde{\epsilon} + 2 \gamma \dot{\delta}}{(1 - \gamma)^{2}} + \frac{\gamma R_{\max} \kappa_{1}}{(1 - \gamma)^{2}}. \label{eq:roof:theorem:errorboundfinal2}
\end{alignat}
Let $\bar{V}^{\max}_{s} = \max \left\{ \bar{V}^{\pi^{\ast}}(s), \bar{V}^{\pi^{\ast}_{I}}(s) \right\},\forall s \in \mathcal{S}$. We can then combine (\ref{eq:roof:theorem:errorboundfinal1}) and (\ref{eq:roof:theorem:errorboundfinal2}) into
\begin{alignat}{1}
\limsup_{t\to\infty} \left\vert \bar{V}^{\pi^{t}}(s) - \bar{V}^{\max}_{s} \right\vert & \le \frac{\tilde{\epsilon} + 2 \gamma \dot{\delta}}{(1 - \gamma)^{2}} + 2 \frac{\gamma R_{\max} \kappa_{1}}{(1 - \gamma)^{2}}, \nonumber
\end{alignat}
which proves (\ref{eq:theorem:errorboundfinal1}) in Theorem \ref{theorem:errorboundfinal}. The error bound for Algorithm \ref{alg:FAPI2} with partial client participation can be obtained by replacing $\tilde{\epsilon}$ with $\hat{\epsilon}$ as
\begin{alignat}{1}
\limsup_{t\to\infty} \left\vert \bar{V}^{\pi^{t}}(s) - \bar{V}^{\max}_{s} \right\vert & \le \frac{\hat{\epsilon} + 2 \gamma \dot{\delta}}{(1 - \gamma)^{2}} + 2 \frac{\gamma R_{\max} \kappa_{1}}{(1 - \gamma)^{2}} \nonumber \\
& = \frac{2 \gamma (\gamma^{2} - \gamma + 1)}{(1 - \gamma)^{4}} R_{\max} \kappa_{1} + \frac{\gamma}{(1 - \gamma)^{3}} R_{\max} \sum_{m\in\mathcal{C}} q^{\prime}_{m} \kappa_{m,I} \nonumber \\
& \quad + \frac{\gamma + \gamma^{2}}{(1 - \gamma)^{4}} R_{\max} \sum_{m\in\mathcal{C}} \sum_{n=1}^{N} q^{\prime}_{m} q_{n} \kappa_{m,n} \nonumber \\
& \quad + \frac{\tilde{\varepsilon}_{\theta} \sqrt{\left\vert \mathcal{A} \right\vert} R_{\max}}{\left( 1 - \gamma \right)^{3}} + \frac{2 \gamma \bar{\varepsilon}_{w}}{\left( 1 - \gamma \right)^{2}} \nonumber \\
& \quad + \tilde{\mathcal{O}}\left( \bar{\delta} + \sum_{m \in \mathcal{C}} q^{\prime}_{m} \delta_{m} + \sum_{m \in \mathcal{C}} q^{\prime}_{m} \epsilon_{m} \right), \nonumber
\end{alignat}
where $\tilde{\mathcal{O}}$ omits some constants related to $\gamma$. This proves (\ref{eq:theorem:errorboundfinal2}) in Theorem \ref{theorem:errorboundfinal}.
\end{proof}

\subsection{Interpretation of the Error Bound}

It is difficult to analyze FAPI because we can not directly apply the results of API to FAPI. Fortunately, the use of the imaginary MDP $M_{I}$ aligns FAPI with API and enables the application of Proposition \ref{proposition:BertsekasProp2.4.3} in Theorem \ref{theorem:errorboundfinal}.

However, the imaginary MDP $\mathcal{M}_{I}$ also brings two problems: (1) While $\pi^{\ast}$ is the optimal policy for the objective function of FRL (\ref{eq:4}), Proposition \ref{proposition:BertsekasProp2.4.3} can only show how far the generated policy $\pi^{t}$ is from the optimal policy $\pi_{I}^{\ast}$ in the imaginary MDP (refer to Remark \ref{remark:errorboundfinal1}); and (2) The performance ($V^{\pi}_{I}$) of any policy $\pi$ in the imaginary MDP $\mathcal{M}_{I}$ does not reflect its performance ($\bar{V}^{\pi}$) on FRL (\ref{eq:4}). These problems make the proof intractable as we have to bound the distance between $\bar{V}^{\pi^{\ast}}$ and $V_{I}^{\pi^{\ast}_{I}}$, which is not always feasible. As shown in (\ref{eq:proof:theorem:errorboundfinal2}), their distance on a given state $s$ is bounded only when $\bar{V}^{\pi^{\ast}}(s) \ge \bar{V}^{\pi^{\ast}_{I}}(s)$. The difficulty of bounding their distance on all states stems from the fact that the optimal policy $\pi^{\ast}$ for (\ref{eq:4}) does not necessarily outperform other policies on every state (i.e., it may not be uniformly the best).

For the states where $\pi^{\ast}_{I}$ outperforms $\pi^{\ast}$, we can alternatively bound the error with respect to $\bar{V}^{\pi^{\ast}_{I}}$. Although $\bar{V}^{\pi^{\ast}_{I}}$ is not directly related to the objective function of FRL (\ref{eq:4}), it can be regarded as an approximation to $\bar{V}^{\pi^{\ast}}$, especially when the level of heterogeneity is low where $\bar{V}^{\pi^{\ast}_{I}}$ is close to $\bar{V}^{\pi^{\ast}}$. As a result, the error bound in (\ref{eq:theorem:errorboundfinal1}) is a combination of two bounds with respect to $\bar{V}^{\pi^{\ast}_{I}}$ and $\bar{V}^{\pi^{\ast}}$, respectively. An illustrative example is given in Figure \ref{fig:errorbound}.

\begin{figure}[ht!] 
\centering
\includegraphics[width=0.66\columnwidth]{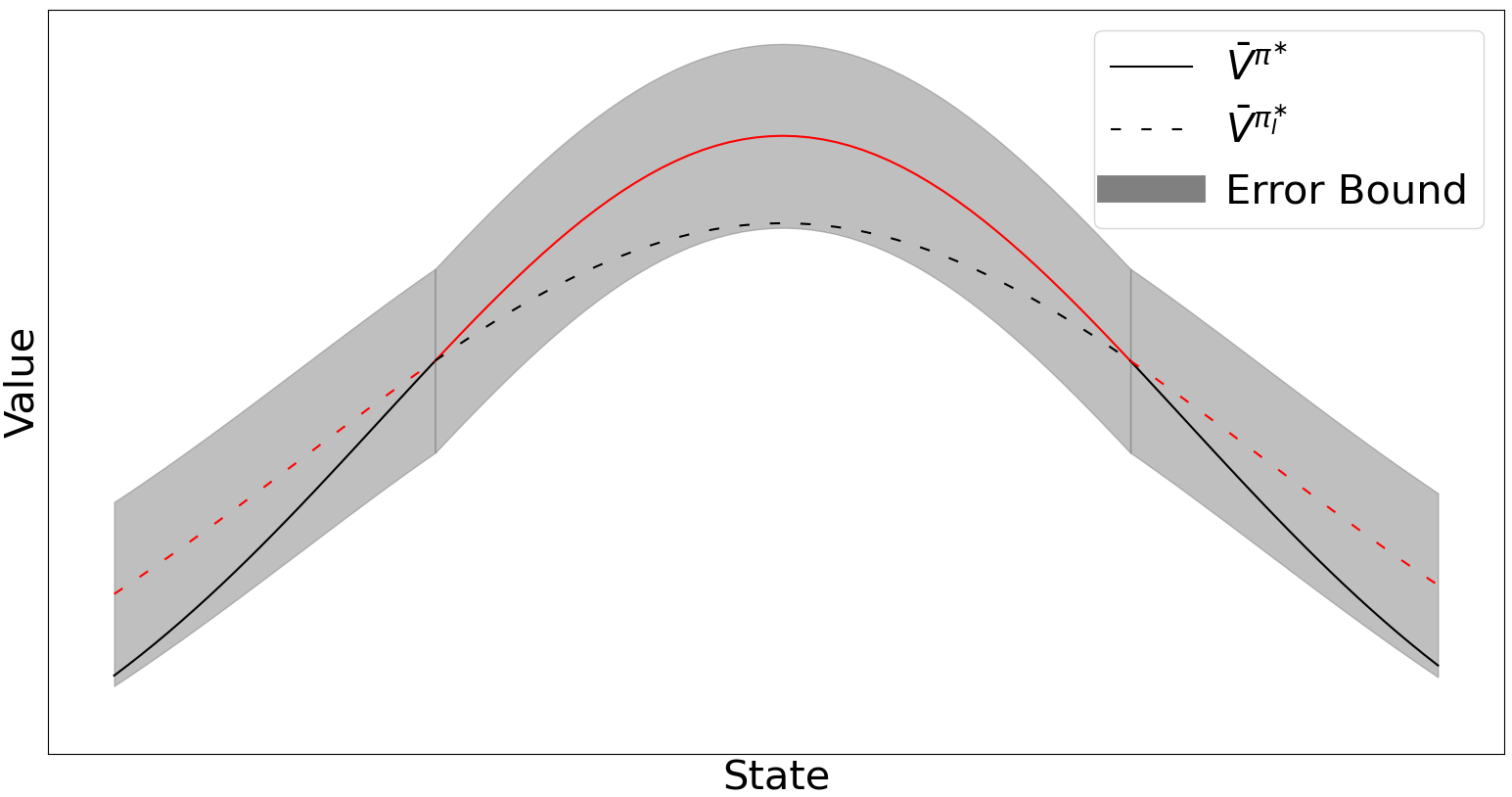}
\caption{The area in grey indicates the error bound in (\ref{eq:theorem:errorboundfinal1}). The maximum of the two curves on each state is highlighted in red, and the bound is drawn with respect to this highlighted curve.}
\label{fig:errorbound}
\end{figure}

\section{Proof of Proposition \ref{proposition:wfpeerrorboundpartial} \label{proof:proposition:wfpeerrorboundpartial}}

\begin{proof}
Let $\tilde{\pi}^{t+1}(a \vert s) = \sum_{m\in\mathcal{C}} q^{\prime}_{m} \pi^{t+1}_{m}(a \vert s), \forall s \in \mathcal{S}, a \in \mathcal{A}$ denote the expected output of a set $\mathcal{C}$ of local policies. For any state $s$, we have
\begin{alignat}{1}
\left\vert T_{I}^{\pi^{t+1}} V^{t}(s) - T_{I} V^{t}(s) \right\vert &\le \left\vert T_{I}^{\pi^{t+1}} V^{t}(s) - T^{\bar{\pi}^{t+1}}_{I} V^{t}(s) \right\vert + \left\vert T^{\bar{\pi}^{t+1}}_{I} V^{t}(s) - T_{I} V^{t}(s) \right\vert. \label{eq:eq:proposition:wfpeerrorboundpartial0}
\end{alignat}
By Lemma \ref{lemma:normtpitbarpi}, the first term on the right-hand side (RHS) of (\ref{eq:eq:proposition:wfpeerrorboundpartial0}) is upper bounded by
\begin{alignat}{1}
\left\vert T_{I}^{\pi^{t+1}} V^{t}(s) - T^{\bar{\pi}^{t+1}}_{I} V^{t}(s) \right\vert &\le \frac{\bar{\varepsilon}_{\theta} \sqrt{\left\vert \mathcal{A} \right\vert} R_{\max}}{1 - \gamma}.
\end{alignat}

To finish the proof, it suffices to bound the second term on the RHS of (\ref{eq:eq:proposition:wfpeerrorboundpartial0}). By the definition of the Bellman operators, we have
\begin{alignat}{1}
\left\Vert T^{\tilde{\pi}^{t+1}}_{I} V^{t} - T_{I} V^{t} \right\Vert & = \left\Vert \sum_{m\in\mathcal{C}} q^{\prime}_{m} \sum_{n=1}^{N} q_{n} \left( T^{\pi^{t+1}_{m}}_{I} V^{t} -  T_{n} V^{t} \right) \right\Vert \nonumber \\
& \le \sum_{m\in\mathcal{C}} q^{\prime}_{m} \sum_{n=1}^{N} q_{n} \left\Vert T^{\pi^{t+1}_{m}}_{I} V^{t} -  T_{n} V^{t} \right\Vert. \label{eq:eq:proposition:wfpeerrorboundpartial2}
\end{alignat}
The RHS of (\ref{eq:eq:proposition:wfpeerrorboundpartial2}) can be further bounded by
\begin{alignat}{1}
\left\Vert T^{\pi^{t+1}_{m}}_{I} V^{t} - T_{n} V^{t} \right\Vert & \le \left\Vert T^{\pi^{t+1}_{m}}_{I} V^{t} - T^{\pi^{t+1}_{m}}_{m} V^{t} \right\Vert + \left\Vert T^{\pi^{t+1}_{m}}_{m} V^{t} - T_{m} V^{t} \right\Vert + \left\Vert T_{m} V^{t} - T_{n} V^{t} \right\Vert \nonumber \\
& \le \frac{\gamma R_{\max} \kappa_{m,I}}{1 - \gamma} + \epsilon_{m} + \frac{\gamma R_{\max} \kappa_{m,n}}{1 - \gamma}, \label{eq:eq:proposition:wfpeerrorboundpartial3}
\end{alignat}
where the last inequality follows from Lemma \ref{lemma:normtmtn} and (\ref{eq:deltan}). By substituting (\ref{eq:eq:proposition:wfpeerrorboundpartial3}) into (\ref{eq:eq:proposition:wfpeerrorboundpartial2}), we can obtain
\begin{alignat}{1}
\left\Vert T^{\pi^{t+1}}_{I} V^{\pi^{t}} - T_{I} V^{\pi^{t}} \right\Vert & \le \sum_{m\in\mathcal{C}} \sum_{n=1}^{N} q^{\prime}_{m} q_{n} \left\Vert T^{\pi^{t+1}_{m}}_{I} V^{\pi^{t}} -  T_{n} V^{\pi^{t}} \right\Vert \nonumber \\
& \le \sum_{m\in\mathcal{C}} \sum_{n=1}^{N} q^{\prime}_{m} q_{n} \frac{ \gamma R_{\max} \kappa_{m,n}}{1 - \gamma} + \sum_{m\in\mathcal{C}} q^{\prime}_{m} \frac{\gamma R_{\max} \kappa_{m,I}}{1 - \gamma} + \sum_{m \in \mathcal{C}} q^{\prime}_{m} \epsilon_{m}. \label{eq:eq:proposition:wfpeerrorboundpartial4}
\end{alignat}

By substituting (\ref{eq:eq:proposition:wfpeerrorboundpartial4}) into (\ref{eq:eq:proposition:wfpeerrorboundpartial0}), we can conclude
\begin{alignat}{1}
\left\Vert T^{\pi^{t+1}}_{I} V^{t} - T_{I} V^{t} \right\Vert &\le \frac{\tilde{\varepsilon}_{\theta} \sqrt{\left\vert \mathcal{A} \right\vert} R_{\max}}{1 - \gamma} + \sum_{m\in\mathcal{C}} \sum_{n=1}^{N} q^{\prime}_{m} q_{n} \frac{ \gamma R_{\max} \kappa_{m,n}}{1 - \gamma} \nonumber \\
& \quad + \sum_{m\in\mathcal{C}} q^{\prime}_{m} \frac{\gamma R_{\max} \kappa_{m,I}}{1 - \gamma} + \sum_{m \in \mathcal{C}} q^{\prime}_{m} \epsilon_{m}. \nonumber
\end{alignat}
\end{proof}

\section{Proof of Proposition \ref{proposition:homoerrorboundfinal} \label{proof:proposition:homoerrorboundfinal}}

\begin{proof}
By Theorem \ref{theorem:errorboundfinal}, we have
\begin{alignat}{1}
\limsup_{t\to\infty} \left\vert \bar{V}^{\pi^{t}}(s) - \bar{V}^{\max}_{s} \right\vert & \le \frac{\tilde{\epsilon} + 2 \gamma \dot{\delta}}{(1 - \gamma)^{2}} + 2 \frac{\gamma R_{\max} \kappa_{1}}{(1 - \gamma)^{2}}, \nonumber
\end{alignat}
where $\tilde{\epsilon}$ may be one of $\acute{\epsilon}$ (Proposition \ref{proposition:wfpeerrorboundpartial}) or $\epsilon^{\prime}$ (Lemma \ref{lemma:wfpepieb}). Since the environments are homogeneous, we have
\begin{alignat}{1}
\bar{V}^{\max}_{s} & = \bar{V}^{\pi^{\ast}}(s) = \bar{V}^{\pi^{\ast}_{I}}(s), \nonumber \\
\kappa_{1} & = \kappa_{2} = 0, \nonumber \\
\dot{\delta} & = \bar{\delta}, \nonumber \\
\epsilon^{\prime} & = \acute{\epsilon} = \bar{\epsilon} + \frac{\hat{\varepsilon}_{\theta} \sqrt{\left\vert \mathcal{A} \right\vert} R_{\max}}{1 - \gamma}, \nonumber
\end{alignat}
where $\hat{\varepsilon}_{\theta}$ is equal to $\bar{\varepsilon}_{\theta}$ and $\tilde{\varepsilon}_{\theta}$ for full participation and partial participation, respectively. Thus, we can conclude
\begin{alignat}{1}
\limsup_{t\to\infty} \left\Vert \bar{V}^{\pi^{t}} - \bar{V}^{\pi^{\ast}} \right\Vert \le \frac{\hat{\varepsilon}_{\theta} \sqrt{\left\vert \mathcal{A} \right\vert} R_{\max}}{\left( 1 - \gamma \right)^{3}} + \frac{2 \gamma \bar{\varepsilon}_{w}}{\left( 1 - \gamma \right)^{2}} + \frac{\bar{\epsilon} + 2 \gamma \bar{\delta}}{(1 - \gamma)^2}. \nonumber
\end{alignat}
\end{proof}

\section{Proof of Lemma \ref{lemma:aggregation_error} \label{proof:lemma:aggregation_error}}

Our proof relies on the definition of local linearization for the two-layer neural network at its random initialization, which was first introduced by \citep{NEURIPS2019_98baeb82,wang2019neural,NEURIPS2019_227e072d}:
\begin{alignat}{1}
u_{w^{t}}^{0}(s) = \frac{1}{\sqrt{m}} \sum_{i}^{m} b_{i} \cdot \mathbf{1} \left\{ \left( w^{0}_{i} \right)^{T}(s) > 0 \right\} \left( w^{t}_{i} \right)^{T}(s), \nonumber \\
f_{\theta^{t}}^{0}(s,a) = \frac{1}{\sqrt{m}} \sum_{i}^{m} b_{i} \cdot \mathbf{1} \left\{ \left( \theta^{0}_{i} \right)^{T}(s,a) > 0 \right\} \left( \theta^{t}_{i} \right)^{T}(s,a). \nonumber
\end{alignat}
The following lemma characterizes the error induced by the above local linearization.
\begin{lemma}
\label{lemma:linearization_error}
For $w \in \mathcal{B}_{R_{w}}^{0}, \theta \in \mathcal{B}_{R_{\theta}}^{0}, s \in \mathcal{S}$, and $a \in \mathcal{A}$, we have
\begin{alignat}{1}
\mathbb{E}_{\text{init}} \left[ \left\vert f_{\theta}(s,a) - f_{\theta}^{0}(s,a) \right\vert \right] &= \mathcal{O} \left( R_{\theta}^{6/5} m^{-1/10} \hat{R}_{\theta}^{2/5} \right), \label{eq:lemma:linearization_error_first} \\
\mathbb{E}_{\text{init}} \left[ \left\vert u_{w}(s) - u_{w}^{0}(s) \right\vert \right] &= \mathcal{O} \left( R_{w}^{6/5} m^{-1/10} \hat{R}_{w}^{2/5} \right). \label{eq:lemma:linearization_error_second}
\end{alignat}
\end{lemma}
\begin{proof}
Given any pair of model parameters $\theta \in \mathcal{B}_{R_{\theta}}^{0}$ and $\theta^{\prime} \in \mathcal{B}_{R_{\theta}}^{0}$,
\begin{alignat}{1}
\mathbf{1} \left\{ \theta_{i}^{T}(s,a) > 0 \right\} \neq \mathbf{1} \left\{ \left( \theta^{\prime}_{i} \right) ^{T}(s,a) > 0 \right\} \nonumber
\end{alignat}
implies
\begin{alignat}{1}
\left\vert \left( \theta^{\prime}_{i} \right)^{T}(s,a) \right\vert \le \left\vert \theta_{i}^{T}(s,a) - \left( \theta^{\prime}_{i} \right)^{T}(s,a) \right\vert \le \left\Vert \theta_{i} - \theta^{\prime}_{i} \right\Vert_{2}.  \nonumber
\end{alignat}
Consequently, we have
\begin{alignat}{1}
\left\vert f_{\theta}(s,a) - f_{\theta}^{0}(s,a) \right\vert &= \frac{1}{\sqrt{m}} \left\vert \sum_{i=1}^{m} b_{i} \cdot \left( \mathbf{1} \left\{ \theta^{T}_{i}(s,a) > 0 \right\} - \mathbf{1} \left\{ \left( \theta^{0}_{i} \right)^{T}(s,a) > 0 \right\} \right) \cdot \theta_{i}^{T}(s,a) \right\vert \nonumber \\
&\le \frac{1}{\sqrt{m}} \sum_{i=1}^{m} \left\vert \mathbf{1} \left\{ \theta^{T}_{i}(s,a) > 0 \right\} - \mathbf{1} \left\{ \left( \theta^{0}_{i} \right)^{T}(s,a) > 0 \right\} \right\vert \cdot \left\vert \theta_{i}^{T}(s,a) \right\vert \nonumber \\
&\le \frac{1}{\sqrt{m}} \sum_{i=1}^{m} \mathbf{1} \left\{ \left( \theta^{0}_{i} \right)^{T}(s,a) \le \left\Vert \theta_{i} - \theta^{0}_{i} \right\Vert_{2} \right\} \cdot \left\Vert \theta_{i} - \theta^{0}_{i} \right\Vert_{2}. \label{eq:lemma:linearization_error0}
\end{alignat}
Next, we analyze $\mathbb{E}_{\text{init}} \left[ \left\vert f_{\theta}(s,a) - f_{\theta}^{0}(s,a) \right\vert \right]$ by examining two cases.

\begin{itemize}
    \item Case 1: $G_{1} = \frac{1}{\sqrt{m}} \sum_{i \in C_{1}} \left( \mathbf{1} \left\{ \left( \theta^{0}_{i} \right)^{T}(s,a) \le \left\Vert \theta_{i} - \theta^{0}_{i} \right\Vert_{2} \right\} \right) \cdot \left\Vert \theta_{i} - \theta^{0}_{i} \right\Vert_{2}$, where $C_{1} = \left\{ i \in [m]: \left\Vert \theta_{i} - \theta^{0}_{i} \right\Vert_{2} \le \Delta \right\}$ and $\Delta > 0$.

Without loss of generality, we assume that parameters $\theta(0)$ are uniformly initialized from a circle with a radius of $R_{0}$. Then, the number of neurons lying in $C_{1}$ is approximately $\frac{m \Delta}{R_{0}}$. Thus, we have
\begin{alignat}{1}
G_{1} &\le \frac{1}{\sqrt{m}} \sum_{i \in C_{1}} \left\Vert \theta_{i} - \theta^{0}_{i} \right\Vert_{2} = \mathcal{O} \left( m^{1/2} \Delta^{2} R_{0}^{-1} \right). \label{eq:lemma:linearization_error4}
\end{alignat}
Note that the size of $C_{1}$ decreases as $\Delta$ decreases. Given a fixed $m$, a sufficiently small $\Delta$ exists that makes (\ref{eq:lemma:linearization_error4}) negligible.

    \item Case 2: $G_{2} = \frac{1}{\sqrt{m}} \sum_{i \in C_{2}} \left( \mathbf{1} \left\{ \left( \theta^{0}_{i} \right)^{T}(s,a) \le \left\Vert \theta_{i} - \theta^{0}_{i} \right\Vert_{2} \right\} \right) \cdot \left\Vert \theta_{i} - \theta^{0}_{i} \right\Vert_{2}$, where $C_{2} = \left\{ i \in [m]: \left\Vert \theta_{i} - \theta^{0}_{i} \right\Vert_{2} > \Delta \right\}$ and $\Delta > 0$.

We have $\left\Vert \theta_{i} - \theta^{0}_{i} \right\Vert_{2} / \left\Vert \theta^{0}_{i} \right\Vert_{2} > \Delta / \hat{R}_{\theta}$. Consequently, there exits a constant $c \ge \hat{R}_{\theta} / \Delta$ such that for any layer $i \in C_{2}, a \in \mathcal{A}, s \in \mathcal{S}$, it holds that
\begin{alignat}{1}
\mathbf{1} \left\{ \left( \theta^{0}_{i} \right)^{T}(s,a) \le \left\Vert \theta_{i} - \theta^{0}_{i} \right\Vert_{2} \right\} \le 1 \le c \left\Vert \theta_{i} - \theta^{0}_{i} \right\Vert_{2} / \left\Vert \theta^{0}_{i} \right\Vert_{2}. \label{eq:lemma:linearization_error5}
\end{alignat}
Next, by the Cauchy-Schwarz inequality and $\left\Vert \theta - \theta^{0} \right\Vert_{2} \le R_{\theta}$, we have
\begin{alignat}{1}
G_{2} &\le \frac{ R_{\theta}}{\sqrt{m}} \sqrt{\sum_{i \in C_{2}} \mathbf{1} \left\{ \left( \theta^{0}_{i} \right)^{T}(s,a) \le \left\Vert \theta_{i} - \theta^{0}_{i} \right\Vert_{2} \right\}}. \label{eq:lemma:linearization_error7}
\end{alignat}
By (\ref{eq:lemma:linearization_error5}) and taking expectation on both sides of (\ref{eq:lemma:linearization_error7}), we can obtain
\begin{alignat}{1}
\mathbb{E}_{\text{init}} \left[ G_{2} \right] &\le \frac{R_{\theta}}{\sqrt{m}} \mathbb{E}_{\text{init}} \left[ \sqrt{\sum_{i \in C_{2}} \mathbf{1} \left\{ \left( \theta^{0}_{i} \right)^{T}(s,a) \le \left\Vert \theta_{i} - \theta^{0}_{i} \right\Vert_{2} \right\}} \right] \nonumber \\
&\le \frac{R_{\theta}}{\sqrt{m}} \sqrt{\mathbb{E}_{\text{init}} \left[ \sum_{i \in C_{2}} \mathbf{1} \left\{ \left( \theta^{0}_{i} \right)^{T}(s,a) \le \left\Vert \theta_{i} - \theta^{0}_{i} \right\Vert_{2} \right\} \right]} \nonumber \\
&\le \frac{R_{\theta}}{\sqrt{m}} \sqrt{c \mathbb{E}_{\text{init}} \left[ \sum_{i \in C_{2}} \left\Vert \theta_{i} - \theta^{0}_{i} \right\Vert_{2} / \Vert \theta^{0}_{i} \Vert_{2} \right]}. \nonumber
\end{alignat}
By the Cauchy-Schwarz inequality, we have
\begin{alignat}{1}
\mathbb{E}_{\text{init}} \left[ \sum_{i \in C_{2}} \left\Vert \theta_{i} - \theta^{0}_{i} \right\Vert_{2} / \Vert \theta^{0}_{i} \Vert_{2} \right] &\le \mathbb{E}_{\text{init}} \left[ \sum_{i \in C_{2}} \left\Vert \theta_{i} - \theta^{0}_{i} \right\Vert_{2}^{2} \right]^{1/2} \cdot \mathbb{E}_{\text{init}} \left[ \sum_{i \in C_{2}} \Vert \theta^{0}_{i} \Vert_{2}^{-2} \right]^{1/2} \nonumber \\
&\le R_{\theta} \cdot \mathbb{E}_{\text{init}} \left[ \sum_{i \in C_{2}} \Vert \theta^{0}_{i} \Vert_{2}^{-2} \right]^{1/2}, \label{eq:lemma:linearization_error3}
\end{alignat}
where the second inequality follows from $\sum_{i=1}^{m} \left\Vert \theta_{i} - \theta^{0}_{i} \right\Vert_{2}^{2} = \left\Vert \theta - \theta^{0} \right\Vert_{2}^{2} \le R_{\theta}^{2}$. Since $\mathbb{E}_{\text{init}} \left[ \left\Vert \theta_{i} \right\Vert_{2}^{-2} \right] \le \infty, \forall i \in [m]$ by the initialization scheme (\ref{eq:initialization}), we have that the RHS of (\ref{eq:lemma:linearization_error3}) is $\mathcal{O} (R_{\theta}m^{1/2})$. Thus, we can obtain
\begin{alignat}{1}
\mathbb{E}_{\text{init}} \left[ G_{2} \right] &= \mathcal{O} \left( R_{\theta}^{3/2}m^{-1/4} \hat{R}_{\theta}^{1/2} \Delta^{-1/2} \right). \label{eq:lemma:linearization_error6}
\end{alignat}

\end{itemize}
By (\ref{eq:lemma:linearization_error4}) and (\ref{eq:lemma:linearization_error6}), we can obtain
\begin{alignat}{1}
\mathbb{E}_{\text{init}} \left[ \left\vert f_{\theta}(s,a) - f_{\theta}^{0}(s,a) \right\vert \right] &\le \mathbb{E}_{\text{init}} \left[ G_{1} + G_{2} \right] = \mathcal{O} \left( R_{\theta}^{3/2}m^{-1/4} \hat{R}_{\theta}^{1/2} \Delta^{-1/2} + m^{1/2} \Delta^{2} R_{0}^{-1} \right). \nonumber
\end{alignat}
We further assume $
m^{1/2} \Delta^{2} R_{0}^{-1} \le \varrho$, i.e., $\Delta \le \varrho^{1/2} m^{-1/4} R_{0}^{1/2}$, which implies that
\begin{alignat}{1}
\mathbb{E}_{\text{init}} \left[ \left\vert f_{\theta}(s,a) - f_{\theta}^{0}(s,a) \right\vert \right] &= \mathcal{O} \left( R_{\theta}^{3/2}m^{-1/4} \hat{R}_{\theta}^{1/2} \Delta^{-1/2} + \varrho \right) \nonumber \\
&= \mathcal{O} \left( R_{\theta}^{3/2}m^{-1/8} \hat{R}_{\theta}^{1/2} \varrho^{-1/4} R_{0}^{-1/4} + \varrho \right). \nonumber
\end{alignat}
Moreover, we assume that $R_{\theta}^{3/2} m^{-1/8} \hat{R}_{\theta}^{1/2} \varrho^{-1/4} R_{0}^{-1/4} \ge \varrho$, i.e., $\varrho \le R_{\theta}^{6/5} m^{-1/10} \hat{R}_{\theta}^{2/5} R_{0}^{-1/5}$, which gives
\begin{alignat}{1}
\mathbb{E}_{\text{init}} \left[ \left\vert f_{\theta}(s,a) - f_{\theta}^{0}(s,a) \right\vert \right] &= \mathcal{O} \left( R_{\theta}^{6/5} m^{-1/10} \hat{R}_{\theta}^{2/5} \right). \nonumber
\end{alignat}
This completes the proof of (\ref{eq:lemma:linearization_error_first}), and the proof of (\ref{eq:lemma:linearization_error_second}) is similar.
\end{proof}

The following lemma provides the upper bound of the difference between network outputs.
\begin{lemma}
\label{lemma:changing_action_error}
For state $s \in \mathcal{S}$, any pair of actions $a$ and $a^{\prime}$, and model parameters $\vartheta,\vartheta^{\prime} \in \mathcal{B}_{R_{\vartheta}}^{0}$, which is $\theta$ for the policy and $w$ for the value function, we have
\begin{alignat}{1}
\mathbb{E}_{\text{init}} \left[ \left\vert u_{\vartheta}(s,a) - u_{\vartheta^{\prime}}(s,a^{\prime}) \right\vert \right] &= \mathcal{O} \left( R_{\vartheta} \right), \label{eq:lemma:changing_action_error_first} \\
\mathbb{E}_{\text{init}} \left[ \left\vert u_{\vartheta}(s,a) - u_{\vartheta^{\prime}}(s,a) \right\vert \right] &= \mathcal{O} \left( R_{\vartheta} \right). \label{eq:lemma:changing_action_error_second}
\end{alignat}
\end{lemma}
\begin{proof}
By Jensen's inequality, we have
\begin{alignat}{1}
& \mathbb{E}_{\text{init}} \left\vert \left[ u_{\vartheta}(s,a) - u_{\vartheta^{\prime}}(s,a^{\prime}) \right\vert \right]^{2} \nonumber \\
&\le \frac{1}{m} \mathbb{E}_{\text{init}} \left[ \left\vert \sum_{i}^{m} b_{i} \cdot \mathbf{1} \left\{ \vartheta_{i}^{T}(s,a) > 0 \right\} \vartheta_{i}^{T} (s,a) - \sum_{i}^{m} b_{i} \cdot \mathbf{1} \left\{ \left( \vartheta^{\prime}_{i} \right)^{T}(s,a^{\prime}) > 0 \right\} \left( \vartheta^{\prime}_{i} \right)^{T} (s,a^{\prime}) \right\vert^{2} \right]. \nonumber
\end{alignat}
By the fact that $(a + b)^{2} \le 2a^{2} + 2b^{2}$ and $ab - cd = a(b-d) + d(a-c)$, we have
\begin{alignat}{1}
& \mathbb{E}_{\text{init}} \left\vert \left[ u_{\vartheta}(s,a) - u_{\vartheta^{\prime}}(s,a^{\prime}) \right\vert \right]^{2} \nonumber \\
\le& \frac{1}{m} \mathbb{E}_{\text{init}} \left[ \left\vert \sum_{i}^{m} b_{i} \cdot \mathbf{1} \left\{ \vartheta_{i}^{T}(s,a) > 0 \right\} \left( \vartheta_{i}^{T} (s,a) - \left( \vartheta^{\prime}_{i} \right)^{T} (s,a^{\prime}) \right) \right. \right. \nonumber \\
& \left. \left. + \sum_{i}^{m} b_{i} \cdot \left( \mathbf{1} \left\{ \left( \vartheta^{\prime}_{i} \right)^{T}(s,a^{\prime}) > 0 \right\} - \mathbf{1} \left\{ \left( \vartheta^{\prime}_{i} \right)^{T}(s,a^{\prime}) > 0 \right\} \right) \left( \vartheta^{\prime}_{i} \right)^{T} (s,a^{\prime}) \right\vert^{2} \right] \nonumber \\
\le& \frac{1}{m} \mathbb{E}_{\text{init}} \left[ 2 \left\vert \sum_{i}^{m} b_{i} \cdot \mathbf{1} \left\{ \vartheta_{i}^{T}(s,a) > 0 \right\} \left( \vartheta_{i}^{T} (s,a) - \left( \vartheta^{\prime}_{i} \right)^{T} (s,a^{\prime}) \right) \right\vert^{2} \right. \nonumber \\
& \left. + 2 \left\vert \sum_{i}^{m} b_{i} \cdot \left( \mathbf{1} \left\{ \left( \vartheta^{\prime}_{i} \right)^{T}(s,a^{\prime}) > 0 \right\} - \mathbf{1} \left\{ \left( \vartheta^{\prime}_{i} \right)^{T}(s,a^{\prime}) > 0 \right\} \right) \left( \vartheta^{\prime}_{i} \right)^{T} (s,a^{\prime}) \right\vert^{2} \right]. \nonumber
\end{alignat}
Furthermore, we have
\begin{alignat}{1}
\left\Vert \vartheta \right\Vert_{2}^{2} &\le \left( \left\Vert \vartheta - \vartheta^{0} \right\Vert_{2} + \left\Vert \vartheta^{0} \right\Vert_{2} \right)^{2} \le 2 R_{\vartheta}^{2} + 2 \left\Vert \vartheta^{0} \right\Vert_{2}^{2}. \label{eq:bound_vartheta}
\end{alignat}
By (\ref{eq:bound_vartheta}) and the Cauchy-Schwarz inequality, we have
\begin{alignat}{1}
\mathbb{E}_{\text{init}} \left\vert \left[ u_{\vartheta}(s,a) - u_{\vartheta^{\prime}}(s,a^{\prime}) \right\vert \right]^{2} &\le 2 \mathbb{E}_{\text{init}} \left[ \left( \sum_{i}^{m} \left( \vartheta_{i}^{T} (s,a) - \left( \vartheta^{\prime}_{i} \right)^{T} (s,a^{\prime}) \right)^{2} \right) + \left\Vert \vartheta^{\prime} \right\Vert_{2}^{2} \right] \nonumber \\
&\le 2 \mathbb{E}_{\text{init}} \left[ \left( \sum_{i}^{m} 2 \left( \vartheta_{i}^{T} (s,a) \right)^{2} + 2 \left( \left( \vartheta^{\prime}_{i} \right)^{T} (s,a^{\prime}) \right)^{2} \right) + \left\Vert \vartheta^{\prime} \right\Vert_{2}^{2} \right] \nonumber \\
&\le 6 \mathbb{E}_{\text{init}} \left[ \left\Vert \vartheta^{\prime} \right\Vert_{2}^{2} \right] + 4 \mathbb{E}_{\text{init}} \left[ \left\Vert \vartheta \right\Vert_{2}^{2} \right] \nonumber \\
&\le 20 R_{\vartheta}^{2} + 20. \label{eq:lemma:changing_action_error_0}
\end{alignat}
We can then complete the proof of (\ref{eq:lemma:changing_action_error_first}) by taking the square root of both sides of (\ref{eq:lemma:changing_action_error_0}). The proof for (\ref{eq:lemma:changing_action_error_second}) is similar.
\end{proof}

Next, we prove Lemma \ref{lemma:aggregation_error}.
\begin{proof}
Let $(t^{\prime},s^{\prime}) = \arg \max_{t>0,s \in \mathcal{S}} \left\vert V^{t}(s) - \bar{V}^{t}(s) \right\vert$, which are dependent on the initialization $w^{0}$. By the triangle inequality and Lemma \ref{lemma:linearization_error}, we have
\begin{alignat}{1}
\mathbb{E}_{\text{init}} \left[ \bar{\varepsilon}_{w} \right] &= \mathbb{E}_{\text{init}} \left[ \max_{t} \left\Vert V^{t} - \bar{V}^{t} \right\Vert \right] \nonumber \\
&\le \mathbb{E}_{\text{init}} \left[ \left\vert V^{t^{\prime}}(s^{\prime}) - V^{t^{\prime},0}(s^{\prime}) \right\vert + \left\vert V^{t^{\prime},0}(s^{\prime}) - \bar{V}^{t^{\prime}}(s^{\prime}) \right\vert \right] \nonumber \\
&= \mathbb{E}_{\text{init}} \left[ \left\vert V^{t^{\prime}}(s^{\prime}) - V^{t^{\prime},0}(s^{\prime}) \right\vert + \left\vert \sum_{n=1}^{N} q_{n} V_{n}^{t^{\prime},0}(s^{\prime}) - \sum_{n=1}^{N} q_{n} V_{n}^{t^{\prime}}(s^{\prime}) \right\vert \right] \nonumber \\
&= \mathcal{O} \left( R_{w}^{6/5} m^{-1/10} \hat{R}_{w}^{2/5} \right), \nonumber
\end{alignat}
which completes the proof of (\ref{eq:lemma:aggregation_error_first}).

Since the order of terms in the square $\left( \pi^{t}(a \vert s) - \sum_{n=1}^{N} q_{n} \pi^{t}_{n}(a \vert s) \right)^{2}$ does not affect the value, we define two sets $C_{1} = \left\{t > 0, a \in \mathcal{A}, s \in \mathcal{S}: \pi^{t}(a \vert s) > \sum_{n=1}^{N} q_{n} \pi^{t}_{n}(a \vert s) \right\}$, and $C_{2} = C_{1}^{\mathsf{C}}$. Accordingly, for all $(t, a, s) \in C_{1}$, we have $\pi^{t}(a \vert s) - \sum_{n=1}^{N} q_{n}\pi^{t}_{n}(a \vert s) \le \pi^{t}(a \vert s) \log \frac{\pi^{t}(a \vert s)}{\sum_{n=1}^{N} q_{n}\pi^{t}_{n}(a \vert s)}$ by the fact that $1 - \frac{1}{x} \le \ln x, \forall x > 0$. By the Arithmetic Mean-Geometric Mean (AM-GM) inequality, we have
\begin{alignat}{1}
\left( \pi^{t}(a \vert s) - \sum_{n=1}^{N} q_{n} \pi^{t}_{n}(a \vert s) \right)^{2} &\le \left( \pi^{t}(a \vert s) - \sum_{n=1}^{N} q_{n} \pi^{t}_{n}(a \vert s) \right) \pi^{t}(a \vert s) \left( \log \frac{\pi^{t}(a \vert s)}{\sum_{n=1}^{N} q_{n} \pi^{t}_{n}(a \vert s)} \right) \nonumber \\
&\le \left( \pi^{t}(a \vert s) - \sum_{n=1}^{N} q_{n} \pi^{t}_{n}(a \vert s) \right) \pi^{t}(a \vert s) \left( \sum_{n=1}^{N} q_{n} \log \frac{\pi^{t}(a \vert s)}{\pi^{t}_{n}(a \vert s)} \right) \nonumber \\
&\le \pi^{t}(a \vert s) \sum_{n=1}^{N} q_{n} \left( f_{\theta^{t}}(s,a) - f_{\theta^{t}_{n}}(s,a) + \log \frac{\sum_{a^{\prime} \in \mathcal{A}} \exp \left( f_{\theta^{t}_{n}}(s,a^{\prime}) \right)}{\sum_{a^{\prime} \in \mathcal{A}} \exp \left( f_{\theta^{t}}(s,a^{\prime}) \right)} \right). \nonumber
\end{alignat}
For all $(t, a, s) \in C_{2}$, we have $\sum_{n=1}^{N} q_{n} \pi^{t}_{n}(a \vert s) - \pi^{t}(a \vert s) = \sum_{n=1}^{N} q_{n} \left( \pi^{t}_{n}(a \vert s) - \pi^{t}(a \vert s) \right) \le \sum_{n=1}^{N} q_{n} \pi_{n}^{t}(a \vert s) \log \frac{\pi^{t}_{n}(a \vert s)}{\pi^{t}(a \vert s)}$ by the fact that $1 - \frac{1}{x} \le \ln x, \forall x > 0$. Therefore, we have
\begin{alignat}{1}
\left( \pi^{t}(a \vert s) - \sum_{n=1}^{N} q_{n} \pi^{t}_{n}(a \vert s) \right)^{2} &\le \sum_{n=1}^{N} q_{n} \pi_{n}^{t}(a \vert s) \log \frac{\pi^{t}_{n}(a \vert s)}{\pi^{t}(a \vert s)}. \nonumber
\end{alignat}
Let $\left( t^{\prime}, s^{\prime} \right) = \arg\max_{t>0, s \in \mathcal{S}} \left\Vert \pi^{t}(\cdot \vert s) - \sum_{n=1}^{N} q_{n} \pi^{t}_{n}(\cdot \vert s) \right\Vert_{2}$, which is conditional on $\theta^{0}$, we have
\begin{alignat}{1}
\bar{\varepsilon}_{\theta} &= \left\Vert \pi^{t^{\prime}}(\cdot \vert s^{\prime}) - \sum_{n=1}^{N} q_{n} \pi^{t^{\prime}}_{n}(\cdot \vert s^{\prime}) \right\Vert_{2} = \sqrt{\sum_{a \in \mathcal{A}} \left( \pi^{t^{\prime}}(a \vert s^{\prime}) - \sum_{n=1}^{N} q_{n} \pi^{t^{\prime}}_{n}(a \vert s^{\prime}) \right)^{2}}. \nonumber
\end{alignat}
Let $\mathcal{A}_{1} = \left\{ a^{\prime} \in \mathcal{A}: t^{\prime}, a^{\prime}, s^{\prime} \in C_{1} \right\}$ and $\mathcal{A}_{2} = \left\{ a^{\prime} \in \mathcal{A}: t^{\prime}, a^{\prime}, s^{\prime} \in C_{2} \right\}$, we have
\begin{alignat}{1}
\bar{\varepsilon}_{\theta} &\le \left( \sum_{a \in \mathcal{A}_{1}} \sum_{n=1}^{N} q_{n} \pi^{t^{\prime}}(a \vert s^{\prime}) \left( f_{\theta^{t^{\prime}}}(s^{\prime},a) - f_{\theta^{t^{\prime}}_{n}}(s^{\prime},a) + \log \frac{\sum_{a^{\prime} \in \mathcal{A}} \exp \left( f_{\theta^{t}_{n}}(s^{\prime},a^{\prime}) \right)}{\sum_{a^{\prime} \in \mathcal{A}} \exp \left( f_{\theta^{t}}(s^{\prime},a^{\prime}) \right)} \right) \right. \nonumber \\
& \quad \left. + \sum_{a \in \mathcal{A}_{2}} \sum_{n=1}^{N} q_{n} \pi_{n}^{t^{\prime}}(a \vert s^{\prime}) \left( f_{\theta^{t^{\prime}}_{n}}(s^{\prime},a) - f_{\theta^{t^{\prime}}}(s^{\prime},a) + \log \frac{\sum_{a^{\prime} \in \mathcal{A}} \exp \left( f_{\theta^{t}}(s^{\prime},a^{\prime}) \right)}{\sum_{a^{\prime} \in \mathcal{A}} \exp \left( f_{\theta^{t}_{n}}(s^{\prime},a^{\prime}) \right)} \right) \right)^{1/2} \nonumber \\
&\le \left( \sum_{a \in \mathcal{A}} \sum_{n=1}^{N} q_{n} \max\left\{ \pi_{n}^{t^{\prime}}(a \vert s^{\prime}), \pi^{t^{\prime}}(a \vert s^{\prime}) \right\} \left( \left\vert f_{\theta^{t^{\prime}}}(s^{\prime},a) - f_{\theta^{t^{\prime}}_{n}}(s^{\prime},a) \right\vert \right. \right. \nonumber \\
& \quad \left. \left. + \left\vert \log \frac{\sum_{a^{\prime} \in \mathcal{A}} \exp \left( f_{\theta^{t}}(s^{\prime},a^{\prime}) \right)}{\sum_{a^{\prime} \in \mathcal{A}} \exp \left( f_{\theta^{t}_{n}}(s^{\prime},a^{\prime}) \right)} \right\vert \right) \right)^{1/2}. \nonumber
\end{alignat}
By taking the maximum over $a \in \mathcal{A}$, we have
\begin{alignat}{1}
\bar{\varepsilon}_{\theta} &\le \left( \max_{a \in \mathcal{A}, n \in [N]} \left( \left\vert f_{\theta^{t^{\prime}}}(s^{\prime},a) - f_{\theta^{t^{\prime}}_{n}}(s^{\prime},a) \right\vert + \left\vert \log \frac{\sum_{a^{\prime} \in \mathcal{A}} \exp \left( f_{\theta^{t}}(s^{\prime},a^{\prime}) \right)}{\sum_{a^{\prime} \in \mathcal{A}} \exp \left( f_{\theta^{t}_{n}}(s^{\prime},a^{\prime}) \right)} \right\vert \right) \cdot \right. \nonumber \\
& \quad \left. \sum_{a \in \mathcal{A}} \sum_{n=1}^{N} q_{n} \max\left\{ \pi_{n}^{t^{\prime}}(a \vert s^{\prime}), \pi^{t^{\prime}}(a \vert s^{\prime}) \right\} \right)^{1/2} \nonumber \\
&\le \left( 2 \max_{a \in \mathcal{A}, n \in [N]} \left( \left\vert f_{\theta^{t^{\prime}}}(s^{\prime},a) - f_{\theta^{t^{\prime}}_{n}}(s^{\prime},a) \right\vert + \left\vert \log \frac{\sum_{a^{\prime} \in \mathcal{A}} \exp \left( f_{\theta^{t}}(s^{\prime},a^{\prime}) \right)}{\sum_{a^{\prime} \in \mathcal{A}} \exp \left( f_{\theta^{t}_{n}}(s^{\prime},a^{\prime}) \right)} \right\vert \right) \right)^{1/2}. \nonumber
\end{alignat}
By Jensen's inequality, we have
\begin{alignat}{1}
\mathbb{E}_{\text{init}} \left[ \bar{\varepsilon}_{\theta} \right] &\le \mathbb{E}_{\text{init}} \left[ 2 \max_{a,n} \left\vert f_{\theta^{t^{\prime}}}(s^{\prime},a) - f_{\theta^{t^{\prime}}_{n}}(s^{\prime},a) \right\vert + 2 \left\vert \log \frac{\sum_{a^{\prime} \in \mathcal{A}} \exp \left( f_{\theta^{t}}(s^{\prime},a^{\prime}) \right)}{\sum_{a^{\prime} \in \mathcal{A}} \exp \left( f_{\theta^{t}_{n}}(s^{\prime},a^{\prime}) \right)} \right\vert \right]^{1/2}. \label{eq:lemma:aggregation_error1}
\end{alignat}
It remains to bound the two absolute terms on the RHS of (\ref{eq:lemma:aggregation_error1}). The first absolute term can be bounded by Lemma \ref{lemma:changing_action_error} as
\begin{alignat}{1}
\mathbb{E}_{\text{init}} \left[ \max_{a \in \mathcal{A}, n \in [N]} \left\vert f_{\theta^{t^{\prime}}}(s^{\prime},a) - f_{\theta^{t^{\prime}}_{n}}(s^{\prime},a) \right\vert \right] &= \mathcal{O} \left( R_{\theta} \right). \label{eq:lemma:aggregation_error3}
\end{alignat}
Next, we bound the second absolute term on the RHS of (\ref{eq:lemma:aggregation_error1}). By the log-sum inequality, the log-sum-exp trick, and Lemma \ref{lemma:changing_action_error}, we can obtain
\begin{alignat}{1}
\log \sum_{a^{\prime} \in \mathcal{A}} \exp \left( f_{\theta^{t}}(s,a^{\prime}) \right) - \log \sum_{a^{\prime} \in \mathcal{A}} \exp \left( f_{\theta^{t}_{n}}(s,a^{\prime}) \right) &\le \max_{a \in \mathcal{A}} f_{\theta^{t}}(s,a) - \frac{1}{\left\vert \mathcal{A} \right\vert} \sum_{a^{\prime}} \left( f_{\theta_{n}^{t}}(s,a^{\prime}) \right), \nonumber
\end{alignat}
and
\begin{alignat}{1}
\log \sum_{a^{\prime} \in \mathcal{A}} \exp \left( f_{\theta^{t}_{n}}(s,a^{\prime}) \right) - \log \sum_{a^{\prime} \in \mathcal{A}} \exp \left( f_{\theta^{t}}(s,a^{\prime}) \right) &\le \max_{a \in \mathcal{A}} f_{\theta_{n}^{t}}(s,a) - \frac{1}{\left\vert \mathcal{A} \right\vert} \sum_{a^{\prime}} \left( f_{\theta^{t}}(s,a^{\prime}) \right), \nonumber
\end{alignat}
which indicates that
\begin{alignat}{1}
\mathbb{E}_{\text{init}} \left[ \max_{a \in \mathcal{A}, n \in [N]} \left\vert \log \frac{\sum_{a^{\prime} \in \mathcal{A}} \exp \left( f_{\theta^{t}}(s,a^{\prime}) \right)}{\sum_{a^{\prime} \in \mathcal{A}} \exp \left( f_{\theta^{t}_{n}}(s,a^{\prime}) \right)} \right\vert \right] &= \mathcal{O} \left( R_{\theta} \right). \label{eq:lemma:aggregation_error4}
\end{alignat}
By substituting (\ref{eq:lemma:aggregation_error3}) and (\ref{eq:lemma:aggregation_error4}) into (\ref{eq:lemma:aggregation_error1}), we can obtain
\begin{alignat}{1}
\mathbb{E}_{\text{init}} \left[ \bar{\varepsilon}_{\theta} \right] &= \mathcal{O}\left( R_{\theta}^{1/2} \right), \nonumber
\end{alignat}
which completes the proof of (\ref{eq:lemma:aggregation_error_second}). The proof of (\ref{eq:lemma:aggregation_error_third}) is similar.
\end{proof}

\section{Additional Experiment Setting \label{sec:additionexperimentdetails}}

\textbf{Machines:} We simulate the federated learning experiments (1 server and N devices) on a commodity machine with 16 Intel(R) Xeon(R) Gold 6348 CPU @ 2.60GHZz. It took about 6 mins to finish one round of training, i.e., 50 hours to obtain 500 data points for Figure \ref{fig:mcc}.

The hyperparameters for the algorithms on MountainCars, Hoppers, HongKongOSMs, and the general FRL setting are given in Table \ref{table:1}, Table \ref{table:2}, Table \ref{table:3}, and Table \ref{table:4}, respectively.  

\begin{table}[ht!]
\centering
\begin{tabular}{c | c c c c} 
 \hline
 Hyperparameter & FedPOHCS & FedAvg & Power-of-Choice & GradientNorm \\ [0.5ex] 
 \hline
 Learning Rate & 0.001 & 0.005 & 0.001 & 0.001 \\ 
 Learning Rate Decay & 0.98 & 0.98 & 0.98 & 0.98 \\
 Batch Size & 128 & 128 & 128 & 128 \\
 Timestep per Iteration & 2048 & 2048 & 2048 & 2048 \\
 Number of Epochs (E) & 1 & 1 & 1 & 1 \\
 Discount Factor ($\gamma$) & 0.99 & 0.99 & 0.99 & 0.99 \\
 Discount Factor for GAE & 0.95 & 0.95 & 0.95 & 0.95 \\
 KL Target & 0.003 & 0.003 & 0.003 & 0.003 \\ [1ex]
 \hline
\end{tabular}
\caption{Hyperparameters for each algorithm on MountainCars.}
\label{table:1}
\end{table}

\begin{table}[ht!]
\centering
\begin{tabular}{c | c c c c} 
 \hline
 Hyperparameter & FedPOHCS & FedAvg & Power-of-Choice & GradientNorm \\ [0.5ex] 
 \hline
 Learning Rate & 0.03 & 0.03 & 0.03 & 0.03 \\ 
 Learning Rate Decay & 0.9 & 0.9 & 0.9 & 0.9 \\
 Batch Size & 128 & 128 & 128 & 128 \\
 Timestep per Iteration & 2048 & 2048 & 2048 & 2048 \\
 Number of Epochs (E) & 1 & 1 & 1 & 1 \\
 Discount Factor ($\gamma$) & 0.99 & 0.99 & 0.99 & 0.99 \\
 Discount Factor for GAE & 0.95 & 0.95 & 0.95 & 0.95 \\
 KL Target & 0.003 & 0.003 & 0.003 & 0.003 \\ [1ex]
 \hline
\end{tabular}
\caption{Hyperparameters for each algorithm on Hoppers.}
\label{table:2}
\end{table}

\begin{table}[ht!]
\centering
\begin{tabular}{c | c c c c} 
 \hline
 Hyperparameter & FedPOHCS & FedAvg & Power-of-Choice & GradientNorm \\ [0.5ex] 
 \hline
 Learning Rate & 0.0001 & 0.0001 & 0.0001 & 0.0001 \\ 
 Learning Rate Decay & 0.98 & 0.98 & 0.98 & 0.98 \\
 Batch Size & 128 & 128 & 128 & 128 \\
 Timestep per Iteration & 2048 & 2048 & 2048 & 2048 \\
 Number of Epochs (E) & 10 & 10 & 10 & 10 \\
 Discount Factor ($\gamma$) & 0.99 & 0.99 & 0.99 & 0.99 \\
 Discount Factor for GAE & 0.95 & 0.95 & 0.95 & 0.95 \\
 KL Target & 0.0001 & 0.0001 & 0.0001 & 0.0001 \\ [1ex]
 \hline
\end{tabular}
\caption{Hyperparameters for each algorithm on HongKongOSMs.}
\label{table:3}
\end{table}

\begin{table}[ht!]
\centering
\begin{tabular}{c | c c c c} 
 \hline
 Environment & \#Client (N) & \#Candidate (d) & \#Participant (K) & \#Local Iteration (I) \\ [0.5ex] 
 \hline
 MountainCars & 60 & 18 & 6 & 5 \\
 Hoppers & 60 & 18 & 6 & 20 \\
 HongKongOSMs & 10 & 9 & 2 & 10 \\ [1ex]
 \hline
\end{tabular}
\caption{General FRL Setting. Refer to Section \ref{subsec:FRL} and Algorithm \ref{alg:FedPOHCS} for their definitions.}
\label{table:4}
\end{table}

\newpage

\vskip 0.2in
\bibliography{references}

\end{document}